\icmltitlerunning{Order Constraints in Optimal Transport }
\begin{document}

\twocolumn[
\icmltitle{Order Constraints in Optimal Transport }

% It is OKAY to include author information, even for blind
% submissions: the style file will automatically remove it for you
% unless you've provided the [accepted] option to the icml2022
% package.

% List of affiliations: The first argument should be a (short)
% identifier you will use later to specify author affiliations
% Academic affiliations should list Department, University, City, Region, Country
% Industry affiliations should list Company, City, Region, Country

% You can specify symbols, otherwise they are numbered in order.
% Ideally, you should not use this facility. Affiliations will be numbered
% in order of appearance and this is the preferred way.
\icmlsetsymbol{equal}{*}

\begin{icmlauthorlist}
\icmlauthor{Fabian Lim}{ibm}
\icmlauthor{Laura Wynter}{ibm}
\icmlauthor{Shiau Hong Lim}{ibm}
\end{icmlauthorlist}

\icmlaffiliation{ibm}{IBM Research, Singapore}
% \icmlaffiliation{comp}{Company Name, Location, Country}
% \icmlaffiliation{sch}{School of ZZZ, Institute of WWW, Location, Country}

\icmlcorrespondingauthor{Fabian Lim}{flim@sg.ibm.com}

% You may provide any keywords that you
% find helpful for describing your paper; these are used to populate
% the "keywords" metadata in the PDF but will not be shown in the document
\icmlkeywords{Machine Learning, ICML}

\vskip 0.3in
]

% this must go after the closing bracket ] following \twocolumn[ ...

% This command actually creates the footnote in the first column
% listing the affiliations and the copyright notice.
% The command takes one argument, which is text to display at the start of the footnote.
% The \icmlEqualContribution command is standard text for equal contribution.
% Remove it (just {}) if you do not need this facility.

\printAffiliationsAndNotice{}  % leave blank if no need to mention equal contribution
%\printAffiliationsAndNotice{\icmlEqualContribution} % otherwise use the standard text.

% ------------- START - COMMANDS --------------------

% All commands can be found in this file

% Use these for theorems, lemmas, proofs, etc.
% environments
% \newtheorem{theorem}{Theorem}
% \newtheorem{lemma}[theorem]{Lemma}
% \newtheorem{proposition}[theorem]{Proposition}
% \newtheorem{claim}[theorem]{Claim}
% % \newtheorem{comment}[theorem]{Comment}
% \newtheorem{corollary}[theorem]{Corollary}
% \newtheorem{definition}[theorem]{Definition}
% \newtheorem{example}[theorem]{Example}
% \newtheorem{assumption}[theorem]{Assumption}

%%%%%%%%%%%%%%%%%%%%%%%%%%%%%%%%
% THEOREMS
%%%%%%%%%%%%%%%%%%%%%%%%%%%%%%%%
\theoremstyle{plain}
\newtheorem{theorem}{Theorem}[section]
\newtheorem{proposition}[theorem]{Proposition}
\newtheorem{lemma}[theorem]{Lemma}
\newtheorem{corollary}[theorem]{Corollary}
\theoremstyle{definition}
\newtheorem{definition}[theorem]{Definition}
\newtheorem{assumption}[theorem]{Assumption}
\theoremstyle{remark}
\newtheorem{remark}[theorem]{Remark}

% equations
% \newcommand{\eqref}[1]{(\ref{#1})}

% comment
\DeclareRobustCommand{\comment}[1]{\textcolor{red}{#1}}

% generic text commands
\newcommand{\etal}{\textit{et.~al.}}
\newcommand{\etc}{\textit{etc.}}
\newcommand{\ie}{\textit{i.e.}}
\newcommand{\eg}{\textit{e.g.}}

% ot terms
\newcommand{\tplan}{\Pi}
\newcommand{\tpoly}{U(a,b)}
\newcommand{\cost}{\mathcal{C}}
% \newcommand{\ones}{\mathbb{1}}

% matrices
\newcommand{\ones}{\mathbf{1}}
\newcommand{\mOnes}[1]{\mathbf{1}_{#1}}
\newcommand{\mIdentity}[1]{\mathbf{I}_{#1}}
\newcommand{\mZeros}{\mathbf{0}}

% generic math operations and constructs
\newcommand{\trace}[1]{\mbox{\textup{tr}} \left(#1\right)}
\newcommand{\rank}[1]{\mbox{\textup{rank}} \left(#1\right)}
\newcommand{\bigO}[1]{\mathcal{O} \left(#1\right)}
\newcommand{\Real}{\mathbb{R}}
\newcommand{\mnReal}{\Real^{m \times n}}
\newcommand{\mnRealp}{\Real^{m \times n}_+}

% index constructs
\newcommand{\iset}[1]{\left[#1\right]} % vector
\newcommand{\pset}[1]{2^{\iset{#1}}}
\newcommand{\idx}[2]{{#1}_{#2}} % vector indexing
\renewcommand{\ij}[2]{#1#2} % tuple index
\newcommand{\ijset}[2]{\iset{\ij{#1}{#2}}} % matrix
\newcommand{\ijdx}[3]{\idx{#1}{\ij{#2}{#3}}} % matrix indexing
\newcommand{\mij}[3]{
    \ij{#1_1}{#2_1},\ij{#1_2}{#2_2},\cdots,\ij{#1_{#3}}{#2_{#3}}
} 
\newcommand{\mijdx}[4]{{#1}_{\mij{#2}{#3}{#4}}}
\newcommand{\nij}[3]{
    \ij{#1}{{#2}_{\iset{#3}}}
}

\renewcommand{\ni}[2]{
    {#1}_{\iset{#2}}
}
\newcommand{\nijdx}[4]{{#1}_{\nij{#2}{#3}{#4}}}

\newcommand{\odidx}[2]{{#1}_{(#2)}} % vector indexing with descreasing order
\newcommand{\tidx}[2]{{#1}_{#2}} % tensor first dimension

% Lovasz set extension (submodular)
\newcommand{\basepoly}{\mathcal{B}_F}  % base polytope

% used in basepoly
\newcommand{\restrict}[2]{\sum_{i \in #2} {#1}_i}

% for heuristics
\newcommand{\saturation}{\phi}
\newcommand{\heuristic}{\Phi}
\newcommand{\selfheu}{\saturation^s}
\newcommand{\rowheu}{\saturation^r}
\newcommand{\colheu}{\saturation^c}
\newcommand{\variates}{\mathcal{I}}
\newcommand{\stack}{\mathcal{S}}

% for order constraints
\newcommand{\order}{O}

% for projection
\newcommand{\proj}[2]{\mbox{Proj}_{#1} \left( #2 \right)}
\newcommand{\modulo}{\mbox{mod}}

% for norm
\newcommand{\norm}[2]{\left\Vert #1 \right\Vert_{#2}}
\newcommand{\frobenius}[1]{\norm{#1}{F}}

% for sets
\newcommand{\intersect}{\cap}
\newcommand{\union}{\cup}
\newcommand{\setsym}{\mathcal{C}} % symbol for set

% additional 
\newcommand{\constraint}[1]{\setsym_{#1}}
\newcommand{\constraintOne}{\constraint{2}}
\newcommand{\constraintTwo}{\constraint{2}}
\newcommand{\constraintThree}{\constraint{1}(a,b)}
\newcommand{\constraintAll}{
   % \constraintOne \cap \constraintTwo \cap \constraintThree
     \constraintThree \cap \constraintOne 
}

% for constraintThree
\newcommand{\meanproj}[1]{\tidx{P}{#1}}

% for thresholding at zero
\newcommand{\zeroprog}[1]{\left( #1 \right)_+}

% for proposition 1
\newcommand{\langidx}{V}
\newcommand{\psum}{\psi}
\newcommand{\pmean}{\Delta}

% for search
\newcommand{\searchtree}{\mathcal{T}}
\newcommand{\esttree}{\widehat{\mathcal{T}}}

% indicator
\newcommand{\indicate}[2]{\mathbb{I}_{#1} ( #2 )}

\newcommand{\bl}{B}

% for bounds
\newcommand{\submodularub}{G}
\newcommand{\submodularlb}{\varphi}
\newcommand{\boundcoef}{\submodularlb}
\newcommand{\tailfun}{g}
\newcommand{\knapsack}[1]{\mbox{\texttt{PACKING}} \left( #1 \right)}
\newcommand{\knapmu}{\mu}
\newcommand{\knapnu}{\nu}

% this is to lowerbound the tailfun
%\newcommand{\tailfunlbRow}[2]{L_1^{#1}\left(#2\right)}  % opt over row
%\newcommand{\tailfunlbCol}[2]{L_2^{#1}\left(#2\right)}  % opt over col
\newcommand{\tailfunlbRow}[2]{{\tidx{L}{1}}_{#1}\left(#2\right)}  % opt over row
\newcommand{\tailfunlbCol}[2]{{\tidx{L}{2}}_{#1}\left(#2\right)}  % opt over col

% for experiments
\newcommand{\diminish}[2]{
    \mbox{\texttt{diminish}} \left(#1, #2 \right)
}

% floors
\newcommand{\floor}[1]{\left\lfloor #1 \right\rfloor}

% dec ordering
\newcommand{\oiidx}[2]{{#1}_{[#2]}} % vector indexing with descreasing order

% for convergence
\newcommand{\valuebound}{L_{\infty}}
\newcommand{\roundAll}[1]{\widehat{#1}}
%\newcommand{\round}[2]{\mbox{\textup{round}}{}_{#1} \left(#2\right)}
%\newcommand{\roundOne}[1]{\round{1}{#1}}
%\newcommand{\roundThree}[1]{\round{3}{#1}}

% special set for proving Euclidean projection of set 3
\newcommand{\Aeq}{A}

% trace
% \newcommand{\trace}[1]{\mbox{tr} \left(#1\right)}
\newcommand{\pinv}[1]{{#1}^\dagger}
\newcommand{\inv}[1]{{#1}^{-1}}

% rank
%\newcommand{\rank}[1]{\mbox{rank} \left(#1\right)}
\newcommand{\mspan}[1]{
    \mbox{\textup{span}} \left(#1\right)}
\newcommand{\nullspace}[1]{
    \mbox{\textup{null}} \left(#1\right)}

% Kronecker
\newcommand{\kron}{\otimes}

% matrix, vector 
\renewcommand{\matrix}[2]{
    \left[
        \begin{array}{#1}
            #2
        \end{array}
    \right]
}
\renewcommand{\vector}[1]{
    \matrix{c}{#1}
}

% python
\newcommand{\ravel}[1]{
    \mbox{
        \texttt{ravel}
    } \left(#1\right)}
\newcommand{\unravel}[1]{
    \mbox{
        \texttt{unravel}
    } \left(#1\right)}

% -------------  END - COMMANDS --------------------

% ------------- START - ABSTRACT --------------------

\begin{abstract}

Optimal transport is a framework for comparing measures whereby a cost is incurred for transporting one measure to another.
Recent works have aimed to improve  optimal transport plans through the introduction of various forms of structure. 
We introduce novel order constraints into the  optimal transport formulation to allow for the incorporation of  structure. 
We define an efficient method for  obtaining explainable solutions to the new formulation that scales far better than standard approaches.
The theoretical properties of the method  are provided.
We demonstrate experimentally that order constraints improve explainability  using the e-SNLI (Stanford Natural Language Inference) dataset that includes human-annotated rationales as well as on several image color transfer examples.

\end{abstract}

% -------------  END - ABSTRACT --------------------

% -------------  START - INTRODUCTION --------------------
\section{Introduction}
\label{sec:introduction}

Optimal transport (OT) is a framework for comparing measures whereby a cost is incurred for transporting one measure to another. Optimal transport  enjoys both significant theoretical interest and  widespread applicability \citep{villani, peyre_computational_2020}.
Recent work \citep{swanson_rationalizing_2020,alvarez-melis_structured_2017}  aims to improve the interpretability of optimal transport plans through the introduction of \textit{structure}. Structure can take many forms: text documents where   context is given  \cite{hierarchy}, images with  certain desired features   \cite{imageAAAI2020, imageAAAI2021},   fairness properties \cite{laclau}, or even patterns in RNA \cite{forrow}. \citet{swanson_rationalizing_2020}   postulate that structure  can be discovered by sparsifying optimal transport plans. %While a sparse solution has  benefit, it need not correspond to an interpretable solution.

Motivated by advances in sparse model learning, order statistics and isotonic regression \cite{regularity}, we introduce novel order constraints (OC) into the optimal transport formulation so as to allow for more complex structure to be readily added to optimal transport plans.
We show theoretically that, for convex cost functions with efficiently computable gradients,  the resulting order-constrained  optimal transport problem can be solved using a form of ADMM \cite{admm} and can be $\delta$-approximated efficiently.
We further derive computationally efficient lower bounds for our formulation. 
Specifically, when the structure is not provided in advance, we show how it can be estimated. The bounds allow the use of an  explainable   method for identifying the most important constraints, rendered tractable through branch-and-bound.  Each of these  order constraints  leads to an optimal transport plan  computed sequentially using the algorithm we introduce. 
The end result is a diverse set of the most important optimal transport plans from which a user can select the plan that is preferred.

\begin{figure*}
    \centering
    \includegraphics[width=0.75\textwidth]{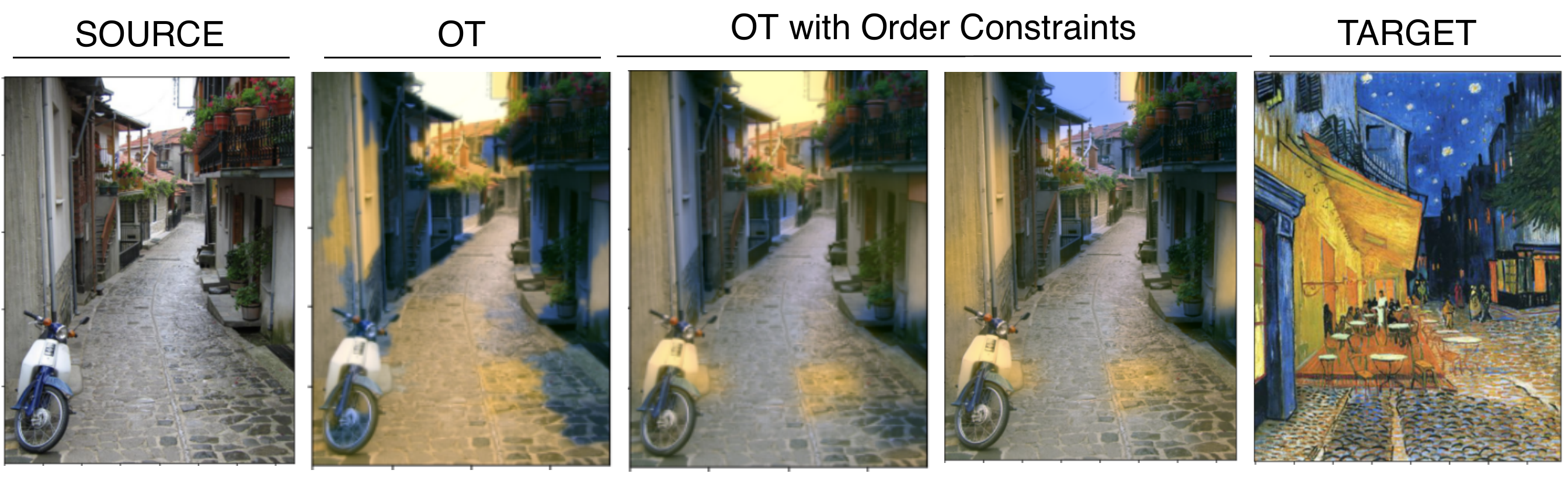}
    \caption{Color transfer using  OT with Order Constraints }
    \label{fig:intro-example-ct}
\end{figure*}

 The order constraints   allow context to be taken into account, explicitly, when known in advance, or implicitly, when estimated through the proposed procedure. 
Consider OT  for document retrieval (\eg,~see~\citet{kusner_word_2015,swanson_rationalizing_2020}) where a user enters a text query.  The multiple transport plans provided by   OT with order constraints offer a diverse set of the best responses, each accounting for different possible contexts.
The resulting method is weakly-supervised with only two tunable thresholds (or unsupervised if the parameters are set to default values), and is  amenable to  applications of OT where labels are not available  to train a supervised model.

\begin{figure}
    \centering
    \includegraphics[width=0.95\columnwidth]{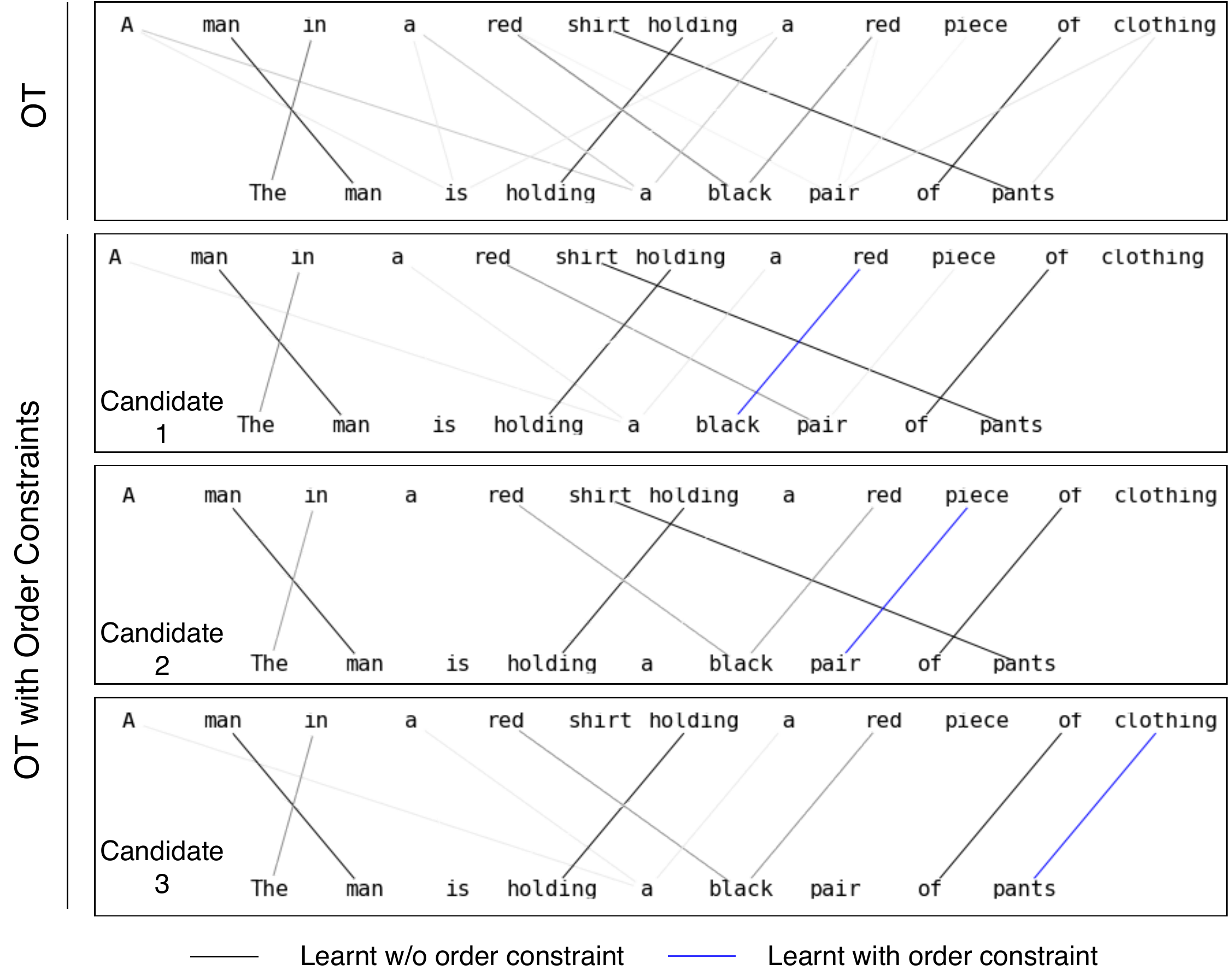}
    \caption{ Text matching using  OT with Order Constraints}
    \label{fig:intro-example}
\end{figure}

An example is provided in Fig. \ref{fig:intro-example} which aims to learn the  similarity between two phrases. The top box depicts the transport plan learnt by  OT; the darker the line, the stronger the transport measure. The match provided by OT does not  reveal the contradiction in the two phrases.
The second to fourth transport plans are more human-interpretable. 
The second plan  matches  ``red'' clothing to the  ``black'' pants. 
The third  matches  ``piece'' to ``pair'', a relation not included in the standard OT plan.
The last plan  matches ``clothing'' to ``pants''.\footnote{For readability of Fig. \ref{fig:intro-example}, very low   scores were suppressed, making it appear that  they do not  sum to one though they do.}
The  example makes use of   OT with a single order constraint to better illustrate the idea of order constraints. OT with multiple order constraints can  incorporate all of the pairs in a single plan.

Fig.~\ref{fig:intro-example-ct} illustrates using OT to transfer the color palette of a target image (Van Gogh's "Café Terrace at Night") to the source image of an alley. 
The second image from the left, obtained by standard OT,  transfers yellow to bright areas and blue to dark. Our proposed order constraints allow  adding context, which in this case  gives
 desired effects such as transferring the Van Gogh night sky (second image from right) or the awning brilliant yellow (middle image) to the sky in the source image.

The contributions of this work are as follows: (1) we provide a novel formulation, Optimal Transport with Order Constraints, that allows  explicitly incorporating structure to OT plans; (2) we provide an efficient method for solving the proposed OT with OC formulation related to an ADMM along with the convergence theory; (3) for cases where the order constraints are not known in advance, we propose an explainable method to estimate them using branch \& bound and define bounds for use in the method, and (4) we demonstrate  the benefits of using the proposed method on NLP and image color transfer tasks.

Next, we review  related work. In the following section, we present the standard  optimal transport formulation followed by  the proposed OT with order constraints (OT with OC) formulation and a demonstration of the existence of a solution, which corresponds to contribution (1)  above. Then, an algorithm leveraging the form of the polytope is  defined to solve the OT with OC  problem in a manner  far more efficient than a standard convex optimization solver, as noted in contribution (2) above. In order to further improve explainability, when the order constraints are not known a priori, we present a branch \& bound procedure to search the space for a set of diverse OT with OC plans, along with  the bounds that reduce the polynomial search space, as noted in contribution (3) above.
Finally,  experiments are provided on the e-SNLI  dataset and several image color transfer examples to demonstrate how order constraints allow for incorporating structure and  improving explainability, as noted in contribution (4) above.

\paragraph{Related Work}

We consider  point-point linear costs as  in~\eg,~\citet{cuturi_sinkhorn_2013,courty_optimal_2016,kusner_word_2015,wu_word_2018, yurochkin_hierarchical_2019, solomon_optimal_2018,altschuler_massively_2019,altschuler_near-linear_2017,schmitzer_stabilized_2019}.
%In \citep{kusner_word_2015} computationally efficient lower bounds to speed up the search process in solving the OT problem on text documents. 
%See also ~\citep{cuturi_sinkhorn_2013,courty_optimal_2016,wu_word_2018, yurochkin_hierarchical_2019, solomon_optimal_2018,altschuler_massively_2019,altschuler_near-linear_2017,schmitzer_stabilized_2019}.
Recently, a number of works have sought to  add structure to  optimal transport  so as to obtain more intuitive and explainable transport plans. 
\citet{alvarez-melis_towards_2019}  modeled invariances in OT,  a form of regularization, to learn the most meaningful transport plan.
Sparsity has been explored in OT to implicitly improve explainablity. 
\citet{swanson_rationalizing_2020} used ``dummy'' variates to    include fewer transport coefficients. \citet{forrow} regularize using the OT rank to better deal with high dimensional data.
\citet{blondel_smooth_2017} had a similar goal of learning sparse and potentially more intepretable transport plans through regularization and a dual formulation.  We, on the other hand, propose an explicit approach to interpretability, rather than the implicit method of sparsifying solutions through regularization.  \cite{DTW} add constraints to require two sequences to lie close to each other, in terms of  indices; to respect sequence order, they penalize the   constraints and then use the classic Sinkhorn algorithm to obtain an approximate solution.
Another way to add structure into OT is through dependency modeling;  multi-marginal optimal transport takes this approach, wherein multiple measures are transported to each other. The formulation was  shown to be generally intractable~\citep{altschuler_hardness_2020}, however, and strong assumptions have to be made to solve it in polynomial time~\citep{altschuler_polynomial-time_2020,pass_multi-marginal_2014,di_marino_optimal_2015}.

Standard OT can be solved in cubic time by primal-dual methods \citep{peyre_computational_2020} and the proposed OT with order constraints can  be solved using generalized solvers with similar time complexity. The difficulty comes from the fact  that there are   quadratically  many order constraints.
Recent work~\citep{cuturi_sinkhorn_2013, scetbon_low-rank_2021, altschuler_near-linear_2017,goldfeld, zhang2021convergence,lin2019a,guo2020,jambulapati_direct_2019,guminov_combination_2021}  has favored  solving  OT via iterative methods which offer lower complexity at the expense of obtaining an approximate solution.
In particular,~\citep{scetbon_low-rank_2021} considered low-rank approximations of OT plans, and proposed a mirror descent with inner Dykstra iterations.  Recent iterative methods for standard OT \cite{guminov_combination_2021, jambulapati_direct_2019} ensure feasibility of the solution at each iteration through the use of a rounding algorithm proposed by \cite{altschuler_near-linear_2017}.
Iterative methods are  essential for solving  submodular OT \citep{alvarez-melis_structured_2017} that would otherwise incur an exponential number of explicit constraints.

% -------------  END - INTRODUCTION --------------------

% -------------  START -  OT --------------------

\section{ Order Constraints for Optimal Transport}

%% -------------------- 
%% NOTATION 
%% -------------------- 

\paragraph{Notation} Let $\Real$ (resp. $\Real_+$) denote the set of real (resp. non-negative) numbers. 
 Let $m, n$  denote the number of rows and columns in the optimal transport problem.
Vectors and matrices are in $\Real^n$ and $\mnReal$.
Let $\iset{n}:=\{1,2\ldots n\}$ and $\ijset{m}{n}:=\iset{m}\times\iset{n}$.
%We denote  sets over integers as $\iset{n}$ and  write the set of $mn$ tuples $i,j$ as $\ijset{m}{n}$.
Let $i,j,k,\ell,p,q$  denote vector/matrix indices,
%For a sequence of matrices $X$, we write $\tidx{X}{i}$ to differentiate from $X_i$.
 $\pset{n}$ denote the power set of $\iset{n}$,
and  $\mOnes{n}$ and $\mIdentity{n}$ denote the all-ones vector and identity matrix, resp., of size $n$.
We denote the index sequence $i_1, i_2, \cdots, i_k$ as $i_{[k]}$. % and use curly braces $\{\cdot\}$ to denote sets.  
$\indicate{S}$ is the indicator function over set $S$.
The \emph{trace} operator for square matrices is denoted by $\trace{\cdot}$ and
$\rank{\ijdx{X}{i}{j}}$ gives the descending positions over the $mn$ matrix coefficients $\ijdx{X}{i}{j}$, or a subset thereof.\footnote{This is not to be confused with the standard definition of a matrix's rank.}
For a \emph{closed set} $\setsym$, the Euclidean projector for a matrix $X \in \mnReal$ is
$
    \proj{\setsym}{X} = \arg \min_{Y \in \setsym} \frobenius{Y-X}^2
    %\label{eqn:euclidean-projector}
$
where $\frobenius{\cdot}$ is the matrix \emph{Frobenius} norm.
For $c \in \Real$ let $\zeroprog{c} = c$ if $c\geq 0$ and $\zeroprog{c} = 0$ otherwise, and 
for $X \in \mnReal$ let $\zeroprog{X}$ equal $X$
after setting all negative coefficients to zero.

%% -------------------- 
%% NOTATION (END)
%% -------------------- 
\paragraph{Standard OT Formulation}

In optimal transport, one  optimizes a linear transport cost over a simplex-like polytope, $\tpoly$. 
A (balanced) optimal transport problem is given by two  probability vectors $a$ and $b$ that each sum to 1.
Each being of length $m, n$, resp., they define a polytope:
\[
    \tpoly = \{\tplan \in \mnReal_+ : \tplan \mOnes{n} = a, \tplan^T \mOnes{m} = b\},
\]
and a linear cost $D \in \mnReal$, where  $D \geq 0$ and bounded.
Then the optimal transport cost $f:\mnReal\to\Real_+$ is the optimal cost of the following optimization:
\begin{equation}
    \min_{\tplan \in \tpoly} f(\tplan) := 
    \trace{D^T \tplan},
    \label{eqn:optimal-transport}
\end{equation}
and the optimal value of $\tplan$ is the \textit{transport plan}.
We propose to extend \eqref{eqn:optimal-transport} by including   order constraints to represent structure.

\paragraph{ Proposed Formulation: OT with Order Constraints}
The use of order constraints is  analogous to the constraints used in  isotonic, or monotonic, regression,~\citep[see][]{de_leeuw_isotone_2009}.
Specifically, if a particular variate $\ijdx{\tplan}{i}{j}$ is  semantically or symbolically important, then its $\rank{\ijdx{\tplan}{i}{j}}$   should reflect  that dominance.
We thus enforce $k$ order constraints by specifying a sequence of  $k$ variates in $\iset{mn}$, $\nij{i}{j}{k} := \mij{i}{j}{k} $ of importance, where the set of variates $\langidx := \ijset{m}{n} \setminus \{\ij{i_\ell}{j_\ell}: \ell \in \iset{k}\}$:
\begin{equation}
    \ijdx{\tplan}{i_k}{j_k}
    \geq 
    \cdots
    \geq
    \ijdx{\tplan}{i_1}{j_1}
    \geq 
    %\odidx{\tplan}{1} \geq \odidx{\tplan}{2} \geq \cdots \geq \odidx{\tplan}{mn-k}.
    \ijdx{\tplan}{p}{q}~~\mbox{for}~\ij{p}{q} \in \langidx.
    \label{eqn:transport-plan-ordering-constraints}
\end{equation}

Enforcing order constraints  \eqref{eqn:transport-plan-ordering-constraints} means each variate $\ij{i_\ell}{j_\ell}$ is fixed to maintain the $(k-\ell+1)$-th top position $\odidx{\tplan}{k-\ell+1}$ in the ordering
$\odidx{\tplan}{1} \geq \odidx{\tplan}{2} \geq \cdots \geq \odidx{\tplan}{mn}$
whilst learning the transport plan $\tplan$; in particular $\ij{i_k}{j_k}$ is fixed at the topmost position.
Therefore, for order constraints on $k$ variates of importance and point-to-point costs $D \in \mnReal$,  we extend the OT problem    as follows:
\begin{eqnarray}
&\inf_{\tplan \in \tpoly}   f(\tplan) :=  \trace{D^T \tplan}, \label{eqn:obj2} \\
  %\mbox{s.t}\;  &\ijdx{\tplan}{i}{j} \geq \odidx{\tplan}{1} \geq \odidx{\tplan}{2} \geq \cdots \geq \odidx{\tplan}{mn-1}.
  \mbox{s.t}\; &\ijdx{\tplan}{i_k}{j_k} \geq \cdots \geq  \ijdx{\tplan}{i_1}{j_1},~~\ijdx{\tplan}{i_1}{j_1}  
  \geq \ijdx{\tplan}{p}{q}~~\mbox{for}~\ij{p}{q} \in \langidx,
    \label{eqn:transport-plan-ordering-constraints-3}
\end{eqnarray}
for $\langidx$ in~\eqref{eqn:transport-plan-ordering-constraints}, and we assume non-negative costs $D \geq 0 $ throughout. 
Let $\nijdx{\order}{i}{j}{k} \subseteq \mnRealp=$ 
\begin{eqnarray}
        \{
            X \in \mnRealp:~\rank{\ijdx{X}{i_\ell}{j_\ell}} = k-\ell+1,~\forall \ell \in \iset{k}
        \}.  \label{eqn:multi-order} 
\end{eqnarray} 
Thus  \eqref{eqn:obj2}--\eqref{eqn:transport-plan-ordering-constraints-3} can be compactly expressed as:
\begin{equation}
    \inf_{
        \tplan \in \tpoly \intersect \nijdx{\order}{i}{j}{k}
    } f(\tplan). 
    \label{eqn:optimal-transport-order-constraint2}
\end{equation}

Problem \eqref{eqn:optimal-transport-order-constraint2} is  convex   with linear costs and constraints. However,  $\nijdx{\order}{i}{j}{k}$ adds a significant number %, $\bigO{mn}$, 
of constraints  to the $m+n$ in the standard OT formulation  \eqref{eqn:optimal-transport}.
The $\inf$ in~\eqref{eqn:transport-plan-ordering-constraints-3} and~\eqref{eqn:optimal-transport-order-constraint2} are replaced by $\min$ when feasible; we address such sufficient conditions in the following.

\paragraph{ Feasibility of the  Proposed OT with OC Problem}
 $\tpoly$  is  feasible since $ab^T \in \tpoly$,~\citep[see][]{cuturi_sinkhorn_2013}.
In certain instances it may be considered ambiguous to match a single position to multiple  positions, consider variates $\nij{i}{j}{k}$ where $\ni{i}{k}=i_1,i_2,\cdots, i_k$ (and $\ni{j}{k}$) do not repeat row (or column) indices; let $\indicate{\ni{i}{k}}{p}=1$  if $p=i_\ell$ for at most one choice of $\ell$, or $0$ otherwise, and similarly 
$\indicate{\ni{j}{k}}{q}$.
We next derive conditions for the feasibility of~\eqref{eqn:optimal-transport-order-constraint2} under mild assumptions.

%Suppose $\nij{i}{j}{k}$ enforces~\eqref{eqn:optimal-transport-order-constraint2} where $\ni{i}{k}=i_1,i_2,\cdots, i_k$ (and $\ni{j}{k}$) do not repeat row (or column) indices.
\begin{proposition}
    \label{prop:optimal-transport-feasibility}
    
    Suppose $\ni{i}{k}=i_1,i_2,\cdots, i_k$ (and $\ni{j}{k}$) do not repeat row (or column) indices. Further suppose $\tplan \in \mnReal$, 
    where $\ijdx{\tplan}{i_\ell}{j_\ell}= c_\ell$
    for $\ell \in \iset{k}$, and $\ijdx{\tplan}{p}{q}$ for $\ij{p}{q} \in \langidx$, resp. satisfy:
    \begin{eqnarray}
        0~\leq~\ijdx{\tplan}{i_\ell}{j_\ell} &=& c_\ell ~~\leq ~~\min(a_{i_\ell}, b_{j_\ell}), \label{eqn:feas1a}
      \\
        \!\!\!\!\!\! 0~\leq~\ijdx{\tplan}{p}{q} \!\!\! &=&  \!\!\!
        \frac{
            (a_p -\indicate{\ni{i}{k}}{p} c_p ) \cdot (b_q -\indicate{\ni{j}{k}}{q} c_q)
        }{\alpha(c_1,\cdots,c_\ell)}
        %~~\mbox{for}~~\ij{p}{q} \in \langidx
          \label{eqn:feas1b}
    \end{eqnarray}
    where
    $\alpha = \alpha(c_1,\cdots,c_\ell) = 1 - \sum_{\ell \in \iset{k}} c_\ell \geq 0$.
     Then, $\tplan$ is  feasible  w.r.t~\eqref{eqn:optimal-transport-order-constraint2} if:
    \begin{equation}
         %\ijdx{\tplan}{p}{q} \leq 
         \frac{a_p b_q }{\alpha(c_1, \cdots, c_k)} \leq c_k \leq \cdots \leq c_1~~\mbox{for all}~~\ij{p}{q} \in \langidx.
         \label{eqn:feas}
    \end{equation}
\end{proposition}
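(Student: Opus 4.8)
The plan is to show that the constructed matrix $\tplan$ lies in $\tpoly \intersect \nijdx{\order}{i}{j}{k}$ by verifying, in turn, the three families of defining conditions: non-negativity, the marginal equalities $\tplan\mOnes{n}=a$ and $\tplan^T\mOnes{m}=b$ that define $\tpoly$, and the rank conditions of~\eqref{eqn:multi-order}. Throughout I would assume $\alpha>0$; the boundary case $\alpha=0$ forces $\sum_{\ell} c_\ell = 1$, i.e.\ all mass sits on the $k$ fixed variates, and is degenerate. Positivity of $\alpha$ is exactly what makes the quotient in~\eqref{eqn:feas1b} well defined.

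Non-negativity is the easy step. For $\ell\in\iset{k}$ the value $c_\ell$ is non-negative by~\eqref{eqn:feas1a}, and for $\ij{p}{q}\in\langidx$ each factor $a_p-\indicate{\ni{i}{k}}{p}c_p$ and $b_q-\indicate{\ni{j}{k}}{q}c_q$ in~\eqref{eqn:feas1b} is non-negative: whenever an indicator fires we subtract a value $c_\ell$ bounded by the corresponding marginal via~\eqref{eqn:feas1a}, so the numerator is a product of non-negative terms, and dividing by $\alpha>0$ preserves the sign.

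The crux is the marginal check, which I expect to be the main obstacle. Here I would introduce the reduced marginals $\tilde a_p := a_p-\indicate{\ni{i}{k}}{p}c_p$ and $\tilde b_q := b_q-\indicate{\ni{j}{k}}{q}c_q$ and record the bookkeeping identity $\sum_p \tilde a_p = \sum_q \tilde b_q = \alpha$, which uses precisely the normalization $\alpha = 1-\sum_\ell c_\ell$ together with $\sum_p a_p = \sum_q b_q = 1$. The distinct-index hypothesis on $\ni{i}{k}$ and $\ni{j}{k}$ is essential at this step: it guarantees that each row and each column contains at most one fixed variate, so that an unconstrained row sum telescopes cleanly to $\tfrac{\tilde a_p}{\alpha}\sum_q\tilde b_q = a_p$, while a constrained row (or column) splits into its single fixed entry $c_\ell$ plus a product tail. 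Reassembling these contributions against the identity above — accounting exactly for the fixed variate being excluded from the product tail in its own row and column — is the delicate part, and it is where the specific form of~\eqref{eqn:feas1b} must be used carefully.

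Finally, the order (rank) conditions follow from~\eqref{eqn:feas}. The monotone chain of the $c_\ell$ in~\eqref{eqn:feas} pins down the relative order of the fixed entries so as to realize the assignment $\rank{\ijdx{\tplan}{i_\ell}{j_\ell}}=k-\ell+1$ of~\eqref{eqn:multi-order}. For the remaining variates, the bounds $\tilde a_p\leq a_p$ and $\tilde b_q\leq b_q$ give $\ijdx{\tplan}{p}{q}=\tfrac{\tilde a_p\tilde b_q}{\alpha}\leq\tfrac{a_p b_q}{\alpha}$, so the leading inequality in~\eqref{eqn:feas} caps every entry with $\ij{p}{q}\in\langidx$ at or below the smallest fixed value, placing all such entries at rank exceeding $k$. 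This last step is clean and requires no further hypotheses beyond~\eqref{eqn:feas}.
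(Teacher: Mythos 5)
Your outline reproduces the paper's proof structure step for step: non-negativity from \eqref{eqn:feas1a}--\eqref{eqn:feas1b}, the marginal equalities from the non-repetition hypothesis, and the order conditions via the chain $\ijdx{\tplan}{p}{q} \leq a_p b_q / \alpha \leq c_k$, which is literally the inequality the paper invokes. The non-negativity and order-condition steps are correct as you state them (modulo an index-reversal you inherit from the paper itself: \eqref{eqn:feas} orders $c_k \leq \cdots \leq c_1$ while \eqref{eqn:transport-plan-ordering-constraints} and \eqref{eqn:multi-order} place $\ij{i_k}{j_k}$ at the top position; this is a relabeling issue, not a substantive one).

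The genuine gap is exactly the step you flagged as ``the delicate part'' and then asserted rather than executed: the marginal check for a row or column containing a fixed variate. Writing $\tilde a_p = a_p - \indicate{\ni{i}{k}}{p} c_p$ and $\tilde b_q$ analogously, as you do, the unconstrained rows do telescope, since all $n$ entries of row $p \notin \ni{i}{k}$ lie in $\langidx$ and $\sum_q \tilde b_q = \alpha$ yields row sum $\tilde a_p = a_p$. But for row $i_\ell$ the fixed entry is excluded from the product tail, and the row sum evaluates to
\begin{equation*}
    c_\ell + \frac{\tilde a_{i_\ell}}{\alpha}\left(\alpha - \tilde b_{j_\ell}\right)
    \;=\;
    a_{i_\ell} - \frac{(a_{i_\ell}-c_\ell)\,(b_{j_\ell}-c_\ell)}{\alpha},
\end{equation*}
which equals $a_{i_\ell}$ only when $(a_{i_\ell}-c_\ell)(b_{j_\ell}-c_\ell)=0$, i.e., given \eqref{eqn:feas1a}, only when $c_\ell = \min(a_{i_\ell}, b_{j_\ell})$ with equality. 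A concrete instance: $m=n=2$, $a=b=(1/2,1/2)$, $k=1$, $\ij{i_1}{j_1}=\ij{1}{1}$, $c_1 = 1/4$ gives first-row sum $5/12 \neq 1/2$ and total mass $11/12$. So the reassembly you deferred does not close under the stated inequality; as written, your proof (and, to be fair, the paper's one-line assertion of this same step) only goes through in the equality case. It is no accident that Corollary~\ref{cor:optimal-transport-feasibility} instantiates precisely $c_\ell = \min(1/m,1/n)$, which kills one factor of the deficit term. To complete your argument you must either impose equality in \eqref{eqn:feas1a} or repair \eqref{eqn:feas1b} by redistributing the deficit $\tilde a_{i_\ell}\tilde b_{j_\ell}/\alpha$; your current text does neither, and the computation above shows the step would fail as described.
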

\begin{proof}
   Since $\ni{i}{k}$ and $\ni{j}{k}$ do not repeat row/column indices, 
   then~\eqref{eqn:feas1a} and~\eqref{eqn:feas1b} imply that
  $\tplan$ satisfies $\tplan \mOnes{n} = a$ and $\tplan^T \mOnes{m} = b$; and since they also imply $\tplan \geq 0 $, thus $\tplan \in \tpoly$.
   Furthermore, we also have
   $\tplan \in\nijdx{\order}{i}{j}{k} $, because~\eqref{eqn:feas1b} and~\eqref{eqn:feas} imply for all $\ij{p}{q} \in \langidx$,
   that $\ijdx{\tplan}{p}{q} \leq a_p b_q / \alpha \leq c_k $. 
\end{proof}

\begin{corollary}
    \label{cor:optimal-transport-feasibility}
    
    Suppose $\ni{i}{k}$ (and $\ni{j}{k}$) do not repeat row (or column) indices. 
    If $a_i=1/m$ and $b_j=1/n$, then $\tplan$ is feasible w.r.t~\eqref{eqn:optimal-transport-order-constraint2} if $
        \min(m,n) \geq (1 - k / \max(m,n))^{-1} %= 1 + \bigO{k / \max(m,n)}
    $.
\end{corollary}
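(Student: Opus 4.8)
The plan is to obtain the corollary as a direct specialization of Proposition~\ref{prop:optimal-transport-feasibility} to uniform marginals, using a \emph{symmetric} choice of the threshold values $c_1,\dots,c_k$. Concretely, I would set $c_\ell = 1/\max(m,n)$ for every $\ell \in \iset{k}$, construct $\tplan$ via~\eqref{eqn:feas1a}--\eqref{eqn:feas1b}, and then check the hypotheses of the proposition one at a time. The point of the corollary is that, under this ansatz, the single nontrivial condition~\eqref{eqn:feas} collapses to exactly the stated inequality $\min(m,n) \ge (1 - k/\max(m,n))^{-1}$.

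First I would dispatch the easy conditions. Since $a_{i_\ell} = 1/m$ and $b_{j_\ell} = 1/n$, we have $\min(a_{i_\ell}, b_{j_\ell}) = 1/\max(m,n)$, so the choice $c_\ell = 1/\max(m,n)$ satisfies the upper bound in~\eqref{eqn:feas1a} with equality and is trivially nonnegative. Because all the $c_\ell$ coincide, the chain $c_k \le \cdots \le c_1$ required in~\eqref{eqn:feas} holds automatically. The one place where the hypothesis does real work is the quantity $\alpha = 1 - \sum_{\ell \in \iset{k}} c_\ell = 1 - k/\max(m,n)$, which must be nonnegative for the denominators in~\eqref{eqn:feas1b}--\eqref{eqn:feas} to be well defined; I would note that the stated bound already forces $1 - k/\max(m,n) > 0$, so $\alpha > 0$ comes for free.

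The main computation is then the key inequality $a_p b_q/\alpha \le c_k$ from~\eqref{eqn:feas}. With uniform marginals $a_p b_q = 1/(mn)$ and $mn = \max(m,n)\cdot\min(m,n)$, so after substituting $\alpha = 1 - k/\max(m,n)$ and $c_k = 1/\max(m,n)$ and cancelling the common factor $1/\max(m,n)$, the inequality becomes $\min(m,n)\,(1 - k/\max(m,n)) \ge 1$, i.e.\ $\min(m,n) \ge (1 - k/\max(m,n))^{-1}$, which is precisely the hypothesis. Hence all conditions of Proposition~\ref{prop:optimal-transport-feasibility} are met and the constructed $\tplan$ is feasible for~\eqref{eqn:optimal-transport-order-constraint2}. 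I do not expect a genuine obstacle here: the only real decision is the symmetric ansatz $c_\ell \equiv 1/\max(m,n)$, which simultaneously saturates the per-coordinate upper bound and makes the ordering trivial, and everything else is bookkeeping. The one subtlety worth flagging is that this uniform choice is sufficient but likely not tight --- optimizing $c\,(1-kc)$ over a single shared value $c$ suggests $c = 1/(2k)$ when $\max(m,n) \ge 2k$ --- so I would remark that the clean closed form in the corollary corresponds specifically to saturating~\eqref{eqn:feas1a} rather than to the best possible bound.
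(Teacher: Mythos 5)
Your proof is correct and is essentially the paper's own argument: the paper likewise picks the symmetric choice $c_\ell = \min(1/m,1/n) = 1/\max(m,n)$ for all $\ell$ and reduces~\eqref{eqn:feas} to $a_p b_q / c \leq 1 - kc$, which with uniform marginals is exactly the stated bound. One small correction to your closing aside, which does not affect the proof: the regime is reversed --- since $c(1-kc)$ is increasing on $\left[0, 1/(2k)\right]$ and~\eqref{eqn:feas1a} caps $c$ at $1/\max(m,n)$, saturating the cap is in fact the \emph{optimal} shared-$c$ choice whenever $\max(m,n) \geq 2k$, and the unconstrained maximizer $c = 1/(2k)$ is admissible only when $\max(m,n) \leq 2k$.
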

\begin{proof}
    For all $\ell \in \iset{k}$, pick $c_\ell = \min(1/m, 1/n) = c$ for some $c \geq 0$; this choice for $c$ satisfies~\eqref{eqn:feas1a} and~\eqref{eqn:feas1b}. 
    Then~\eqref{eqn:feas} holds if 
    $
        a_p b_q / c \leq 1 - k c = \alpha
    $
    holds, or equivalently
    $
        \min(m,n) \geq (1 - k / \max(m,n))^{-1} %= 1 + \bigO{k / \max(m,n)}
    $ holds.
\end{proof}

Cor.~\ref{cor:optimal-transport-feasibility} states for uniform constraints $a_i=1/m$ and $b_j=1/n$, feasibility typically holds when $k \ll \max(m,n)$, since an upper bound for
$
   (1 - k / \max(m,n))^{-1} 
$
is
$
   1 + \bigO{k / \max(m,n)}
$.
Indeed, such is the case of interest where a sparse $k$ number of OCs are enforced, see~\eqref{eqn:transport-plan-ordering-constraints}.

% -------------  END - OC FOR OT --------------------

% -------------  START- SOLVING OC --------------------

\section{Solving OT  with Order Constraints}
\label{sec:theory}

In this section we provide a $\delta$-approximate method for \eqref{eqn:optimal-transport-order-constraint2} that runs in
$\bigO{\norm{D}{\infty}  /\delta \cdot  mn \log mn}$ time using the alternating direction method of multipliers (ADMM) \citep{admm,beck_first-order_2017}.
In what follows assume feasiblity of~\eqref{eqn:optimal-transport-order-constraint2}, and 
the strategy is to consider the (non-empty) polytope $\tpoly \intersect  \nijdx{\order}{i}{j}{k}$ as the intersection of two closed convex sets:
\begin{eqnarray}
    %\constraintOne &=& \nijdx{\order}{i}{j}{k} ~~~
    %\nonumber \\ 
    \constraintThree &=& \{
        X \in \mnReal: 
        X \mOnes{n} = a,  
        X^T \mOnes{n} = b
    \},
    %\label{eqn:dykstra-cyclic-projection-sets}
    \nonumber
\end{eqnarray}
and $\constraintOne= \nijdx{\order}{i}{j}{k} $,
used to equivalently rewrite~\eqref{eqn:optimal-transport-order-constraint2} as:
\begin{eqnarray}
    &\min_{
        X \in \constraintThree
    } 
    f(X)
    +
    \indicate{\constraintOne}{
        Z
    },
    %\nonumber \\
    \label{eqn:admm-reformulation1} \\
    &\mbox{s.t.}~~~X,Z\in \mnReal,~~X - Z = \mathbf{0},
    \label{eqn:admm-reformulation2}
\end{eqnarray}
where the indicator function $\indicate{\mathcal{C}}{X} = 0$ if $X \in \mathcal{C}$, and $\indicate{\mathcal{C}}{X} = \infty$ otherwise.
For $X \in \constraintThree$,  $Z \in \constraintOne$, the equality implies $X=Z=\tplan$ for feasible $\tplan \in \tpoly$.
The first-order, iterative ADMM procedure is summarized in Algorithm~\ref{alg:admm};
$\rho > 0$ is a penalty term, and iterates $X_{t+1}, Z_{t+1}$ solve, respectively:
\begin{eqnarray}
    & \min_{
        X \in \constraintThree
    }
    \left(
        \trace{D^T X} 
        + \frac{\rho}{2}
        \norm{
            X - Z_t + M_t
        }{F}^2
    \right) \nonumber \\
    & \min_{
        Z \in \mnReal
    }
    \left(
        \indicate{\constraintOne}{Z}
        + \frac{\rho}{2}
        \norm{
            X_{t+1} - Z + M_t
        }{F}^2
    \right) 
    \label{eqn:admm}
\end{eqnarray}
where  $M_t$ is the (scaled) dual iterate, see~\cite{admm}. 
%The two minimization problems in \eqref{eqn:admm} are nothing but Euclidean projections $\proj{\constraintThree}{\cdot}$ and $\proj{\constraintOne}{\cdot}$, see Alg.~\ref{alg:admm}.
Simplifying the first $\min$ expression in \eqref{eqn:admm} shows that $X_{t+1}$ is nothing but the Euclidean projection onto $\constraintThree$ of a point $Z_t - M_t - \rho^{-1} D$; similarly $Z_{t+1}$ is the Euclidean projection onto $\constraintOne $ of $X_{t+1} + M_t$.

\begin{algorithm}[H]
    \caption{
        Iterative procedure for OT under OC $\nijdx{\order}{i}{j}{k}$ with linear costs $f(X) = \trace{D^T X}$.
    }
    \label{alg:admm}
    \begin{algorithmic}[1]
        \REQUIRE Costs $D$, penalty  $\rho$, initial  $\tidx{M}{0},\tidx{Z}{0}$.
        \FOR{round $t \geq 1$ until  stopping}
            \STATE Update $X_{t+1} = \proj{\constraintThree}{Z_t - M_t - \rho^{-1} D}$ 
            \STATE Update $Z_{t+1} = \proj{\constraintOne}{X_{t+1} + M_t}$. 
            \STATE Update (scaled) dual variable $M_{t+1} = M_t + X_t - Z_t$.
        \ENDFOR
        \STATE Return  $X_t$
    \end{algorithmic}
\end{algorithm}

We next derive the projections, $\proj{\constraintThree}{\cdot}$ and $\proj{\constraintOne}{\cdot}$, required for Alg.~\ref{alg:admm}.
Prop. \ref{prop:euclidean-projector-sums} gives the Euclidean projector for $\constraintThree$.
 
\begin{proposition}[Projection $\constraintThree$]
    \label{prop:euclidean-projector-sums}

    For $\constraintThree$ consider the Euclidean projector $\proj{\constraintThree}{X}$. 
    Let $\meanproj{k}$ be the projection of a vector onto its mean,~\ie, $\meanproj{k} = \frac{1}{k} \mOnes{k} \mOnes{k}^T$.
    Define  matrices $M := \mIdentity{m} - \frac{m}{m+n} \meanproj{m}$ and $N := \mIdentity{n} - \frac{n}{m+n} \meanproj{n}$, and  matrices
    \begin{eqnarray}
        Y_1 &:=& \frac{1}{n} 
            \overbrace{
                \left[ 
                    Ma, \cdots, Ma
                \right]^T
            }^{m~\mbox{copies (rows)}} 
            + 
            \frac{1}{m}
            \overbrace{
                \left[ 
                    Nb, \cdots, Nb
                \right]
            }^{n~\mbox{copies (cols)}},~~~~~
        \nonumber
        \\
        % \\ \label{eqn:rowcolsum-Y1}
        Y_2 &:=& M (XP_n) + (P_m X) M.
        \label{eqn:rowcolsum-Y2}
    \end{eqnarray}
    Then the projection $\hat{X} = \proj{\constraintThree}{X}$ satisfies
   $
        \hat{X} = \tidx{Y}{1} + (X - \tidx{Y}{2}).
    $
\end{proposition}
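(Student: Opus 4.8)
The plan is to compute the projection by solving the first-order optimality (KKT) system of the equality-constrained least-squares problem $\min_{Y}\tfrac12\frobenius{Y-X}^2$ subject to $Y\mOnes{n}=a$ and $Y^T\mOnes{m}=b$. Since the objective is strictly convex and the feasible set $\constraintThree$ is a nonempty affine subspace (it contains $ab^T$, as $(ab^T)\mOnes{n}=a$ and $(ab^T)^T\mOnes{m}=b$ using $\mOnes{n}^T b = \mOnes{m}^T a = 1$), the minimizer $\hat X = \proj{\constraintThree}{X}$ exists, is unique, and stationarity together with feasibility is both necessary and sufficient for optimality.

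First I would attach multipliers $\lambda\in\Real^m$ to $Y\mOnes{n}=a$ and $\mu\in\Real^n$ to $Y^T\mOnes{m}=b$; zeroing the gradient of the Lagrangian in $Y$ gives the stationarity relation
\[
\hat X = X - \lambda\mOnes{n}^T - \mOnes{m}\mu^T ,
\]
so the correction $X-\hat X$ has ``row-plus-column'' form. Imposing the two constraints on this expression and introducing the scalars $r:=\mOnes{m}^T\lambda$, $s:=\mOnes{n}^T\mu$ yields
\[
n\lambda = X\mOnes{n} - s\mOnes{m} - a, \qquad m\mu = X^T\mOnes{m} - r\mOnes{n} - b .
\]

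The one delicate point is that the constraint system has rank $m+n-1$ rather than $m+n$: summing all row equations and all column equations both give $\mOnes{m}^T a = \mOnes{n}^T b = 1$, so the multipliers are not unique. Concretely, contracting the two relations above with $\mOnes{m}^T$ and $\mOnes{n}^T$ does not separate $r$ and $s$ but only fixes the single combination $nr+ms = \sigma-1$, where $\sigma := \mOnes{m}^T X\mOnes{n}$. I would resolve this by noting that $\hat X$ depends on the multipliers only through $\lambda\mOnes{n}^T + \mOnes{m}\mu^T$, whose $\mOnes{m}\mOnes{n}^T$-component involves $r,s$ only via $\tfrac{r}{m}+\tfrac{s}{n} = \tfrac{nr+ms}{mn} = \tfrac{\sigma-1}{mn}$; the gauge ambiguity therefore cancels and $\hat X$ is well defined. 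Substituting the two relations into $X-\hat X = \lambda\mOnes{n}^T + \mOnes{m}\mu^T$ and applying the identities $\tfrac1n X\mOnes{n}\mOnes{n}^T = X\meanproj{n}$, $\tfrac1m\mOnes{m}\mOnes{m}^T X = \meanproj{m} X$ and $\tfrac{\sigma}{mn}\mOnes{m}\mOnes{n}^T = \meanproj{m} X\meanproj{n}$ collapses everything to
\[
X-\hat X = X\meanproj{n} + \meanproj{m} X - \meanproj{m} X\meanproj{n} + \tfrac{1}{mn}\mOnes{m}\mOnes{n}^T - \tfrac1n a\mOnes{n}^T - \tfrac1m\mOnes{m} b^T .
\]

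It remains to recognize the right-hand side as $\tidx{Y}{2} - \tidx{Y}{1}$. Expanding $\tidx{Y}{2}$ through the definitions of $M$ and $N$ and using $\tfrac{m}{m+n}+\tfrac{n}{m+n}=1$ gives $\tidx{Y}{2} = X\meanproj{n} + \meanproj{m} X - \meanproj{m} X\meanproj{n}$, which matches the three $X$-dependent terms. For the constant part, $Ma = a - \tfrac{1}{m+n}\mOnes{m}$ and $Nb = b - \tfrac{1}{m+n}\mOnes{n}$ (using $\mOnes{m}^T a = \mOnes{n}^T b = 1$), so $\tidx{Y}{1} = \tfrac1n (Ma)\mOnes{n}^T + \tfrac1m\mOnes{m}(Nb)^T = \tfrac1n a\mOnes{n}^T + \tfrac1m\mOnes{m} b^T - \tfrac{1}{mn}\mOnes{m}\mOnes{n}^T$, the last coefficient arising from $\tfrac{1}{n(m+n)}+\tfrac{1}{m(m+n)}=\tfrac{1}{mn}$. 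Hence $X-\hat X = \tidx{Y}{2} - \tidx{Y}{1}$, i.e. $\hat X = \tidx{Y}{1} + (X-\tidx{Y}{2})$, as claimed. The main obstacle is the rank deficiency handled above; the remaining manipulations are routine bookkeeping with the $\meanproj{k}$ identities.
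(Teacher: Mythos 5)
Your proof is correct, but it takes a genuinely different route from the paper. The paper vectorizes the problem via Kronecker products, writes the KKT conditions as one linear system with constraint matrix $\Aeq \in \Real^{(m+n)\times mn}$ (first $m$ rows $\mIdentity{m}\kron \mOnes{n}^T$, last $n$ rows $\mOnes{m}^T \kron \mIdentity{n}$), and resolves the rank deficiency ($\rank{\Aeq}=m+n-1$) by working with the explicit Moore--Penrose pseudo-inverse $\pinv{\Aeq}$: it proves $y = \pinv{\Aeq}\left[a;b\right] + \left(\mIdentity{mn}-\Aeq^T\pinv{(\Aeq^T)}\right)x$ solves the KKT system, after verifying that $[a;b]\in\mspan{\Aeq}$ precisely because $\mOnes{m}^Ta=\mOnes{n}^Tb$, and then unravels the closed-form $\pinv{\Aeq}$ into $Y_1, Y_2$. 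You instead stay at the matrix level: you eliminate the multipliers $\lambda,\mu$ by hand from the stationarity relation $\hat X = X - \lambda\mOnes{n}^T - \mOnes{m}\mu^T$, and you handle the same rank deficiency not via a pseudo-inverse but via a gauge argument --- only the combination $nr+ms=\sigma-1$ is determined, and $\hat X$ depends on $(r,s)$ only through $\frac{r}{m}+\frac{s}{n}=\frac{nr+ms}{mn}$, so the ambiguity cancels. This is the same subtlety the paper's condition $\mOnes{m}^Ta=\mOnes{n}^Tb$ addresses (it is what makes the multiplier system consistent), just dispatched more elementarily; your route avoids the Kronecker/pseudo-inverse machinery entirely and makes the role of the one-dimensional null direction transparent, while the paper's route yields a reusable closed-form $\pinv{\Aeq}$ and connects to standard equality-constrained least-squares theory. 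One small point worth noting: the statement's expression $Y_2 = M(XP_n)+(P_mX)M$ is dimensionally inconsistent in its second term unless $m=n$; the appendix restatement $y_2 = [M\kron \meanproj{n}]x + [\meanproj{m}\kron N]x$ shows the intended form is $MXP_n + P_mXN$, which is exactly what you used (your expansion $Y_2 = X\meanproj{n}+\meanproj{m}X-\meanproj{m}X\meanproj{n}$ follows from it), so your silent correction is the right reading. All the verifiable algebra in your argument checks out, including $\meanproj{m}X\meanproj{n} = \frac{\sigma}{mn}\mOnes{m}\mOnes{n}^T$, $Ma = a - \frac{1}{m+n}\mOnes{m}$, $Nb = b-\frac{1}{m+n}\mOnes{n}$, and the coefficient identity $\frac{1}{n(m+n)}+\frac{1}{m(m+n)}=\frac{1}{mn}$; and your appeal to necessity and sufficiency of stationarity plus feasibility is justified since the objective is strictly convex and the constraints are affine, so no constraint qualification issue arises despite the redundant constraint.
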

The proof of Prop.~\ref{prop:euclidean-projector-sums}
can be found in the Appendix.
\citet{wang_learning_2010} proved a similar property for the simplified setting where  $a = b = \mOnes{\cdot}$ and $X$ symmetric, which does not hold in the optimal transport setting.  

Next we address computing $\proj{\constraintOne}{X}$ for $\constraintOne =\nijdx{\order}{i}{j}{k}$ for values of $k$ of interest.
For the special case where $k= mn$,~\citet{grotzinger_projections_1984}  defined the so-called Pool Adjacent Violators Algorithm (PAVA) that solves $\proj{\constraintOne}{X}$ for
$\constraintOne = \nijdx{\order}{i}{j}{mn}$.
We cannot directly make use of PAVA for OT with Order Constraints because we require only a small number, $k \ll mn$, of order constraints, and not the full set, since the OT itself should learn the ordering of the other coefficients.

However, the idea behind PAVA is of interest. To this end we  extend PAVA, and call this extension ePAVA. ePAVA  projects into $\constraintOne =\nijdx{\order}{i}{j}{k}$ for general $k$, and is presented in
Alg.~\ref{alg:pava}. ePAVA makes use of two quantities defined below; the proof of correctness is given in the Appendix.

We define a block $B$ to be a partition of indices. Let $\texttt{le}[B]$ denote the left  and  $\texttt{ri}[B]$ the right boundary of $B$; $\texttt{val}[B]$ is the value of its projection.
For any $\eta \geq 0$, let 
$r(\eta)$ denote the 
$ \rank{\ijdx{X}{i_1}{j_1} - \eta }$
with respect to decreasing order $\odidx{X}{1} \geq \odidx{X}{2} \geq \cdots \geq \odidx{X}{mn-k+1}$ over all variates in $\langidx \union \{\ij{i_1}{j_1}\}$, for $\langidx$ in~\eqref{eqn:transport-plan-ordering-constraints}.
We require averages $\ijdx{\pmean}{p}{q} := \sum_{\ell = p}^q \ijdx{X}{i_\ell}{j_\ell} / (q-p+1)$ and a threshold function
$T(\cdot) := \zeroprog{\tau(t(\eta), \eta)}$, where $\tau(s,\eta)$ for any $0 \leq s < r(\eta)$ and \emph{integer} $t(\eta) \geq 0$ are defined\footnote{
    The sum in \eqref{eqn:order-threshold-tau} equals $0$ when $s=0$.
} as:
\begin{eqnarray}
    \tau(s,\eta) 
    \!\! &=& \!\!
    \frac{1}{s+1} \left(
        \ijdx{X}{i_1}{j_1} - \eta + \sum_{\ell=1}^s \odidx{X}{\ell}
    \right),
    \label{eqn:order-threshold-tau}
    \\
    t(\eta) + 1 
    \!\! &=& \!\!
    \arg\min \left\{ 
        s \in \iset{r} : \tau(s,\eta) > X_{(s)}
    \right\}
    \label{eqn:order-threshold-t}
\end{eqnarray}
or $t(\eta) + 1 = r(\eta)$ whenever the set in \eqref{eqn:order-threshold-t} is empty.

\begin{algorithm}[!t]
    \caption{ePAVA for $\constraintOne=\nijdx{\order}{i}{j}{k}$ for $k \in \ijset{m}{n}$}
    \label{alg:pava}
    \begin{algorithmic}[1]
        \REQUIRE $X \in \mnReal$. Indices $\nij{i}{j}{k}$.
        \STATE  $\ell :=1, \bl := 1$, $\texttt{le}[1] := \texttt{ri}[1] := 1$, $\texttt{val}[1] := T(0)$. %see~\eqref{eqn:order-threshold-tau}--\eqref{eqn:order-threshold-t}.
        \FOR{$\ell \leq k$}
            \STATE  $\bl:=\bl+1$, $\ell:=\ell+1$, $\texttt{le}[\bl] := \texttt{ri}[\bl] := \ell$,  $\texttt{val}[\bl] := \ijdx{X}{i_\ell}{j_\ell}$.
            \FOR{$\bl \geq 2$ and $\texttt{val}[\bl] \leq \texttt{val}[\bl-1]$}
                \STATE Let $q = \texttt{ri}[\bl]$.
                \IF{$\bl=2$}
                    \STATE Solve and store $\tilde{\eta} \geq 0$ satisfying $T(\tilde{\eta}) = \ijdx{\pmean}{2}{q} + \tilde{\eta} / (q - 1)$. 
                    %see Lem.~\ref{order:lem:tau-convex}. 
                    Set $\texttt{val}[\bl-1] := T(\tilde{\eta})$.
                \ELSE
                    \STATE Set $\texttt{val}[\bl-1] := \ijdx{\pmean}{p}{q}$ for $p = \texttt{ri}[\bl-1]$.
                \ENDIF
                \STATE Set $\texttt{ri}[\bl-1] := \texttt{ri}[\bl]$. Decrement $\bl := \bl - 1$.
            \ENDFOR
        \ENDFOR
        \STATE Return $\bl, \tilde{\eta}$, \texttt{le}, \texttt{ri}, and \texttt{val}.
    \end{algorithmic}
\end{algorithm}

\begin{proposition}[Projection $\constraintOne$]
    \label{prop:euclidean-projector-order}
    For $\constraintOne = \nijdx{\order}{i}{j}{k}$ for any 
    $\ij{i_\ell}{j_\ell} \in \ijset{m}{n}$ where $ \ell \in \iset{k}$,
    consider the Euclidean projector $\proj{\constraintOne}{X}$ for any $X \in \mnReal$.
    Let $T(\eta) := \zeroprog{\tau(t(\eta),\eta)}$ for 
    $\tau,t$ in~\eqref{eqn:order-threshold-tau} and~\eqref{eqn:order-threshold-t} and $\langidx$ in~\eqref{eqn:transport-plan-ordering-constraints}.
    Then for any $X \in \mnReal$, ePAVA will successfully terminate with some $\bl, \tilde{\eta}$, \textup{\texttt{le}}, \textup{\texttt{ri}}, and \textup{\texttt{val}}. 
    Furthermore, the projection $\hat{X} = \proj{\constraintOne}{X}$ 
    satisfies i) for $\ij{p}{q} \in \langidx$ we have $\ijdx{\hat{X}}{p}{q} = T(\tilde{\eta}) = \textup{\texttt{val}}[1]$ if $\rank{\ijdx{X}{p}{q}} \leq t(\tilde{\eta})$ or $\ijdx{\hat{X}}{p}{q} = \zeroprog{\ijdx{X}{p}{q}}$ otherwise, and 
    ii) for  $\ell \in \iset{k}$
    we have $\ijdx{\hat{X}}{i_\ell}{j_\ell} = \textup{\texttt{val}}[\bl']$ iff $\textup{\texttt{le}}[\bl'] \leq \ell \leq \textup{\texttt{ri}}[\bl']$ for some $\bl' \in \iset{\bl}$.
\end{proposition}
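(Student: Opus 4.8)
The plan is to reduce the projection to a one-dimensional isotonic regression carrying a \emph{modified} loss on the smallest special variate $\ij{i_1}{j_1}$, and then to show that ePAVA is exactly the pool-adjacent-violators recursion for this modified problem, with the block-$1$ subproblem solved by the threshold map $T$.

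First I would exploit the separability $\frobenius{Y - X}^2 = \sum_{(p,q)} (\ijdx{Y}{p}{q} - \ijdx{X}{p}{q})^2$. In $\constraintOne$ the non-special variates $\ij{p}{q} \in \langidx$ enter only through the box constraints $0 \leq \ijdx{Y}{p}{q} \leq \ijdx{Y}{i_1}{j_1}$, so for any fixed feasible special values $c_\ell := \ijdx{Y}{i_\ell}{j_\ell}$ with $c_k \geq \cdots \geq c_1 \geq 0$ the optimal non-special entry is the clip $\ijdx{Y}{p}{q} = \min\{\zeroprog{\ijdx{X}{p}{q}}, c_1\}$. Substituting the clips back gives a reduced convex program over $c_1 \leq \cdots \leq c_k$ with $c_1 \geq 0$, namely minimize $\sum_{\ell=1}^k (c_\ell - \ijdx{X}{i_\ell}{j_\ell})^2 + g(c_1)$ where $g(c_1) = \sum_{(p,q):\, \ijdx{X}{p}{q} > c_1} (c_1 - \ijdx{X}{p}{q})^2$ is convex and piecewise quadratic. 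This is isotonic regression on $\ijdx{X}{i_1}{j_1}, \ldots, \ijdx{X}{i_k}{j_k}$, standard quadratic except that the first coordinate carries the extra coupling term $g$.

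Next I would invoke the pool-adjacent-violators characterization of the isotonic optimum for separable convex losses: the solution partitions $\iset{k}$ into consecutive blocks, each taking the unconstrained minimizer of the sum of its losses, with block values strictly increasing. The only block whose minimizer departs from a plain mean is the one containing $\ell = 1$, whose minimizer is governed by $g$; this is where $T$, $t(\eta)$, $\tau$ enter. Writing the stationarity condition for $(c_1 - \ijdx{X}{i_1}{j_1})^2 + g(c_1)$ shows that at the optimum $c_1$ is the pooled value of the target $\ijdx{X}{i_1}{j_1}$ together with exactly those non-special coordinates $\odidx{X}{1}, \ldots, \odidx{X}{t}$ exceeding it; the pooled mean is $\tau(t,0)$, the $\zeroprog{\cdot}$ enforces $c_1 \geq 0$, and the integer $t(0)$ is pinned down by the stopping rule~\eqref{eqn:order-threshold-t}, which absorbs the next $\odidx{X}{s}$ while the running mean still dominates it. Hence the isolated block-$1$ value is $T(0) = \zeroprog{\tau(t(0),0)}$, and $\tau(0,\eta)$ correctly recovers the bare target.

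When block $1$ is later merged with special variates $\{2, \ldots, q\}$ into one block of common value $v$, the stationarity condition splits into the block-$1$ part and the plain part $\sum_{\ell=2}^q (v - \ijdx{X}{i_\ell}{j_\ell})$. I would introduce $\eta$ as the shift the remaining variates impose on the block-$1$ subproblem, so the block-$1$ solver returns $v = T(\eta)$ while variates $2,\ldots,q$ force $v = \ijdx{\pmean}{2}{q} + \eta/(q-1)$; equating the two gives exactly the equation solved in the $\bl = 2$ branch of Alg.~\ref{alg:pava}. Since $\tau(s,\eta)$ is decreasing in $\eta$ for fixed $s$ and the induced $t(\eta)$ varies $T$ monotonically and continuously, $T(\eta)$ is non-increasing and continuous, whereas $\ijdx{\pmean}{2}{q} + \eta/(q-1)$ is strictly increasing, so a unique $\tilde\eta \geq 0$ exists; this guarantees successful termination and fixes $\texttt{val}[1] = T(\tilde\eta)$ (with $\tilde\eta = 0$ if block $1$ is never merged). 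Correctness of the full recursion follows by the usual PAVA induction: after processing variate $\ell$ the stored blocks are the isotonic optimum of the sub-problem on $\iset{\ell}$, the inner loop restoring monotonicity; termination is immediate as each inner pass decrements $\bl$ and the outer loop runs $k$ times. Reading off the partition then yields~(ii), $\ijdx{\hat X}{i_\ell}{j_\ell} = \texttt{val}[\bl']$ for the block with $\texttt{le}[\bl'] \leq \ell \leq \texttt{ri}[\bl']$, while the clip decision at block $1$ yields~(i): a non-special coordinate is pooled down to $\texttt{val}[1] = T(\tilde\eta)$ precisely when $\rank{\ijdx{X}{p}{q}} \leq t(\tilde\eta)$, and otherwise keeps $\zeroprog{\ijdx{X}{p}{q}}$. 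The main obstacle is this block-$1$ analysis: unlike ordinary PAVA the number $t(\eta)$ of pooled non-special coordinates is unknown and self-referentially coupled to the block value through $g$, and the non-negativity floor may be active; establishing that the fixed-point equation $T(\tilde\eta) = \ijdx{\pmean}{2}{q} + \tilde\eta/(q-1)$ has a unique non-negative root and that $T$ is the genuine minimizer of the piecewise-quadratic block-$1$ loss is the technical crux on which both termination and correctness rest.
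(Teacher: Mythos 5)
Your proposal is correct, but it reaches the result by a genuinely different route than the paper. You eliminate the variates in $\langidx$ by partial minimization---for fixed $c_1=\ijdx{Y}{i_1}{j_1}$ the optimal non-special entry is the clip $\min\{\zeroprog{\ijdx{X}{p}{q}},c_1\}$---collapsing the projection to a $k$-dimensional isotonic regression whose first coordinate carries the extra convex piecewise-quadratic term $g(c_1)$; you then invoke pool-adjacent-violators for separable convex losses and recover $\tau$, $t(\eta)$, $T$ and the Line-7 fixed-point equation as the stationarity condition of the modified block-$1$ loss. The paper never performs this reduction: it exhibits multipliers $\ijdx{\lambda}{p}{q},\ijdx{\delta}{p}{q},\eta_\ell$ and verifies the full KKT system \eqref{order:eqn:KKT1}--\eqref{order:eqn:KKT7} for ePAVA's output, with Lemma~\ref{order:lem:first} certifying \eqref{order:eqn:KKT2}--\eqref{order:eqn:KKT4} on $\langidx \union \{\ij{i_1}{j_1}\}$ (your clip and part (i) appear there implicitly), Lemmas~\ref{order:lem:tau}--\ref{order:lem:tau2} supplying the rank/threshold self-consistency you rightly flag as the crux, Lemma~\ref{order:lem:tau-convex} giving the monotonicity of $T$, and Lemma~\ref{order:lem:pava} plus a recursion in $\eta_1$ handling \eqref{order:eqn:KKT5}--\eqref{order:eqn:KKT7} across coalescing steps. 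Your route is more modular and explains conceptually where $T$ comes from; its price is that PAVA correctness must be available (or reproved) for an extended-valued, non-quadratic first loss with the $c_1\geq 0$ floor absorbed into it---which is essentially what the paper's KKT verification accomplishes. Two details to fill: existence of $\tilde{\eta}$ requires not just monotonicity of $T$ but the initial ordering $\ijdx{\pmean}{2}{q}\leq T(\eta_1)$, supplied exactly by the coalescing test $\texttt{val}[\bl]\leq\texttt{val}[\bl-1]$ (the paper threads this through its recursion), and the continuity and non-increase of $T$ that you assert is what the paper proves as Lemma~\ref{order:lem:tau-convex}.
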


\paragraph{Convergence Analysis}
Standard ADMM convergence results hold as follows.
As $t \rightarrow \infty$, the primal residue $\norm{X_t - Z_t}{F} \rightarrow 0$ and dual residue $\rho \norm{Z_{t-1} - Z_{t}}{F} \rightarrow 0$, which can be used to show asymptotic convergence of $f(X_t)$ to the optimum~\citep[see][p. 17]{boyd_distributed_2010}.
The iteration complexity of Alg.~\ref{alg:admm} can be obtained from the following\footnote{
  Use Thm 15.4 of~\citet{beck_first-order_2017} with $\mathbf{G},\mathbf{Q}$ set to zero, and checking Assump 15.2 holds for $f(X)$ and $\constraintThree,\constraintOne$. 
}
non-asymptotic result~\citep[see][Thm 15.4]{beck_first-order_2017}.
Consider the following.

As~\eqref{eqn:admm-reformulation1}-\eqref{eqn:admm-reformulation2} are both linear, if $\constraintAll \neq \emptyset$, then an optimal $Y^* = \arg \max_{Y \in \mnReal} d(Y)$  exists~\citep{boyd_convex_2004} from solving the dual  $d(Y) :=$
\begin{equation}
    %Y^* = 
    %\arg \max_{Y \in \mnReal} \left(
    \min_{X \in \constraintThree, Z \in \constraintOne}
    \trace{X^T(D+Y)} - \trace{Y^T Z} 
    %\right)
    \label{eqn:dual-function}
\end{equation}
Recall Proposition~\ref{prop:optimal-transport-feasibility} is a sufficient condition for $\constraintAll \neq \emptyset$.

\begin{theorem}[Convergence  of Algorithm~\ref{alg:admm},~\citet{beck_first-order_2017}]
\label{thm:convergence-admm}
Let $X_t,Z_t,M_t$ be the sequences of iterates from Algorithm~\ref{alg:admm} with penalty $\rho > 0$, and let
$\bar{X}_t = (t+1)^{-1} \sum_{\ell=1}^{t+1} X_{\ell}$ and similarly $\bar{Z}_t$.
Assume $\constraintAll \neq \emptyset$, and let $f^*$ and $\tplan^*$ denote the optimum value and solution respectively of \eqref{eqn:optimal-transport-order-constraint2}, and $Y^* = \arg \max_{Y \in \mnReal} d(Y)$ for $d(Y)$ in \eqref{eqn:dual-function}.
Then we have the convergence bounds
\[
    f(\bar{X}_t) - f^* \leq \frac{\rho c_1} {2(t+1)},~~~
    \norm{\bar{X}_t - \bar{Z}_t}{F} \leq \frac{\rho c_1} {c_2(t+1)}
\]
where $c_1 \geq \norm{Z_0 - \tplan^*}{F}^2 + (\norm{M_0}{F} + c_2/\rho)^2 $ and $c_2 \geq 2\norm{Y^*}{F}$.
\end{theorem}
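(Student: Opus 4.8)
The plan is to recognize Algorithm~\ref{alg:admm} as an instance of the generic two-block ADMM analyzed in Theorem 15.4 of \citet{beck_first-order_2017}, and to obtain the stated $O(1/t)$ ergodic rates by verifying that our problem data satisfy Assumption 15.2 of that reference. First I would cast \eqref{eqn:admm-reformulation1}--\eqref{eqn:admm-reformulation2} into Beck's template $\min_{X,Z} F(X) + G(Z)$ subject to $\mathbf{A}X + \mathbf{B}Z = c$: here $F(X) = \trace{D^T X} + \indicate{\constraintThree}{X}$ absorbs the linear cost together with the affine constraint set, $G(Z) = \indicate{\constraintOne}{Z}$, and the coupling $X - Z = \mathbf{0}$ gives $\mathbf{A} = \mIdentity{mn}$, $\mathbf{B} = -\mIdentity{mn}$, $c = \mathbf{0}$. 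The quadratic terms $\mathbf{G}, \mathbf{Q}$ in Beck's objective are set to zero, which reduces his proximal subproblems exactly to the two Euclidean projections executed in Alg.~\ref{alg:admm} and supplied by Prop.~\ref{prop:euclidean-projector-sums} and Prop.~\ref{prop:euclidean-projector-order}.

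Next I would verify Assumption 15.2. The functions $F$ and $G$ are proper, closed, and convex: $\constraintThree$ is an affine subspace, hence closed convex, and $\trace{D^T X}$ is linear; while $\constraintOne = \nijdx{\order}{i}{j}{k}$ is the intersection of $\mnRealp$ with finitely many linear inequalities encoding the ordering in \eqref{eqn:transport-plan-ordering-constraints}, and is therefore a closed convex polyhedron whose indicator is proper, closed, and convex. The coupling maps $\mathbf{A}, \mathbf{B}$ are identities and thus trivially surjective, so the regularity conditions on the constraint matrices hold. The remaining hypothesis, existence of an optimal dual multiplier, is precisely the statement preceding the theorem: because both blocks are linear and $\constraintAll \neq \emptyset$ (guaranteed, for instance, by Prop.~\ref{prop:optimal-transport-feasibility}), strong duality holds and the dual \eqref{eqn:dual-function} attains its maximum at some $Y^* = \arg\max_Y d(Y)$, furnishing the saddle point the theorem requires.

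With the hypotheses in place, invoking Theorem 15.4 yields ergodic bounds directly, and the final step is to simplify them to the stated form. The general bound contains the objective gap $F(\bar{X}_t) + G(\bar{Z}_t) - F^* - G^*$; since each $Z_\ell$ is a projection onto the convex set $\constraintOne$ we have $Z_\ell \in \constraintOne$ and hence $\bar{Z}_t \in \constraintOne$, so $G(\bar{Z}_t) = G(\tplan^*) = 0$, and likewise $\bar{X}_t \in \constraintThree$ because $\constraintThree$ is affine, so $F(\bar{X}_t) = f(\bar{X}_t)$ and the gap collapses to $f(\bar{X}_t) - f^*$. The residual $\mathbf{A}\bar{X}_t + \mathbf{B}\bar{Z}_t - c$ is exactly $\bar{X}_t - \bar{Z}_t$, matching the feasibility bound. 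Reading off Beck's constants then identifies $c_1$ with the initialization term $\norm{Z_0 - \tplan^*}{F}^2 + (\norm{M_0}{F} + c_2/\rho)^2$ and $c_2$ with the bound $2\norm{Y^*}{F}$ on twice the optimal dual norm.

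The main obstacle I anticipate is not the rate derivation, which is mechanical once the template is matched, but the careful bookkeeping that legitimizes the reduction: confirming that $\constraintOne$ is genuinely closed and convex, and that setting $\mathbf{G}, \mathbf{Q} = 0$ collapses Beck's augmented subproblems to the pure projections onto $\constraintThree$ and $\constraintOne$ used in the algorithm. Equally delicate is securing the dual optimum $Y^*$, since finiteness of $c_2$ hinges on \emph{attainment} of the dual maximum; I would want to confirm that the linearity of the objective together with $\constraintAll \neq \emptyset$ suffices and that no constraint qualification beyond feasibility is demanded by Beck's formulation.
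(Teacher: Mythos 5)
Your proposal is correct and matches the paper's own route exactly: the paper proves this theorem simply by invoking Theorem 15.4 of \citet{beck_first-order_2017} with the quadratic terms $\mathbf{G},\mathbf{Q}$ set to zero and verifying Assumption 15.2 for $f$, $\constraintThree$, and $\constraintOne$, with dual attainment of $Y^*$ secured from linearity and $\constraintAll \neq \emptyset$, just as you do. Your additional bookkeeping (convexity of $\constraintOne$ as a polyhedron, the averages $\bar{X}_t,\bar{Z}_t$ remaining in the respective convex sets so the objective gap collapses to $f(\bar{X}_t)-f^*$) is a faithful expansion of the same argument rather than a different one.
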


\begin{proposition}[Iteration complexity]
    \label{prop:iteration-complexity}
Algorithm~\ref{alg:admm}, given linear costs $D$ and   penalty parameter $\rho \geq 0$, initialized with $Z_0 = M_0 = \mathbf{0}$, achieves error $f(\bar{X}_t) - f^* \leq \delta$, in $\bigO{\norm{D}{\infty}/\delta}$ iterations.
\end{proposition}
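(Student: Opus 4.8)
The plan is to obtain Proposition~\ref{prop:iteration-complexity} as a corollary of Theorem~\ref{thm:convergence-admm}: the substantive work is to eliminate the free constants $c_1,c_2,\rho$ by an optimal choice and then to bound the two surviving problem quantities $\frobenius{\tplan^*}$ and $\frobenius{Y^*}$. First I would insert the initialization $Z_0 = M_0 = \mathbf{0}$ into Theorem~\ref{thm:convergence-admm} and take the smallest admissible constants, namely $c_2 = 2\frobenius{Y^*}$ and $c_1 = \frobenius{\tplan^*}^2 + 4\frobenius{Y^*}^2/\rho^2$. The primal bound then reads $f(\bar{X}_t) - f^* \leq \rho c_1/(2(t+1))$, so enforcing the right-hand side to be at most $\delta$ requires $t+1 \geq \rho c_1/(2\delta)$; the whole iteration count is thus controlled by $\rho c_1 = \rho\,\frobenius{\tplan^*}^2 + 4\frobenius{Y^*}^2/\rho$.

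The next step is to minimize this quantity over the penalty $\rho>0$. As a sum of a term linear in $\rho$ and one proportional to $\rho^{-1}$, the arithmetic--geometric mean inequality shows that $\rho c_1$ is minimized at $\rho = 2\frobenius{Y^*}/\frobenius{\tplan^*}$, where it equals $4\,\frobenius{\tplan^*}\,\frobenius{Y^*}$. Hence it suffices to run Algorithm~\ref{alg:admm} for $t = \bigO{\frobenius{\tplan^*}\,\frobenius{Y^*}/\delta}$ iterations, and it remains only to show $\frobenius{\tplan^*} = \bigO{1}$ and $\frobenius{Y^*} = \bigO{\norm{D}{\infty}}$.

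The primal bound is immediate. Since $\tplan^* \in \tpoly$, its entries are non-negative and satisfy $\sum_{ij}\tplan^*_{ij} = \mOnes{m}^T a = 1$, so each entry lies in $[0,1]$ and therefore $\frobenius{\tplan^*}^2 = \sum_{ij}(\tplan^*_{ij})^2 \leq \sum_{ij}\tplan^*_{ij} = 1$, giving $\frobenius{\tplan^*}\leq 1$.

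The main obstacle is the dual bound $\frobenius{Y^*} = \bigO{\norm{D}{\infty}}$. Here I would invoke the KKT conditions of the saddle point underlying~\eqref{eqn:dual-function}. Stationarity of the $X$-block over the \emph{affine} set $\constraintThree$ forces $D + Y^*$ to be orthogonal to the tangent space of $\constraintThree$ at $\tplan^*$, so $Y^* = -D + u\mOnes{n}^T + \mOnes{m}v^T$ for some $u \in \Real^m$, $v \in \Real^n$; stationarity of the $Z$-block forces $Y^*$ to lie in the normal cone of $\constraintOne$ at $\tplan^*$. Intersecting these two conditions and selecting the minimum-norm dual certificate, one controls the row/column correction $u\mOnes{n}^T + \mOnes{m}v^T$ using $0 \le D$ and $\norm{D}{\infty}<\infty$, yielding $\frobenius{Y^*} = \bigO{\norm{D}{\infty}}$; substituting this and $\frobenius{\tplan^*}\le 1$ into $t = \bigO{\frobenius{\tplan^*}\,\frobenius{Y^*}/\delta}$ finishes the argument. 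I expect this dual estimate to be the delicate point, because the optimal multipliers of a linear program are in general governed by the geometry of the active constraints rather than by the cost magnitude alone, so the bound must genuinely exploit the polyhedral structure of $\constraintThree \intersect \constraintOne$ together with the non-negativity of $D$.
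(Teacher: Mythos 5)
Your reduction to Theorem~\ref{thm:convergence-admm} is sound and, up to the dual estimate, follows the same skeleton as the paper's proof: set $Z_0=M_0=\mathbf{0}$, use $\frobenius{\tplan^*}\le 1$ (the paper relies on exactly this, since the entries of $\tplan^*$ are non-negative and sum to one), and choose $\rho$ to balance the two terms in $\rho c_1$. Your AM--GM choice $\rho = 2\frobenius{Y^*}/\frobenius{\tplan^*}$ is a slightly sharper version of the paper's choice $\rho=c_2$, and both reduce the claim to showing $\frobenius{Y^*}=\bigO{\norm{D}{\infty}}$.

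The gap is in that last step. Your KKT sketch correctly identifies the two stationarity conditions ($Y^*=-D+u\mOnes{n}^T+\mOnes{m}v^T$ from the affine block $\constraintThree$, and $Y^*$ in the polar/normal cone of $\constraintOne$ at $\tplan^*$), but the sentence ``one controls the row/column correction \dots using $0\le D$ and $\norm{D}{\infty}<\infty$'' is precisely the assertion that needs proof, and KKT alone will not deliver it: as you yourself note, minimal-norm LP multipliers are governed by the geometry of the active constraints, so any bound of the form $\frobenius{Y^*}\le C\,\norm{D}{\infty}$ necessarily has $C$ depending on the polytope (on $m,n,k,a,b$), and your sketch gives no route to such a $C$. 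The paper sidesteps this analysis entirely with a homogeneity observation: the feasible sets $\constraintThree$ and $\constraintOne$ do not depend on $D$, $\constraintOne$ is a cone, and the objective in \eqref{eqn:dual-function} is jointly linear in $(D,Y)$, so $Y^*(\alpha D)=\alpha\,Y^*(D)$ for $\alpha>0$; hence $\frobenius{Y^*(D)}\le c\,\norm{D}{\infty}$ with $c:=\sup\left\{\frobenius{Y^*(D')}:\norm{D'}{\infty}=1\right\}$ a constant independent of the magnitude of the cost (its finiteness, which the paper also leaves implicit, can be justified by selecting a basic dual optimizer from the finitely many dual bases of the fixed polyhedron, each of which depends linearly on the cost). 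Replacing your normal-cone analysis by this scaling argument --- or completing the KKT route with such a finite-basis/compactness argument --- closes the proof; as written, the key estimate is asserted rather than established.
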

\begin{proof}
We seek to estimate $\rho c_1$ that bounds the error $f(\bar{X}_t) - f^*$ in Thm. \ref{thm:convergence-admm}.
   First, consider the constant $c_1$.
   Setting $Z_0 = M_0 = \mathbf{0}$ gives $\norm{Z_0 - \tplan^*}{F}^2 \leq 1$ and  $\norm{M_0}{F}=0$, then Thm. \ref{thm:convergence-admm} requires
$c_1 \geq 1 + (c_2/\rho)^2$;
 hence if we set the penalty $\rho = c_2$ then $c_1 \geq 2$. 
 Next, it remains to estimate $\rho$, or equivalently, $c_2$.
 Thm. \ref{thm:convergence-admm} requires $c_2$ to be at least the largest possible norm that an optimizer of the dual \eqref{eqn:dual-function} can take. 
 To this end if, writing $Y^*(D')$ for the optimizer of \eqref{eqn:dual-function} given costs $D'$,
 it suffices to consider a constant $c$ satisfying $c \geq \norm{Y^*(D')}{F}$ for any normalized
 \footnote{
    The dual $d$ in \eqref{eqn:dual-function} is scale-invariant, in the sense that if
    $Y^*$ optimizes \eqref{eqn:dual-function} for costs $D$, then for any $\alpha > 0$,
    we have $\alpha Y^*$ optimizes the same for 
    costs $\alpha D$.
} 
cost $D'$  with $\norm{D'}{\infty} = 1$, and then picking $c_2$ such that $c_2 \geq 2 \norm{D}{\infty} c $.
Hence we arrive at an estimate for $\rho c_1$ to be 
in $\bigO{\norm{D}{\infty}}$. 
Thus we conclude the error is at most $\delta$ under the claimed number of iterations.
\end{proof}

%Thus we conclude $f(\bar{X}_t) - f^* \leq \rho c_1 / (2t + 2) \leq \delta $ holds under the claimed number of iterations.

Prop.~\ref{prop:iteration-complexity} holds for the choice\footnote{
 For the choice $\rho=c_2$, it follows that the error $\norm{\bar{X}_t - \bar{Z}_t}{F} $ in Thm.~\ref{thm:convergence-admm} also drops linearly with $t$.
} $\rho=c_2$, but however $c_2$ is unknown in practice; the common adage is to simply set $\rho=1$,~\citep[see][]{admm}.
Invoking both Propositions~\ref{prop:iteration-complexity} and~\ref{prop:complexity} below, we estimate the total operation complexity as $\bigO{\norm{D}{\infty} /\delta \cdot  mn \log mn}$.

\begin{proposition}
    \label{prop:complexity}
    
    One round of Algorithm~\ref{alg:admm} runs in $\bigO{ mn \log(mn)}$ time.
\end{proposition}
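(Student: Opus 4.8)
The plan is to bound the cost of a single iteration of Algorithm~\ref{alg:admm}, which comprises the two projection updates together with the dual update $M_{t+1}=M_t+X_t-Z_t$, and to show that every piece is dominated by an $\bigO{mn\log(mn)}$ term arising from a single sort. First I would dispose of the cheap pieces. Forming the argument $Z_t - M_t - \rho^{-1}D$ of $\proj{\constraintThree}{\cdot}$ and performing the dual update are plain entrywise operations on $m\times n$ arrays, hence $\bigO{mn}$. For the projection onto $\constraintThree$ I would invoke Proposition~\ref{prop:euclidean-projector-sums}, which gives the closed form $\hat X = Y_1 + (X-Y_2)$. The key observation is that $\meanproj{m}=\frac1m\mOnes{m}\mOnes{m}^T$ and $\meanproj{n}$ are rank one, so $M=\mIdentity{m}-\frac{m}{m+n}\meanproj{m}$ and $N$ never need be materialised: $Y_1$ is assembled from $Ma$ and $Nb$, each a rank-one correction of $a$ (resp.\ $b$) computable in $\bigO{m}$ (resp.\ $\bigO{n}$) and then broadcast into an $m\times n$ array in $\bigO{mn}$; likewise $X P_n$ and $P_m X$ are rank one (a row/column-sum vector times $\mOnes{}^T$), and applying $M,N$ to them is again a rank-one update. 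Thus every product in $Y_1,Y_2$ reduces to matrix--vector or outer-product work, and the $\constraintThree$-projection costs $\bigO{mn}$ rather than the naive $\bigO{m^2n}$.

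It remains to bound the projection onto $\constraintOne=\nijdx{\order}{i}{j}{k}$ realised by ePAVA (Algorithm~\ref{alg:pava}), which I expect to be the main obstacle and the source of the $\log$ factor. I would first isolate a one-time preprocessing stage: sort the $mn-k+1$ coefficients indexed by $\langidx\union\{\ij{i_1}{j_1}\}$ to obtain the order statistics $\odidx{X}{s}$ used in \eqref{eqn:order-threshold-tau}--\eqref{eqn:order-threshold-t}, accumulate their prefix sums, and separately accumulate prefix sums of the $\ijdx{X}{i_\ell}{j_\ell}$ so that every block average $\ijdx{\pmean}{p}{q}$ is available in $\bigO{1}$. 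This stage is $\bigO{mn\log(mn)}$ and dominates everything else. After preprocessing, each evaluation of $\tau(s,\eta)$ in \eqref{eqn:order-threshold-tau} is $\bigO{1}$ from the prefix sums, so the only remaining question is how expensively $t(\eta)$ in \eqref{eqn:order-threshold-t} and the threshold $T(\cdot)$ are evaluated inside the loop.

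For that I would establish the two monotonicity facts that make the searches logarithmic. As $s$ grows the running average $\tau(s,\eta)$ is nonincreasing while $\odidx{X}{s}$ is nonincreasing, so the predicate $\tau(s,\eta)>\odidx{X}{s}$ defining $t(\eta)+1$ flips at most once; hence $t(\eta)$, and therefore $T(\eta)$, is computable by binary search over $s\in\iset{r(\eta)}$ in $\bigO{\log(mn)}$. The same structure handles the $\bl=2$ branch of the inner loop: $T(\eta)$ is continuous, piecewise linear and monotone in $\eta$ (its breakpoints are exactly the $\bigO{mn}$ values of $\eta$ at which $t(\eta)$ changes), while the right-hand side $\ijdx{\pmean}{2}{q}+\tilde\eta/(q-1)$ is affine and increasing, so the equation $T(\tilde\eta)=\ijdx{\pmean}{2}{q}+\tilde\eta/(q-1)$ has a unique root locatable by binary search over breakpoints in $\bigO{\log(mn)}$. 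Finally I would invoke the standard amortised PAVA accounting: the outer loop pushes exactly $k$ blocks, so the inner loop performs at most $k$ merges in total, each costing one $\bigO{1}$ average update or one $\bigO{\log(mn)}$ root solve, summing to $\bigO{k\log(mn)}\subseteq\bigO{mn\log(mn)}$. Reconstructing $\hat X$ from the returned blocks via the entrywise characterisation in Proposition~\ref{prop:euclidean-projector-order} touches each of the $mn$ entries once, $\bigO{mn}$. Adding the three updates, the round is $\bigO{mn\log(mn)}$, the sort and the per-merge binary searches being the only superlinear contributions; the delicate point throughout is the monotonicity of $\tau(s,\eta)$ in $s$ and of $T$ in $\eta$, without which the searches would degrade to linear scans and inflate the bound.
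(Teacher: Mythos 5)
Your proposal is correct and follows essentially the same route as the paper's proof: the same decomposition of a round into the two projections plus the dual update, the same rank-one expansion of $\meanproj{m},\meanproj{n}$ reducing the $\constraintThree$-projection to $\bigO{mn}$ matrix--vector work, and the same one-time $\bigO{mn\log(mn)}$ sort plus amortized merge accounting for ePAVA. Where the paper is terse --- it cites \citet{grotzinger_projections_1984} for an $\bigO{k}$ cost of the coalescing lines and asserts that the one-time sort suffices for the repeated $T(\eta)$ evaluations --- you supply explicit prefix-sum and binary-search machinery, yielding $\bigO{k\log(mn)}$ for the merges instead of $\bigO{k}$, which is still within the claimed budget; your use of the piecewise-linear monotonicity of $T$ for the Line-7 root solve is exactly what the paper's Lemma~\ref{order:lem:tau-convex} establishes. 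One repair is needed: your justification that $\tau(s,\eta)$ is nonincreasing in $s$ is false in general (a running average increases whenever the next order statistic exceeds it), and two nonincreasing sequences can in any case cross more than once, so the stated reason for the single flip of the predicate does not stand. The binary search for $t(\eta)$ is nevertheless valid because the predicate $\tau(s,\eta)>\odidx{X}{s}$ is absorbing: if it holds at $s$, then $\tau(s+1,\eta)=\bigl((s+1)\tau(s,\eta)+\odidx{X}{s+1}\bigr)/(s+2)$ is a convex combination of two quantities that are each at least $\odidx{X}{s+1}$, so it holds at $s+1$; with this one-line substitute your argument goes through and matches the paper's conclusion.
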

\begin{proof}
   %Lines 2 and 3 in Alg.~\ref{alg:admm} are  in $\bigO{mn}$.  
   Line 2 costs $\bigO{mn}$ for $\constraintThree$ and Line 4 costs $\bigO{k + (mn-k) \log (mn-k)}$ for $\constraintOne$. 
   The update in Line 2 requires only $\bigO{mn}$.
   Note that for $ \proj{\constraintThree}{\cdot}$ the expressions~\eqref{eqn:rowcolsum-Y2} require only matrix-vector multiplications with $\bigO{mn}$ complexity, as follows.
   For $Y_1$ this is clear from~\eqref{eqn:rowcolsum-Y2}. 
For $Y_2$, consider the left term $M(XP_n)$; then the other term $(P_m X) M$ similarly follows.
Putting $M = \mIdentity{m} - \frac{m}{m+n} \meanproj{m}$ and $P_k = \frac{1}{k}\mOnes{k}\mOnes{k}^T$ for $k=m,n$, we get that $M(XP_n)$ can be expanded (ignoring scaling factors) into two terms $X \mOnes{n} \mOnes{m}^T$ and $\mOnes{m}^T (\mOnes{m} X \mOnes{n}) \mOnes{n}^T$. Hence  the claim holds given that $\mOnes{m}, \mOnes{n}$ are vectors.

  For $\constraintOne$, observe that computing $T(\eta) = \tau(t(\eta),\eta)$ multiple times in Alg.~\ref{alg:pava} (Line 7),  requires a one-time sort of $mn-k+1$ terms and  Lines 5-10 incur an additional  $\bigO{k}$  following arguments in~\citet{grotzinger_projections_1984}.
\end{proof}

Care should to be taken in comparing our  complexity, $\bigO{\norm{D}{\infty} /\delta \cdot  mn \log mn}$, to that of   OT methods that use  rounding approximations  \citet{altschuler_near-linear_2017}
to ensure feasibility of each iterate. Such methods, which use KL-divergence, are $\bigO{n^2 / \delta \cdot \norm{D}{\infty} }$ for $m=n$, 
\citep[see][]{guminov_combination_2021,jambulapati_direct_2019}.
For order constraints like other structured OT approaches~\citep[e.g.][]{scetbon_low-rank_2021},
a rounding method for $\tpoly \intersect \nijdx{\order}{i}{j}{k}$ is not available. However, we have the bound on $\norm{\bar{X}_t - \bar{Z}_t}{F}$ in Thm.~\ref{thm:convergence-admm} to quantify the feasibility of the iterates $X_t,Z_t$ from $\constraintAll$.

% -------------  END - SOLVING OC --------------------

% -------------  START - EXPLAINABLITY --------------------

\section {Explainability via Branch-and-Bound}

We have thus far assumed that the structure  captured by the  optimal transport plan via  order constraints has been provided externally. In some settings, the structure is not provided and one must estimate the most important variates to define the corresponding constraints. With the goal of generating  plans from which a one can select, we propose an explainable and efficient approach using branch \& bound  to compute \emph{a diverse set of  optimal transport plans}. 

The approach successively computes a bound on the best score that can be obtained with the variates currently fixed;  if it cannot improve the best known score, the branch is cut  and  the next branch, i.e. the next set of fixed variates, is explored. 
Consider the $k$ order constraints~\eqref{eqn:transport-plan-ordering-constraints} enforced by the variate $\nij{i}{j}{k}$, and further consider  introducing an additional variate $\ij{i}{j} \in \ijset{m}{n} $ such that 
%$\ijdx{\tplan}{i_1}{j_1} \geq \ijdx{\tplan}{i}{j}$ is equivalent to the constraints
$
\ijdx{\tplan}{i_k}{j_k} \geq \cdots \geq \ijdx{\tplan}{i_1}{j_1} 
\geq \ijdx{\tplan}{i}{j}
\geq 
\ijdx{\tplan}{p}{q}
$
for 
$
\ij{p}{q} \in \langidx
$,
see~\eqref{eqn:transport-plan-ordering-constraints}.
This manner of introducing variates one-at-a-time, can be likened to variable selection  and formalized by a tree structure. On the tree $\searchtree$,  each node $\nij{i}{j}{k}$ has 
children $\nij{i'}{j'}{k+1}=\ij{i}{j},\mij{i}{j}{k}$ that share the same top-$k$ constraints;  
the root node corresponds to unconstrained OT, and has children at level $k=1$ corresponding to single order constraint variates $\nij{i}{j}{1}=\ij{i}{j}$.
We would like the procedure to always select all ancestors of a given node before selecting the node itself.
Hence, the root node should be selected first.

The branch selection aimed at increasing diversity of the plans is guided by two threshold parameters $\tau_1, \tau_2$ that lie between 0 and 1.
%and an integer $k_3$ that limits the tree-depth by discarding any node $\nij{i}{j}{k}$ for which the number of constraints $k > k_3$.
For any given node $\nij{i}{j}{k}$, the parameters $\tau_1, \tau_2$ limit its children to those that only deliver reasonably likely transport plans,
%when putting the additional constraint at $\ij{i}{j}$, 
as follows.

%%%%%%%%%%%%%%%%%%%%%%%%%

\begin{algorithm}[!t]
    \caption{
        Learning subtree $\esttree(k_1,k_2,k_3,\tau_1, \tau_2)$ of $\searchtree(k_3,\tau_1, \tau_2)$ 
        and top-$k_2$ candidate plans
        for linear costs $f(\tplan) =  \trace{D^T \tplan}$.
    }
    \label{alg:heuristic-search}
    \begin{algorithmic}[1]
        \REQUIRE Costs $D$, 
        thresholds $0 \leq \tau_1, \tau_2 \leq 1$. 
        Search upper limit $k_1$, number of top candidates $k_2$, and search depth $k_3 \leq \min(m,n)$.
        \STATE Compute $\tidx{\hat{\tplan}}{1}$ using~\eqref{eqn:optimal-transport}.
        Init $\esttree(k_1,k_2,k_3,\tau_1,\tau_2)$.
        \STATE Use $\tidx{\hat{\tplan}}{1}$, $\tau_1,\tau_2$ in 
        ~\eqref{eqn:heuristic-statistic-triple}  and~\eqref{eqn:heuristic-statistic}
        to obtain $\variates$ and $\ijdx{\heuristic}{i}{j}$. 
        Init.  $\stack = \{ (\ij{i}{j}, \ijdx{\heuristic}{i}{j}): \ij{i}{j} \in \variates \}$.
         \texttt{count=0}.
        % \FOR{$k_1$ multi-variates to be popped from stack $\stack$}
        \FOR{\texttt{count} $< k_1$}
            \STATE Pop $\nij{i}{j}{k}$ having smallest $\heuristic$ in $\stack$, for some $k$ constraints. Compute
             $\mathcal{L}$ from  right-hand side of~\eqref{eqn:final-bound}. 
            \IF{$\tidx{\hat{\tplan}}{k_2}$ is not yet obtained or $\mathcal{L} > f(\tidx{\hat{\tplan}}{k_2}) $ }
                \STATE Solve Algorithm~\ref{alg:admm} with order constraint $\nijdx{\order}{i}{j}{k}$ for new candidate $\hat{\tplan}$.
                Set \texttt{count += 1}.
                \STATE 
                Update top-$k_2$ candidates $\tidx{\hat{\tplan}}{1}, \tidx{\hat{\tplan}}{2}, \cdots, \tidx{\hat{\tplan}}{k_2}$ and $\esttree(k_1,k_2,k_3,\tau_1,\tau_2)$ using new candidate $\hat{\tplan}$.
            \ENDIF
            \IF{$k$ equals $k_3$}
                \STATE Go to line 4.
            \ENDIF
            \IF{$\tidx{\hat{\tplan}}{k_2}$ not yet obtained or $f(\hat{\tplan}) < f(\tidx{\hat{\tplan}}{k_2}) $ }
                \STATE Use $\hat{\tplan}$, $\tau_1,\tau_2$ in 
        ~\eqref{eqn:heuristic-statistic-triple}  and~\eqref{eqn:heuristic-statistic} and obtain
                new variates $\ij{i}{j} \in \variates(\hat{\tplan})$ and $\{\ijdx{\heuristic}{i}{j}\}_{\ij{i}{j} \in \variates(\hat{\tplan})}$.
                \FOR{variate $\ij{i}{j}$ in $\variates(\hat{\tplan}) $}
                    \IF{
                        %$i \notin \{i_\ell: \ell \in \iset{k} \}$ and
                        %$j \notin \{j_\ell: \ell \in \iset{k} \}$ 
                        $i \notin \ni{i}{k}$ and
                        $j \notin \ni{j}{k}$
                    }
                        \STATE Push $ (\ij{i}{j},\ij{i_1}{j_1},\cdots,\ij{i_k}{j_k} , \ijdx{\heuristic}{i}{j})$ onto stack $\stack$.
                    \ENDIF
                \ENDFOR
            \ENDIF
        \ENDFOR
        \STATE Return top $k_2$ candidates $\tidx{\hat{\tplan}}{1}, \tidx{\hat{\tplan}}{2}, \cdots, \tidx{\hat{\tplan}}{k_2}$ and $\esttree(k_1,k_2,k_3,\tau_1,\tau_2)$.
    \end{algorithmic}
\end{algorithm}

For example, suppose that $\tplan \in \tpoly$ is the solution of~\eqref{eqn:optimal-transport-order-constraint2} 
obtained from the variate $\nij{i}{j}{k}$.
To determine the children of $\nij{i}{j}{k}$, first
compute saturation levels normalized between $0$ and $1$ given as $\ijdx{\saturation}{i}{j} =  \ijdx{\tplan}{i}{j} / \min(a_i,b_j)$, to derive
\begin{eqnarray}
    \left(
        \ijdx{\selfheu}{i}{j},
        \ijdx{\rowheu}{i}{j},
        \ijdx{\colheu}{i}{j}
    \right)
    &=&
    \left(
        \ijdx{\saturation}{i}{j},
        \max_{\ell \in \iset{n}:~\ell\neq i} \ijdx{\saturation}{i}{\ell},
        \max_{\ell \in \iset{m}:~\ell\neq j} \ijdx{\saturation}{\ell}{j}
    \right)
    \nonumber \\
    \ijdx{\heuristic}{i}{j} &:=& \min(
        \ijdx{\rowheu}{i}{j},
        \ijdx{\colheu}{i}{j}
    ).
    \label{eqn:heuristic-statistic-triple}
\end{eqnarray}
Low saturation values in~\eqref{eqn:heuristic-statistic-triple} imply uncertainty in the assignments $\tplan$. 
The thresholds $\tau_1, \tau_2$ are used to determine the set $\variates(\tplan)$ that filters those variates   $\ij{i}{j}$
with low self saturation ($\tau_1$) and low neighborhood  saturation ($\tau_2$):
\begin{equation}
    \variates(\tplan) := \{
        \ij{i}{j} \in \ijset{m}{n}:  \ijdx{\selfheu}{i}{j} \leq \tau_1, \ijdx{\heuristic}{i}{j} \leq \tau_2
    \}. \label{eqn:heuristic-statistic}
\end{equation}
The children of $\nij{i}{j}{k}$ are obtained as $\nij{i'}{j'}{k+1} = \ij{i}{j},\nij{i}{j}{k} $ for all $\ij{i}{j} \in \variates(\tplan)$.

The proposed explainable approach adds diversity  by learning the top plans. Let us define $k_1, k_2, k_3$ as follows: The  number of nodes  constructed while learning a subtree is at most $k_1$, the number of top plans to retain is $k_2$, and $k_3$, as noted above, limits the tree depth. 
The branch-and-bound search proceeds on the reduced tree given by
 $\searchtree(k_3, \tau_1, \tau_2)$, of depth $k_3$, 
 containing nodes~\eqref{eqn:heuristic-statistic} whose saturation lie beneath the thresholds $\tau_1,\tau_2$.
 We thus wish to learn the
top-$k_2$  plans given by  subtree, $\esttree(k_1,k_2,k_3,\tau_1, \tau_2)$, and
 identifying the nodes of $\searchtree(k_3, \tau_1, \tau_2)$ that are in 1-to-1 correspondence with those top-$k_2$  plans.
The tree $\searchtree(k_3, \tau_1, \tau_2)$ is constructed dynamically, and 
 avoids redundantly computing plans $\tplan$ corresponding to variates $\nij{i}{j}{k}$ if the $k_2$-best candidate  has cost lower than either: i) a known lower bound $\mathcal{L} \leq f(\tplan) $, or ii) the cost of $\nij{i}{j}{k}$ parent's plan.
Finally, in the construction of $\searchtree(k_3, \tau_1, \tau_2)$ we conly consider variates
where $\ni{i}{k}$ and $\ni{j}{k}$ do not repeat indices.

Alg.~\ref{alg:heuristic-search} summarizes the branch-and-bound variable selection procedure.
Upon termination, a diverse set of at most $k_1$ plans are computed   and the $k_2$ plans identified by $\esttree(k_1,k_2,k_3,\tau_1, \tau_2)$ are the top-$k_2$ amongst these $k_1$ plans, as the most uncertain variates are handled first.

%%%%%%%%%%%%%%%%%%%%%%%

\begin{figure}
    \centering
    \includegraphics[width=.95\columnwidth]{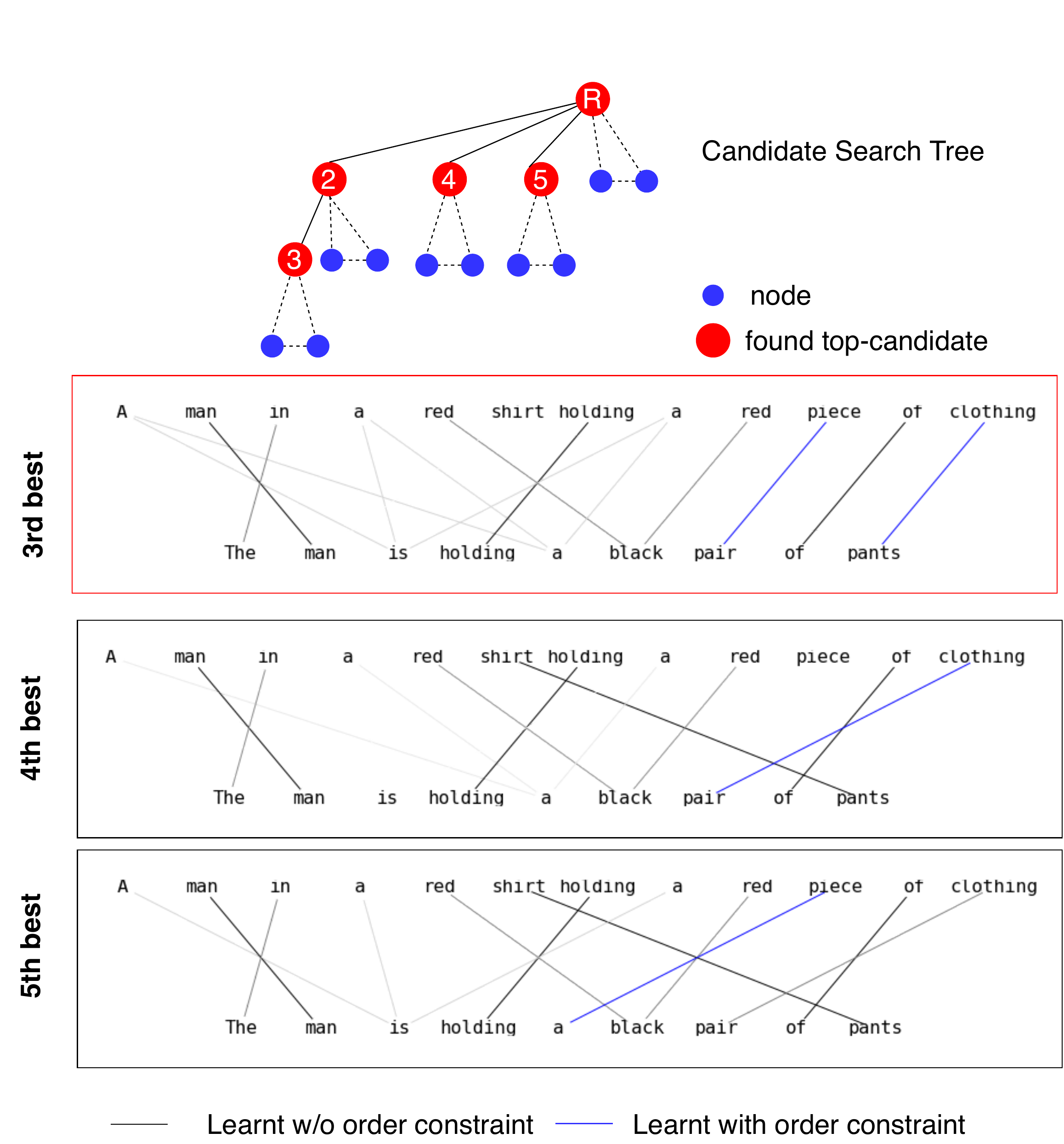}
    %\caption{Candidate search tree  for $k_2=5$ best candidates for multiple order constraints.}
    \caption{Top $k_2=5$ candidates for OT with multiple order constraints for the example in Fig.~\ref{fig:intro-example}}
    \label{fig:multi-oc-search-tree}
\end{figure}

\begin{figure*}
    \centering
    \includegraphics[width=0.85\textwidth]{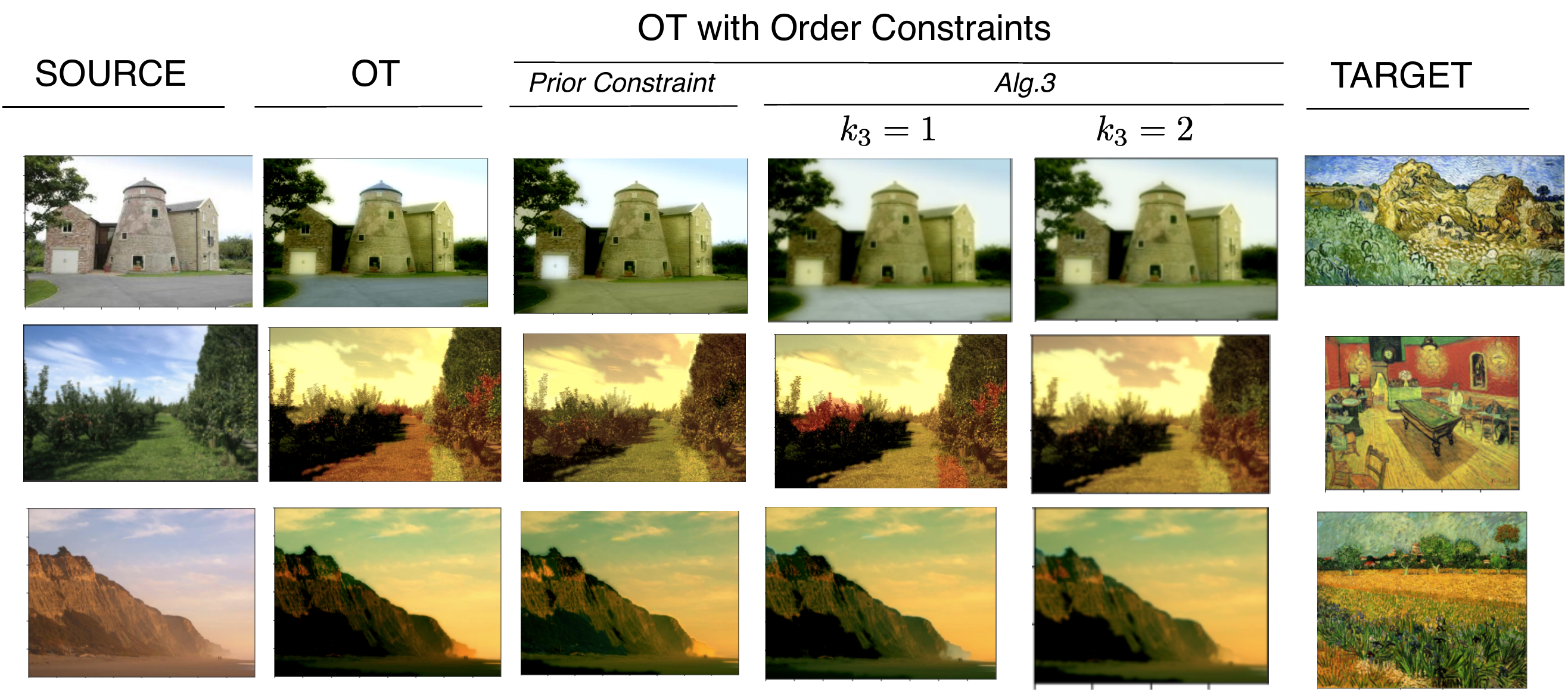}
    \caption{Color transfer candidates obtained using OT with OC (Alg.~\ref{alg:heuristic-search})}
    \label{fig:expr-color-transfer}
\end{figure*}

It remains  to define the lower bound $\mathcal{L} \leq f(\tplan) $ (Line 5); \citet{kusner_word_2015} proposed a bound for~\eqref{eqn:optimal-transport}, which we extend
to the order constrained case~\eqref{eqn:optimal-transport-order-constraint2} as follows:
\begin{eqnarray}
    &\min_{
        \tplan \in \tpoly \intersect 
        \nijdx{\order}{i}{j}{k}
    }
    f(\tplan)
    \geq 
    \nonumber \\
    &~~~\min_{
        \alpha \leq x \leq \beta
    } 
    \left[
    \nijdx{\submodularub}{i}{j}{k}
    % \ijdx{\tplan}{i}{j}
    \cdot x
    +
    \nijdx{\tailfun}{i}{j}{k}
    (x, D)
    \right]
    \label{eqn:optimal-transport-order-constraint-lb}
\end{eqnarray}
for some suitable $\alpha, \beta$,
where $\langidx$ in~\eqref{eqn:transport-plan-ordering-constraints}, and where 
$
    \nijdx{\submodularub}{i}{j}{k} = \sum_{\ell \in \iset{k}} \ijdx{D}{i_\ell}{j_\ell},
$ 
and
for coefficients $\submodularlb \in \mnReal$,
\begin{equation}
    \nijdx{\tailfun}{i}{j}{k}(x,\submodularlb)
    := 
    \min_{
        \tplan \in \tpoly \intersect 
        \nijdx{\order}{i}{j}{k} :~
        \ijdx{\tplan}{i_\ell}{j_\ell} = x,~\forall \ell \in \iset{k}
    }
    \sum_{
        \ij{p}{q} \in \langidx
    }
    \ijdx{\tplan}{p}{q}
    \ijdx{\submodularlb}{p}{q}.
    \nonumber 
    \label{eqn:optimal-transport-order-constraint-tailfunction}
\end{equation}
We  decouple the row and column constraints in
\eqref{eqn:optimal-transport-order-constraint-tailfunction} and  solve
$\knapsack{
    \{
        \idx{\boundcoef}{i}
    \}_{i \in \iset{n}}, u, \alpha
}$: 
\begin{eqnarray}
    \min_{x \in \Real^n} \sum_{i \in \iset{n}} 
       \idx{\boundcoef}{i}
       \idx{x}{i}
    ~~~\mbox{s.t.}~~~~\sum_{i \in \iset{n}} x_i = \alpha,
    %\nonumber \\
    ~~~ 0 \leq x_i \leq u.
    \label{eqn:packing}
\end{eqnarray}
Here $u$ and $\alpha$ in~\eqref{eqn:packing} represent the per item $\idx{x}{i}$ capacity and total budget, respectively.
Define
\begin{eqnarray}
    \knapmu\left(u, \{\boundcoef\}_{i \in \iset{n}}, \alpha \right)
    \!\!\!\!\! &=& \!\!\!\!\!
        \knapsack{
            \{\boundcoef\}_{i \in \iset{n}},
            u, \alpha } 
     \\ % mu-function
    \knapnu\left(u, \{\boundcoef\}_{i \in \iset{n-1}}, \alpha \right) 
    \!\!\!\!\! &=& \!\!\!\!\! 
        \knapsack{
            \{\boundcoef\}_{i \in \iset{n-1}},
            u, \alpha - u}.  % nu-function
  \nonumber  \label{eqn:function-mu-and-nu} 
\end{eqnarray}

\begin{proposition}
    \label{prop:tailfun-bound}
    Let
    $
        \alpha_1 = \max_{i=1}^m \frac{a_i}{n},
        \beta_1 = \max_{i=1}^m a_i,
    $, and
    $
        \alpha_2 = \max_{j=1}^n \frac{b_j}{m},
        \beta_2 = \max_{j=1}^n b_j.
    $
    Then for any $\nij{i}{j}{k}$ set enforcing~\eqref{eqn:optimal-transport-order-constraint2} where $\ni{i}{k}=i_1,i_2,\cdots, i_k$ (and $\ni{j}{k}$) do not repeat row (or column) indices, the quantity $\nijdx{\tailfun}{i}{j}{k} (x, \submodularlb)$ appearing in~\eqref{eqn:optimal-transport-order-constraint-lb}
    is lower-bounded by
    \begin{eqnarray}
        \nijdx{\tailfun}{i}{j}{k} (x, \submodularlb)
        \!\!\! &\geq& \!\!\!
        \left\{
            \begin{array}{cc}
                \tailfunlbRow{ijk}{x,\submodularlb,a}, &  \alpha_1 \leq x \leq \beta_1
                \\
                \tailfunlbCol{ijk}{x,\submodularlb,b}, &   \alpha_2 \leq x \leq \beta_2
            \end{array}
        \right.
        \label{eqn:tailfun-bound}
    \end{eqnarray}
    where $\tailfunlbRow{ijk}{x,\submodularlb,a}$ and $\tailfunlbCol{ijk}{x,\submodularlb,b}$  equal, resp.
    \[
        \sum_{\ell \in \iset{k}}
        %\sum_{p \in \ni{j}{k}}
        \knapnu
        \left(x, 
            \{
                \idx{\boundcoef}{\ij{i_\ell}{q}}
            \}_{q \in \iset{n} \setminus \{j_\ell\}},
            a_{i_\ell}
        \right)
        + 
        \sum_{p \notin \ni{i}{k}} 
        \knapmu
        \left( 
            x, 
            \{
                \idx{\boundcoef}{\ij{p}{q}}
            \}_{q \in \iset{n}}, 
            a_p 
        \right)
    \]
   
    \[
        \sum_{\ell \in \iset{k}}
        %\sum_{q \in \ni{j}{k}}
        \knapnu
        \left(
            x, 
            \{
                \idx{\boundcoef}{\ij{p}{j_\ell}}
            \}_{p  \in \iset{m} \setminus \{i_\ell\}}, 
            b_{j_\ell}
        \right)
        %
        %&&+ 
        +
        \sum_{q \notin \ni{j}{k}} 
        \knapmu
        \left(
            x, \{
                \idx{\boundcoef}{\ij{p}{q}}
            \}_{p \in \iset{m}}, 
            b_q
        \right)
    \]
\end{proposition}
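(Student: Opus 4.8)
The plan is to establish each of the two bounds in~\eqref{eqn:tailfun-bound} separately by \emph{relaxing} one of the two marginal constraints that define $\tpoly$; dropping a constraint enlarges the feasible region and hence can only decrease the minimum defining $\nijdx{\tailfun}{i}{j}{k}(x,\submodularlb)$, so each relaxed minimum is a valid lower bound. For the row bound $\tailfunlbRow{ijk}{x,\submodularlb,a}$ I would drop the column-sum constraint $\tplan^T \mOnes{m} = b$, keeping only $\tplan \mOnes{n} = a$, the fixed values $\ijdx{\tplan}{i_\ell}{j_\ell} = x$, and the order constraints $\nijdx{\order}{i}{j}{k}$. The key simplification is that, once every fixed entry equals $x$, the order chain $x \geq \cdots \geq x \geq \ijdx{\tplan}{p}{q}$ collapses to the box constraint $0 \leq \ijdx{\tplan}{p}{q} \leq x$ for each $\ij{p}{q} \in \langidx$, with no ordering imposed among the $\langidx$ entries themselves. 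Consequently the relaxed problem has no coupling across rows and separates into one independent minimization per row.

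Next I would evaluate each per-row subproblem and match it to the packing primitives of~\eqref{eqn:function-mu-and-nu}. Because $\ni{i}{k}$ does not repeat row indices, each ``fixed'' row $i_\ell$ contains exactly one fixed entry, namely $\ijdx{\tplan}{i_\ell}{j_\ell}=x$; the remaining $n-1$ entries of that row lie in $\langidx$, must sum to $a_{i_\ell}-x$, and are each capped at $x$. Minimizing $\sum_{q \neq j_\ell} \ijdx{\tplan}{i_\ell}{q}\,\idx{\boundcoef}{\ij{i_\ell}{q}}$ over this set is exactly the instance~\eqref{eqn:packing} with capacity $u=x$ and budget $a_{i_\ell}-x$, i.e.~$\knapnu(x,\{\idx{\boundcoef}{\ij{i_\ell}{q}}\}_{q \neq j_\ell},a_{i_\ell})$. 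For a ``free'' row $p \notin \ni{i}{k}$ all $n$ entries are in $\langidx$, sum to $a_p$, and are capped at $x$, giving $\knapmu(x,\{\idx{\boundcoef}{\ij{p}{q}}\}_{q \in \iset{n}},a_p)$. Summing over all rows reproduces $\tailfunlbRow{ijk}{x,\submodularlb,a}$ term by term, establishing the first case of~\eqref{eqn:tailfun-bound}.

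The column bound $\tailfunlbCol{ijk}{x,\submodularlb,b}$ follows by the symmetric argument: drop the row-sum constraint $\tplan \mOnes{n}=a$, keep $\tplan^T \mOnes{m}=b$, decouple across columns, and use the non-repetition of $\ni{j}{k}$ so that each fixed column $j_\ell$ holds a single fixed entry. I would also verify feasibility of the invoked packing instances, which is where the stated ranges enter: a free row requires $a_p \leq n x$ (hence $x \geq \alpha_1$) and a fixed row requires $0 \leq a_{i_\ell}-x$ with the residual budget realizable under capacity $x$ (hence $x \leq \beta_1$), with the analogous inequalities giving $\alpha_2 \leq x \leq \beta_2$ for the column case. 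The main obstacle is not any single computation but rather justifying cleanly that a single relaxation simultaneously \emph{preserves a valid lower bound} and \emph{decouples} the problem; concretely, one must confirm that the collapsed order constraints contribute nothing beyond the box $0 \leq \ijdx{\tplan}{p}{q} \leq x$, and that the non-repetition hypothesis is precisely what guarantees each fixed marginal yields one clean $\knapnu$ term rather than a coupled, jointly-constrained subproblem.
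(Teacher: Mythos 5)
Your proposal is correct and takes essentially the same route as the paper: its proof likewise relaxes the feasible set by keeping only the row marginals, the fixed entries $\ijdx{\tplan}{i_\ell}{j_\ell}=x$, nonnegativity, and the cap $\ijdx{\tplan}{p}{q}\leq x$ (the collapsed order chain), then decouples into $m$ independent per-row minimizations identified as $\knapnu$ for the rows $i_\ell$ (each containing exactly one fixed entry, by non-repetition of $\ni{i}{k}$) and $\knapmu$ for rows $p\notin\ni{i}{k}$, with the range $\alpha_1\leq x\leq\beta_1$ read off from feasibility of these packing subproblems. The column bound is handled by the symmetric argument and omitted in the paper, exactly as you propose.
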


The proof is in the Appendix.
Set $\submodularlb = D$ in~\eqref{eqn:tailfun-bound}. Now, from \eqref{eqn:optimal-transport-order-constraint-lb} and Prop.~\ref{prop:tailfun-bound} 
we obtain: %we get %in~\eqref{eqn:tailfun-bound}
\begin{eqnarray}
    &&\min_{
        \tplan \in \tpoly \intersect 
        \nijdx{\order}{i}{j}{k}
    }
    f(\tplan)
    \nonumber
   \\
    &\geq&
    \max 
    \left(
        \begin{array}{c}
            \min_{
                \alpha_1 \leq x \leq \beta_1
            }
            \nijdx{\submodularub}{i}{j}{k} \cdot
            x 
            + 
            \tailfunlbRow{ijk}{x,D,a}
            \\
            \min_{
                \alpha_2 \leq x \leq \beta_2
            }
            \nijdx{\submodularub}{i}{j}{k} \cdot
            x 
            + 
            \tailfunlbCol{ijk}{x,D,b}
        \end{array}
    \right)
    \nonumber 
    \\
    \label{eqn:final-bound}
\end{eqnarray}
Thus \eqref{eqn:final-bound} provides  the lower bound  in Line 4 of Alg.~\ref{alg:heuristic-search}.

% -------------  START - EXPERIMENTS --------------------
\section{Experimental Results}
\label{sec:experiments}

% The \texttt{BestF1@n} is the best achievable annotation \textt{F1} score over the set of $k_2=$ \texttt{n}  best plans..
%  The score is measured by thresholding plan coefficients to binary labels, while ensuring discarding values below threshold maintains the task \texttt{F1} score (75.1 $\pm$ .3). For reference, a (single) OT plan achieves an the annotation still F1s of 64.6 $\pm$ .2
{
\begin{table}
    \begin{center}
        \caption{
            Task and annotation scores  on e-SNLI.  \texttt{BestF1@n} is the best achievable annotation \texttt{F1} score over the set of $k_2=$ \texttt{n}  best plans, measured by setting coefficients  0 or 1 after determining  values under which there is no impact on the  the  task \texttt{F1}  (75.1 $\pm$ .3). Standard OT  achieves 64.5 $\pm$ .3 annotation \texttt{F1}. 
        }
        \label{tab:esnli2}
        \begin{tabular}{|l|c|c|c|} 
        \hline
        \hline
            \multirow{2}{*}{Algorithm} &  \multicolumn{3}{c|}{\texttt{BestF1@n}} \\ 
            \cline{2-4}
             & \texttt{n}$=2$ & \texttt{n}$=5$ & \texttt{n}$=10$ \\
            \hline 
            Algorithm~\ref{alg:heuristic-search} (ours) & 68.1 $\pm$ .2 & 71.2 $\pm$ .3 & 73.7 $\pm$ .2 \\
            \hline
            Greedy version & 67.9 $\pm$ .3 & 68.2 $\pm$ .3 & 68.2 $\pm$ .3 \\
           \hline
           \hline
        \end{tabular}
    \end{center}
\end{table}
}
\begin{figure*}
    \centering
    \includegraphics[width=.75\textwidth]{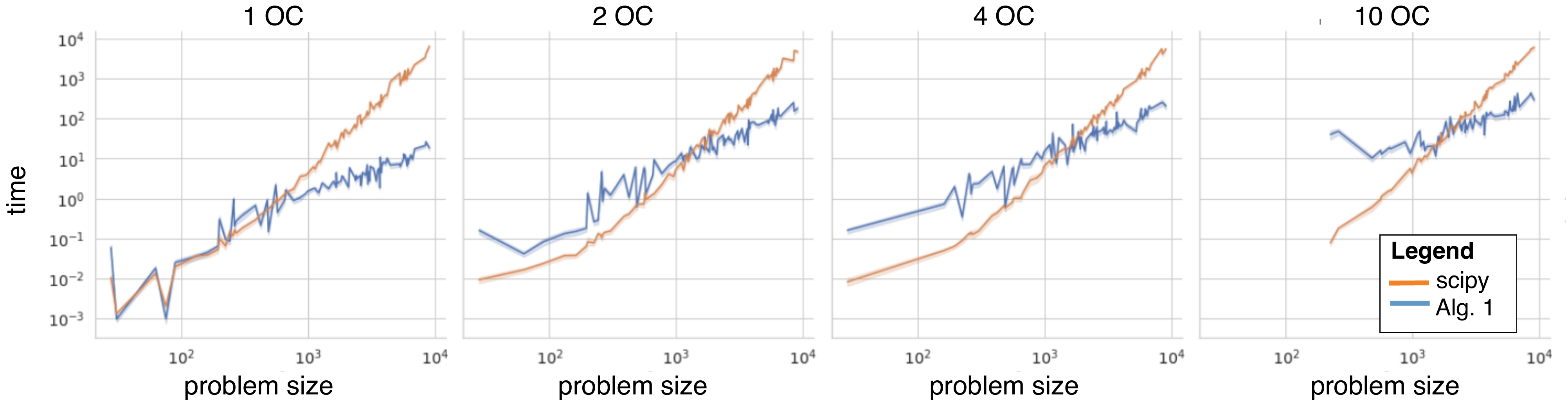}
    \caption{Compute time of Alg.~\ref{alg:admm} compared with \texttt{scipy.optimize}, for various number of constraints.}
    \label{fig:computation}
\end{figure*}

\paragraph{Sentence Relationship Classification:}
We use  an annotated dataset from the \emph{enhanced Stanford Natural Language Inference} (e-SNLI)  \citep{camburu_e-snli_2018,swanson_rationalizing_2020}
that includes English sentence pairs classified as  ``entailment'', ``contradiction'' or ``neutral''.  Annotations  denote which words were used by humans to determine the class.

We use Alg.~\ref{alg:heuristic-search} to compute $k_2$ candidate plans which are  measured against the annotations after being set  to a binary variable:  values above a threshold are set to 1, and below to 0. 
The threshold is determined as the one that maintains the reported task \texttt{F1} score when the  plan weights that lie below the threshold are set to zero. 
We determine the $(\tau_1,\tau_2)$ that constrain $\searchtree(k_3,\tau_1,\tau_2)$ to be $(\tau_1,\tau_2) = (.5,.5)$.
Annotations are provided separately on the source and target  in each pair; each candidate plan is marginalized using \texttt{max} across columns/rows to arrive at a vector of coefficients, and used to compute an annotation \texttt{F1} score against the annotations (after application of the threshold).
The top $k_2 = $ \texttt{n} plans are scored using the \texttt{BestF1@n}  metric, which reports the score of the best plan in the learnt subtree $\esttree(k_1,k_2,k_3,\tau_1,\tau_2)$. More details can be found in the Appendix.

Table~\ref{tab:esnli2} shows the  results  in terms of the  classification task and annotation accuracy along with confidence intervals.  Using even only  a single order constraint, i.e. $k_3=1$, the tree search in Alg.~\ref{alg:heuristic-search} achieves significantly improved explainability using a modest $k_1 = 20$ candidates, giving higher annotation scores over the set of plans. 
As a baseline against which to compare, we implement a greedy algorithm  by  restricting the set $\variates(\tplan)$ of variates in \eqref{eqn:heuristic-statistic} to  contain a single  variate, i.e., modifying  Lines 2 and 11 of Alg. \ref{alg:heuristic-search}. 
The greedy algorithm retains the   variate $\ij{i}{j} \in \variates(\tplan)$ with the lowest saturation~\eqref{eqn:heuristic-statistic-triple}, thus  corresponding to a single tree path. 
For $n \geq 5$, the greedy algorithm results in a candidate set  lacking diversity, as evidenced by the stagnating \texttt{BestF1@n} scores as compared to Alg.~\ref{alg:heuristic-search}.

Fig.~\ref{fig:multi-oc-search-tree} shows the search tree $\searchtree(k_3,\tau_1,\tau_2)$ (blue), the learnt subtree $\esttree(k_1,k_2,k_3,\tau_1,\tau_2)$ (red) and the corresponding  candidate plans 
for the  example of Fig.~\ref{fig:intro-example} for the top $k_2=5$ candidates and  depth increased to $k_3=3$.
See Supp. Mat. for a walk-through of Alg.~\ref{alg:heuristic-search} using  Fig.~\ref{fig:multi-oc-search-tree} and a study of the
effectiveness of the lower bound using a  ``swarmplot'' of  \# of nodes skipped in the search tree, $\searchtree(k_3, \tau_1, \tau_2)$ as tree depth $k_3$ increases.
The numbers in Fig.~\ref{fig:multi-oc-search-tree} indicate the ranking in the top-$k_2$ order upon termination.
 The diversity provided by Alg.~\ref{alg:heuristic-search} allows  the 3rd best solution (in red) to be included in the set of plans, and it is that plan that best explains the contradiction in the two sentences.

\paragraph{Color Transfer}
We use source  images from the SUN dataset~\citep{xiao_sun_2010,yu_lsun_2016}, and  target images from   WikiArt \citep{tan_ceci_2016}.
The  images are segmented and the average RGB colors in each segment are used to obtain  distributions $a,b$ and  costs $D$ in the OT formulations \eqref{eqn:optimal-transport} and \eqref{eqn:obj2}. 
The thresholds that constrain $\searchtree(k_3,\tau_1,\tau_2)$ are set to $(\tau_1,\tau_2) = (.5,1.)$ here. Further details on the problem setup can be found in the Appendix.

Results are shown in Fig.~\ref{fig:expr-color-transfer}. The first OT with OC plan includes a prior (human-crafted) constraint while the latter two provide the auto-generated constraints from  Alg. \ref{alg:heuristic-search}.
The
 OT with OC solutions offer a  range of  images, all of which improve upon the standard OT solution in diverse ways. 
In the first row, standard OT  transfers the blue sky to the ground and the cylindrical roof, while the prior constraint maps the green grass from the painting to the ground and intensifies the sky with the blue. The auto-generated constraints from Alg. \ref{alg:heuristic-search}, especially the $k_3=2$ case, do the same though slightly less.
In the second row, standard OT  awkwardly transferred the red to the grass path. 
The OT with OC solution with a prior constraint as well as the auto-generated constraints of Alg. \ref{alg:heuristic-search}, $k_3=2$   use the red to create an evening sky effect.
In the third row,  standard OT  results in a very dark cliff with minimal visible features; the OT with OC solutions lead to a well-defined cliff  and more effectively use the blue from the painting sky to deepen the sky in the source image.
In addition to producing better resulting images,  Alg. \ref{alg:heuristic-search} allows for human judgement to be used to make the final selection from a set of plans.

\paragraph{Computational Efficiency}

Fig.~\ref{fig:computation} shows how the computation time (in secs) of  Alg.~\ref{alg:admm} scales with problem size. We generate 100 random problems for  $m,n \leq 100 $  with 1, 2, 4 and 10 order constraints. 
The iterations are set to terminate at 1e4 rounds or a max projection error of 1e-4,
and these settings achieve an average functional approximation of 0.51\% error (within $\pm .19$).
We use penalty $\rho = 1.0$ after testing a range of $\rho$'s and observing little difference, as it is well-known in the ADMM literature~\citep{boyd_distributed_2010}.
We compare our method against \texttt{scipy.optimize}.

Fig.~\ref{fig:computation} shows 95\% confidence intervals from 10 independent repetitions.   Alg.~\ref{alg:admm} 
scales much better than \texttt{scipy.optimize} for large problems by orders of magnitude as the problem size $mn$ increases.
Note that Fig.~\ref{fig:computation}  compares \texttt{python}-based algorithms; we also evaluated the \texttt{C++}-based \texttt{cvxpy}, and found that our algorithm performs almost indistinguishably to \texttt{cvxpy} for a single order constraint, which is remarkable given the overhead of \texttt{python} as compared to \texttt{C++}.

% -------------  END - EXPERIMENTS --------------------

% -------------  START - CONCLUSION --------------------

\section{Conclusions}
Our proposed optimal transport with order constraints allows  complex structure to be incorporated in  optimal transport plans and provides a set of explainable solutions from which a human can  select. Future extensions of interest are as follows. Other OT solvers may be extended to incorporate  order constraints. Such solvers may offer different sets of plans satisfying varying objectives arising from other loss functions and various forms of regularization. See \cite{flamary2021pot}. Nonlinear costs such as the submodular formulation of \cite{alvarez-melis_structured_2017} can be useful in some settings and may be extended to include order constraints. 
It would also be of interest to explore other applications that can benefit from OT and specifically OT with OC.
Optimal Transport with Order Constraints can be found in the AI Explainability 360 toolbox, which is part of the IBM Research Trusted AI library ~\cite{aix360}  at \texttt{https://github.com/Trusted-AI/AIX360}.

%\todo{Add future work topics}

% -------------  END - CONCLUSION --------------------

\bibliography{references}

\begin{thebibliography}{47}
\providecommand{\natexlab}[1]{#1}
\providecommand{\url}[1]{\texttt{#1}}
\expandafter\ifx\csname urlstyle\endcsname\relax
  \providecommand{\doi}[1]{doi: #1}\else
  \providecommand{\doi}{doi: \begingroup \urlstyle{rm}\Url}\fi

\bibitem[Altschuler et~al.(2017)Altschuler, Weed, and
  Rigollet]{altschuler_near-linear_2017}
Altschuler, J., Weed, J., and Rigollet, P.
\newblock Near-linear time approximation algorithms for optimal transport via
  sinkhorn iteration.
\newblock In \emph{Proceedings of the 31st International Conference on Neural
  Information Processing Systems}, NIPS'17, pp.\  1961–1971, Red Hook, NY,
  USA, 2017. Curran Associates Inc.
\newblock ISBN 9781510860964.

\bibitem[Altschuler et~al.(2019)Altschuler, Bach, Rudi, and
  Niles-Weed]{altschuler_massively_2019}
Altschuler, J., Bach, F., Rudi, A., and Niles-Weed, J.
\newblock Massively scalable sinkhorn distances via the nystr\"{o}m method.
\newblock In Wallach, H., Larochelle, H., Beygelzimer, A., d\textquotesingle
  Alch\'{e}-Buc, F., Fox, E., and Garnett, R. (eds.), \emph{Advances in Neural
  Information Processing Systems}, volume~32. Curran Associates, Inc., 2019.

\bibitem[Altschuler \& Boix-Adsera(2020{\natexlab{a}})Altschuler and
  Boix-Adsera]{altschuler_hardness_2020}
Altschuler, J.~M. and Boix-Adsera, E.
\newblock Hardness results for {Multimarginal} {Optimal} {Transport} problems.
\newblock \emph{arXiv:2012.05398 [cs, math]}, December 2020{\natexlab{a}}.
\newblock arXiv: 2012.05398.

\bibitem[Altschuler \& Boix-Adsera(2020{\natexlab{b}})Altschuler and
  Boix-Adsera]{altschuler_polynomial-time_2020}
Altschuler, J.~M. and Boix-Adsera, E.
\newblock Polynomial-time algorithms for multimarginal optimal transport
  problems with structure.
\newblock 2020{\natexlab{b}}.
\newblock URL \url{http://arxiv.org/abs/2008.03006}.

\bibitem[Alvarez-Melis et~al.(2018)Alvarez-Melis, Jaakkola, and
  Jegelka]{alvarez-melis_structured_2017}
Alvarez-Melis, D., Jaakkola, T., and Jegelka, S.
\newblock Structured optimal transport.
\newblock In Storkey, A. and Perez-Cruz, F. (eds.), \emph{Proceedings of the
  Twenty-First International Conference on Artificial Intelligence and
  Statistics}, volume~84 of \emph{Proceedings of Machine Learning Research},
  pp.\  1771--1780. PMLR, 09--11 Apr 2018.
\newblock URL \url{http://proceedings.mlr.press/v84/alvarez-melis18a.html}.

\bibitem[Alvarez-Melis et~al.(2019)Alvarez-Melis, Jegelka, and
  Jaakkola]{alvarez-melis_towards_2019}
Alvarez-Melis, D., Jegelka, S., and Jaakkola, T.~S.
\newblock Towards optimal transport with global invariances.
\newblock In Chaudhuri, K. and Sugiyama, M. (eds.), \emph{Proceedings of
  Machine Learning Research}, volume~89 of \emph{Proceedings of Machine
  Learning Research}, pp.\  1870--1879. PMLR, 16--18 Apr 2019.
\newblock URL \url{http://proceedings.mlr.press/v89/alvarez-melis19a.html}.

\bibitem[Alvarez-Melis et~al.(2020)Alvarez-Melis, Mroueh, and
  Jaakkola]{hierarchy}
Alvarez-Melis, D., Mroueh, Y., and Jaakkola, T.
\newblock Unsupervised hierarchy matching with optimal transport over
  hyperbolic spaces.
\newblock In \emph{International Conference on Artificial Intelligence and
  Statistics}, pp.\  1606--1617. PMLR, 2020.

\bibitem[Arya et~al.(2019)Arya, Bellamy, Chen, Dhurandhar, Hind, Hoffman,
  Houde, Liao, Luss, Mojsilovi\'c, Mourad, Pedemonte, Raghavendra, Richards,
  Sattigeri, Shanmugam, Singh, Varshney, Wei, and Zhang]{aix360}
Arya, V., Bellamy, R. K.~E., Chen, P.-Y., Dhurandhar, A., Hind, M., Hoffman,
  S.~C., Houde, S., Liao, Q.~V., Luss, R., Mojsilovi\'c, A., Mourad, S.,
  Pedemonte, P., Raghavendra, R., Richards, J., Sattigeri, P., Shanmugam, K.,
  Singh, M., Varshney, K.~R., Wei, D., and Zhang, Y.
\newblock One explanation does not fit all: A toolkit and taxonomy of ai
  explainability techniques, 2019.
\newblock URL \url{https://arxiv.org/abs/1909.03012}.

\bibitem[Beck(2017)]{beck_first-order_2017}
Beck, A.
\newblock \emph{First-{Order} {Methods} in {Optimization}}.
\newblock Society for Industrial and Applied Mathematics, Philadelphia, PA,
  October 2017.
\newblock ISBN 978-1-61197-498-0 978-1-61197-499-7.
\newblock \doi{10.1137/1.9781611974997}.
\newblock URL \url{http://epubs.siam.org/doi/book/10.1137/1.9781611974997}.

\bibitem[Blondel et~al.(2018)Blondel, Seguy, and Rolet]{blondel_smooth_2017}
Blondel, M., Seguy, V., and Rolet, A.
\newblock Smooth and sparse optimal transport.
\newblock In Storkey, A. and Perez-Cruz, F. (eds.), \emph{Proceedings of the
  Twenty-First International Conference on Artificial Intelligence and
  Statistics}, volume~84 of \emph{Proceedings of Machine Learning Research},
  pp.\  880--889. PMLR, 09--11 Apr 2018.
\newblock URL \url{http://proceedings.mlr.press/v84/blondel18a.html}.

\bibitem[Boyd(2010)]{boyd_distributed_2010}
Boyd, S.
\newblock Distributed {Optimization} and {Statistical} {Learning} via the
  {Alternating} {Direction} {Method} of {Multipliers}.
\newblock \emph{Foundations and Trends® in Machine Learning}, 3\penalty0
  (1):\penalty0 1--122, 2010.
\newblock ISSN 1935-8237, 1935-8245.
\newblock \doi{10.1561/2200000016}.
\newblock URL \url{http://www.nowpublishers.com/article/Details/MAL-016}.

\bibitem[Boyd et~al.(2011)Boyd, Parikh, Chu, Peleato, and Eckstein]{admm}
Boyd, S., Parikh, N., Chu, E., Peleato, B., and Eckstein, J.
\newblock Distributed optimization and statistical learning via the alternating
  direction method of multipliers.
\newblock \emph{Found. Trends Mach. Learn.}, 3\penalty0 (1):\penalty0 1–122,
  jan 2011.
\newblock ISSN 1935-8237.
\newblock \doi{10.1561/2200000016}.
\newblock URL \url{https://doi.org/10.1561/2200000016}.

\bibitem[Boyd \& Vandenberghe(2004)Boyd and Vandenberghe]{boyd_convex_2004}
Boyd, S.~P. and Vandenberghe, L.
\newblock \emph{Convex optimization}.
\newblock Cambridge University Press, Cambridge, UK ; New York, 2004.
\newblock ISBN 978-0-521-83378-3.

\bibitem[Camburu et~al.(2018)Camburu, Rockt\"{a}schel, Lukasiewicz, and
  Blunsom]{camburu_e-snli_2018}
Camburu, O.-M., Rockt\"{a}schel, T., Lukasiewicz, T., and Blunsom, P.
\newblock e-{SNLI}: Natural language inference with natural language
  explanations.
\newblock In Bengio, S., Wallach, H., Larochelle, H., Grauman, K.,
  Cesa-Bianchi, N., and Garnett, R. (eds.), \emph{Advances in Neural
  Information Processing Systems}, volume~31. Curran Associates, Inc., 2018.

\bibitem[{Courty} et~al.(2017){Courty}, {Flamary}, {Tuia}, and
  {Rakotomamonjy}]{courty_optimal_2016}
{Courty}, N., {Flamary}, R., {Tuia}, D., and {Rakotomamonjy}, A.
\newblock Optimal transport for domain adaptation.
\newblock \emph{IEEE Transactions on Pattern Analysis and Machine
  Intelligence}, 39\penalty0 (9):\penalty0 1853--1865, 2017.
\newblock \doi{10.1109/TPAMI.2016.2615921}.

\bibitem[Cuturi(2013)]{cuturi_sinkhorn_2013}
Cuturi, M.
\newblock Sinkhorn distances: Lightspeed computation of optimal transport.
\newblock In Burges, C. J.~C., Bottou, L., Welling, M., Ghahramani, Z., and
  Weinberger, K.~Q. (eds.), \emph{Advances in Neural Information Processing
  Systems 26}, pp.\  2292--2300. Curran Associates, Inc., 2013.

\bibitem[De~Leeuw et~al.(2009)De~Leeuw, Kurt, and Mair]{de_leeuw_isotone_2009}
De~Leeuw, J., Kurt, H., and Mair, P.
\newblock Isotone {Optimization} in {R}: {Pool}-{Adjacent}-{Violators}
  {Algorithm} ({PAVA}) and {Active} {Set} {Methods}.
\newblock \emph{Journal of Statistical Software}, 32, October 2009.
\newblock \doi{10.18637/jss.v032.i05}.

\bibitem[Di~Marino et~al.(2015)Di~Marino, Gerolin, and
  Nenna]{di_marino_optimal_2015}
Di~Marino, S., Gerolin, A., and Nenna, L.
\newblock Optimal transportation theory with repulsive costs.
\newblock 2015.
\newblock URL \url{http://arxiv.org/abs/1506.04565}.

\bibitem[Felzenszwalb \& Huttenlocher(2004)Felzenszwalb and
  Huttenlocher]{felzenszwalb_efficient_2004}
Felzenszwalb, P.~F. and Huttenlocher, D.~P.
\newblock Efficient {Graph}-{Based} {Image} {Segmentation}.
\newblock \emph{International Journal of Computer Vision}, 59\penalty0
  (2):\penalty0 167--181, September 2004.
\newblock ISSN 0920-5691.
\newblock \doi{10.1023/B:VISI.0000022288.19776.77}.
\newblock URL
  \url{http://link.springer.com/10.1023/B:VISI.0000022288.19776.77}.

\bibitem[Flamary et~al.(2021)Flamary, Courty, Gramfort, Alaya, Boisbunon,
  Chambon, Chapel, Corenflos, Fatras, Fournier, Gautheron, Gayraud, Janati,
  Rakotomamonjy, Redko, Rolet, Schutz, Seguy, Sutherland, Tavenard, Tong, and
  Vayer]{flamary2021pot}
Flamary, R., Courty, N., Gramfort, A., Alaya, M.~Z., Boisbunon, A., Chambon,
  S., Chapel, L., Corenflos, A., Fatras, K., Fournier, N., Gautheron, L.,
  Gayraud, N.~T., Janati, H., Rakotomamonjy, A., Redko, I., Rolet, A., Schutz,
  A., Seguy, V., Sutherland, D.~J., Tavenard, R., Tong, A., and Vayer, T.
\newblock Pot: Python optimal transport.
\newblock \emph{Journal of Machine Learning Research}, 22\penalty0
  (78):\penalty0 1--8, 2021.
\newblock URL \url{http://jmlr.org/papers/v22/20-451.html}.

\bibitem[Forrow et~al.(2019)Forrow, H{\"u}tter, Nitzan, Rigollet, Schiebinger,
  and Weed]{forrow}
Forrow, A., H{\"u}tter, J.-C., Nitzan, M., Rigollet, P., Schiebinger, G., and
  Weed, J.
\newblock Statistical optimal transport via factored couplings.
\newblock In \emph{The 22nd International Conference on Artificial Intelligence
  and Statistics}, pp.\  2454--2465. PMLR, 2019.

\bibitem[Goldfeld \& Greenewald(2020)Goldfeld and Greenewald]{goldfeld}
Goldfeld, Z. and Greenewald, K.
\newblock Gaussian-smoothed optimal transport: Metric structure and statistical
  efficiency.
\newblock In \emph{International Conference on Artificial Intelligence and
  Statistics}, pp.\  3327--3337. PMLR, 2020.

\bibitem[Grotzinger \& Witzgall(1984)Grotzinger and
  Witzgall]{grotzinger_projections_1984}
Grotzinger, S.~J. and Witzgall, C.
\newblock Projections onto order simplexes.
\newblock \emph{Applied Mathematics and Optimization}, 12\penalty0
  (1):\penalty0 247--270, October 1984.
\newblock ISSN 1432-0606.
\newblock \doi{10.1007/BF01449044}.

\bibitem[Guminov et~al.(2021)Guminov, Dvurechensky, Tupitsa, and
  Gasnikov]{guminov_combination_2021}
Guminov, S., Dvurechensky, P., Tupitsa, N., and Gasnikov, A.
\newblock On a {Combination} of {Alternating} {Minimization} and {Nesterov}’s
  {Momentum}.
\newblock In \emph{Proceedings of the 38th {International} {Conference} on
  {Machine} {Learning}}, pp.\  3886--3898. PMLR, July 2021.
\newblock URL \url{https://proceedings.mlr.press/v139/guminov21a.html}.
\newblock ISSN: 2640-3498.

\bibitem[Guo et~al.(2020)Guo, Ho, and Jordan]{guo2020}
Guo, W., Ho, N., and Jordan, M.
\newblock Fast algorithms for computational optimal transport and wasserstein
  barycenter.
\newblock In \emph{International Conference on Artificial Intelligence and
  Statistics}, pp.\  2088--2097. PMLR, 2020.

\bibitem[Jambulapati et~al.(2019)Jambulapati, Sidford, and
  Tian]{jambulapati_direct_2019}
Jambulapati, A., Sidford, A., and Tian, K.
\newblock A {Direct} o(1/epsilon) {Iteration} {Parallel} {Algorithm} for
  {Optimal} {Transport}.
\newblock In \emph{Advances in {Neural} {Information} {Processing} {Systems}},
  volume~32. Curran Associates, Inc., 2019.

\bibitem[Kusner et~al.(2015)Kusner, Sun, Kolkin, and
  Weinberger]{kusner_word_2015}
Kusner, M., Sun, Y., Kolkin, N., and Weinberger, K.
\newblock From word embeddings to document distances.
\newblock In Bach, F. and Blei, D. (eds.), \emph{Proceedings of the 32nd
  International Conference on Machine Learning}, volume~37 of \emph{Proceedings
  of Machine Learning Research}, pp.\  957--966, Lille, France, 07--09 Jul
  2015. PMLR.
\newblock URL \url{http://proceedings.mlr.press/v37/kusnerb15.html}.

\bibitem[Laclau et~al.(2021)Laclau, Redko, Choudhary, and Largeron]{laclau}
Laclau, C., Redko, I., Choudhary, M., and Largeron, C.
\newblock All of the fairness for edge prediction with optimal transport.
\newblock In \emph{International Conference on Artificial Intelligence and
  Statistics}, pp.\  1774--1782. PMLR, 2021.

\bibitem[Lin et~al.(2019)Lin, Ho, and Jordan]{lin2019a}
Lin, T., Ho, N., and Jordan, M.
\newblock On efficient optimal transport: An analysis of greedy and accelerated
  mirror descent algorithms.
\newblock In \emph{International Conference on Machine Learning}, pp.\
  3982--3991. PMLR, 2019.

\bibitem[Liu et~al.(2021)Liu, Rao, Lu, Zhou, and Hsieh]{imageAAAI2021}
Liu, B., Rao, Y., Lu, J., Zhou, J., and Hsieh, C.-J.
\newblock Multi-proxy wasserstein classifier for image classification.
\newblock \emph{Proceedings of the AAAI Conference on Artificial Intelligence},
  35\penalty0 (10):\penalty0 8618--8626, May 2021.
\newblock URL \url{https://ojs.aaai.org/index.php/AAAI/article/view/17045}.

\bibitem[Pass(2015)]{pass_multi-marginal_2014}
Pass, B.
\newblock Multi-marginal optimal transport: Theory and applications.
\newblock \emph{ESAIM: M2AN}, 49\penalty0 (6):\penalty0 1771--1790, 2015.
\newblock \doi{10.1051/m2an/2015020}.
\newblock URL \url{https://doi.org/10.1051/m2an/2015020}.

\bibitem[Paty et~al.(2020)Paty, d’Aspremont, and Cuturi]{regularity}
Paty, F.-P., d’Aspremont, A., and Cuturi, M.
\newblock Regularity as regularization: Smooth and strongly convex brenier
  potentials in optimal transport.
\newblock In \emph{International Conference on Artificial Intelligence and
  Statistics}, pp.\  1222--1232. PMLR, 2020.

\bibitem[Peyré \& Cuturi(2019)Peyré and Cuturi]{peyre_computational_2020}
Peyré, G. and Cuturi, M.
\newblock Computational optimal transport: With applications to data science.
\newblock \emph{Foundations and Trends® in Machine Learning}, 11\penalty0
  (5-6):\penalty0 355--607, 2019.
\newblock ISSN 1935-8237.
\newblock \doi{10.1561/2200000073}.
\newblock URL \url{http://dx.doi.org/10.1561/2200000073}.

\bibitem[Scetbon et~al.(2021)Scetbon, Cuturi, and
  Peyré]{scetbon_low-rank_2021}
Scetbon, M., Cuturi, M., and Peyré, G.
\newblock Low-{Rank} {Sinkhorn} {Factorization}.
\newblock In \emph{Proceedings of the 38th {International} {Conference} on
  {Machine} {Learning}}, pp.\  9344--9354. PMLR, July 2021.
\newblock URL \url{https://proceedings.mlr.press/v139/scetbon21a.html}.
\newblock ISSN: 2640-3498.

\bibitem[Schmitzer(2019)]{schmitzer_stabilized_2019}
Schmitzer, B.
\newblock Stabilized sparse scaling algorithms for entropy regularized
  transport problems.
\newblock \emph{SIAM Journal on Scientific Computing}, 41\penalty0
  (3):\penalty0 A1443--A1481, 2019.
\newblock \doi{10.1137/16M1106018}.

\bibitem[Shi et~al.(2020)Shi, Yu, Liu, Zhang, and Li]{imageAAAI2020}
Shi, Y., Yu, X., Liu, L., Zhang, T., and Li, H.
\newblock Optimal feature transport for cross-view image geo-localization.
\newblock \emph{Proceedings of the AAAI Conference on Artificial Intelligence},
  34:\penalty0 11990--11997, Apr. 2020.
\newblock \doi{10.1609/aaai.v34i07.6875}.
\newblock URL \url{https://ojs.aaai.org/index.php/AAAI/article/view/6875}.

\bibitem[Solomon(2018)]{solomon_optimal_2018}
Solomon, J.
\newblock Optimal transport on discrete domains.
\newblock 2018.
\newblock URL \url{http://arxiv.org/abs/1801.07745}.

\bibitem[Su \& Hua(2017)Su and Hua]{DTW}
Su, B. and Hua, G.
\newblock Order-preserving wasserstein distance for sequence matching.
\newblock In \emph{2017 IEEE Conference on Computer Vision and Pattern
  Recognition (CVPR)}, pp.\  2906--2914, 2017.
\newblock \doi{10.1109/CVPR.2017.310}.

\bibitem[Swanson et~al.(2020)Swanson, Yu, and Lei]{swanson_rationalizing_2020}
Swanson, K., Yu, L., and Lei, T.
\newblock Rationalizing text matching: Learning sparse alignments via optimal
  transport.
\newblock In \emph{Proceedings of the 58th Annual Meeting of the Association
  for Computational Linguistics}, pp.\  5609--5626. Association for
  Computational Linguistics, 2020.
\newblock \doi{10.18653/v1/2020.acl-main.496}.
\newblock URL \url{https://www.aclweb.org/anthology/2020.acl-main.496}.

\bibitem[Tan et~al.(2016)Tan, Chan, Aguirre, and Tanaka]{tan_ceci_2016}
Tan, W.~R., Chan, C.~S., Aguirre, H.~E., and Tanaka, K.
\newblock Ceci n'est pas une pipe: {A} deep convolutional network for fine-art
  paintings classification.
\newblock In \emph{2016 {IEEE} {International} {Conference} on {Image}
  {Processing} ({ICIP})}, pp.\  3703--3707, Phoenix, AZ, USA, September 2016.
  IEEE.
\newblock ISBN 978-1-4673-9961-6.
\newblock \doi{10.1109/ICIP.2016.7533051}.
\newblock URL \url{http://ieeexplore.ieee.org/document/7533051/}.

\bibitem[Villani(2008)]{villani}
Villani, C.
\newblock \emph{Optimal transport: old and new}, volume 338.
\newblock Springer Science \& Business Media, 2008.

\bibitem[{Wang} et~al.(2010){Wang}, {Li}, and {Konig}]{wang_learning_2010}
{Wang}, F., {Li}, P., and {Konig}, A.~C.
\newblock Learning a bi-stochastic data similarity matrix.
\newblock In \emph{2010 IEEE International Conference on Data Mining}, pp.\
  551--560, 2010.
\newblock \doi{10.1109/ICDM.2010.141}.

\bibitem[Wu et~al.(2018)Wu, Yen, Xu, Xu, Balakrishnan, Chen, Ravikumar, and
  Witbrock]{wu_word_2018}
Wu, L., Yen, I. E.-H., Xu, K., Xu, F., Balakrishnan, A., Chen, P.-Y.,
  Ravikumar, P., and Witbrock, M.~J.
\newblock Word mover{'}s embedding: From {W}ord2{V}ec to document embedding.
\newblock In \emph{Proceedings of the 2018 Conference on Empirical Methods in
  Natural Language Processing}, pp.\  4524--4534, Brussels, Belgium,
  October-November 2018. Association for Computational Linguistics.
\newblock \doi{10.18653/v1/D18-1482}.
\newblock URL \url{https://www.aclweb.org/anthology/D18-1482}.

\bibitem[Xiao et~al.(2010)Xiao, Hays, Ehinger, Oliva, and
  Torralba]{xiao_sun_2010}
Xiao, J., Hays, J., Ehinger, K.~A., Oliva, A., and Torralba, A.
\newblock {SUN} database: {Large}-scale scene recognition from abbey to zoo.
\newblock In \emph{2010 {IEEE} {Computer} {Society} {Conference} on {Computer}
  {Vision} and {Pattern} {Recognition}}, pp.\  3485--3492, June 2010.
\newblock \doi{10.1109/CVPR.2010.5539970}.
\newblock ISSN: 1063-6919.

\bibitem[Yu et~al.(2016)Yu, Seff, Zhang, Song, Funkhouser, and
  Xiao]{yu_lsun_2016}
Yu, F., Seff, A., Zhang, Y., Song, S., Funkhouser, T., and Xiao, J.
\newblock {LSUN}: {Construction} of a {Large}-scale {Image} {Dataset} using
  {Deep} {Learning} with {Humans} in the {Loop}.
\newblock \emph{arXiv:1506.03365 [cs]}, June 2016.
\newblock URL \url{http://arxiv.org/abs/1506.03365}.
\newblock arXiv: 1506.03365.

\bibitem[Yurochkin et~al.(2019)Yurochkin, Claici, Chien, Mirzazadeh, and
  Solomon]{yurochkin_hierarchical_2019}
Yurochkin, M., Claici, S., Chien, E., Mirzazadeh, F., and Solomon, J.~M.
\newblock Hierarchical optimal transport for document representation.
\newblock In Wallach, H., Larochelle, H., Beygelzimer, A., Alché-Buc, F.~d.,
  Fox, E., and Garnett, R. (eds.), \emph{Advances in Neural Information
  Processing Systems 32}, pp.\  1601--1611. Curran Associates, Inc., 2019.

\bibitem[Zhang et~al.(2021)Zhang, Cheng, and Reeves]{zhang2021convergence}
Zhang, Y., Cheng, X., and Reeves, G.
\newblock Convergence of gaussian-smoothed optimal transport distance with
  sub-gamma distributions and dependent samples.
\newblock In \emph{International Conference on Artificial Intelligence and
  Statistics}, pp.\  2422--2430. PMLR, 2021.

\end{thebibliography}
\bibliographystyle{icml2022}

%\iffalse
\iftrue
%%%%%%%%%%%%%%%%%%%%%%%%% 
% BEGINING OF SUPP MAT
\newpage
\appendix
\onecolumn

\section*{Appendix}

The Appendix comprises two sections.  Section A includes further details on  the Sentence Relationship Classification and Color Transfer experiments and  on the Computational Efficiency of our proposed OT with OC method. Section B of the Appendix includes the proofs for the results in the main paper.

%%%%%%%%%%%%%%%%%%%%%%%% EXPERIMENTS %%%%%%%%%%%%%%%%%%%%%%%%%%%%%%%%%
\section{Details on Experimental Setup and Results}

\subsection{Sentence Relationship Classification}

For the \emph{enhanced Stanford Natural Language Inference} (e-SNLI) dataset we follow~\citet{swanson_rationalizing_2020} to evaluate explainablity of an OT scheme, whereby annotation-based scores are measured after applying a threshold to determine the activations.
The threshold is taken as the one that balances task and explainabilty performance; it should be a sufficiently high so that we do not get spurious activations, and sufficiently low so that enough coefficients are preserved to not compromise the task scores.
e-SNLI provides annotation labels  to determine the explainability performance, quantified by the annotation \texttt{F1}. The highest annotation \texttt{F1} score over \texttt{n} $=k_2$  candidates of the Algorithm~\ref{alg:heuristic-search} determines the \texttt{BestF1@n} scores. The Greedy version used as a baseline is scored the same way.
We used sizes of (100K, 10K, 5K) for train, validation, and test, respectively. 

\paragraph{Picking the threshold for the  classification task:} 
This is a three class ``entailment'', ``neutral'' and ``contradiction'' classification task to predict logical relationships between sentence pairs. 
%The task in e-SNLI is a classification one,
The task is performed by a shallow neural network that incorporates an OT attention module between the input and output layers layers\footnote{\texttt{https://github.com/asappresearch/rationale-alignment}}.
The attention layer uses OT to match source and target tokens, and matched tokens are concatenated and output via softmax.
The shallow network is trained on the un-thresholded plan. Over the validation and tests sets, all coefficients that lie below $T / (mn)$ are dropped when measuring the task \texttt{F1}.
Using the validation set, we pick the threshold in the region between $0.01$ and $2$ where the task \texttt{F1} starts to plateau due to the dropped coefficients; we use an activation threshold of $2$.

\paragraph{Comparing activations with annotation labels:} In e-SNLI, annotations that are provided separately on the source and target sentence. \citet{swanson_rationalizing_2020} proposed to marginalize the transport plan $\tplan \in \tpoly$ forming two vectors for comparison, by applying \texttt{max} to each row and column;
we found that this method overlooks the coupling in the transport plan $\tplan$, and we propose instead taking $\max$ only over row/column positions that have annotations labels.
On the other hand we cannot then utilize ``neutral'' examples for measuring annotation \texttt{F1} (since these examples are missing row annotations), but we do not consider this as a drawback of our proposal because \citet{camburu_e-snli_2018} indicated that the annotations for the ``neutral'' examples are curated in a somewhat inconsistent manner from the other example classes, and we thus omit them from the annotation scores.

\paragraph{Hyperparameters:}
Alg. \ref{alg:heuristic-search} takes  a standard OT baseline $\tplan \in \tpoly$ to generate the variate set $\variates(\tplan)$, see \eqref{eqn:heuristic-statistic}. We use a regularized OT (the hyper-parameter-less mirror descent version of standard OT \citep[see][]{scetbon_low-rank_2021} with 20 iterations) to obtain $\tplan$.
The standard OT solution lies on a polytope and has at most $m+n$ non-zeros out of the $mn$ locations \citep{peyre_computational_2020}. The regularized version produces more non-zeros that give more information for  seeking unsaturated locations in \eqref{eqn:heuristic-statistic}.
For the standard OT attention network we do the same with a lower 5 iterations to speed up training.

We obtain $(\tau_1,\tau_2)$ that constrain $\searchtree(k_3,\tau_1,\tau_2)$, as follows.
First compute transport plans $\Pi \in U(a,b)$ by solving~\eqref{eqn:optimal-transport}, using the regularized version see above, and computing $\phi_{ij}^s$ and $\Phi_{ij}$ using \eqref{eqn:heuristic-statistic}. The polytope $\tpoly$ constrains the points $(\phi_{ij}^s, \Phi_{ij})$ to lie in a lower triangular region bounded by the horizontal and vertical axes and a line that runs through $(1,0)$ and $(0,1)$. 
The plan coefficients that are uncertain will lie in a box region bounded by the axes $\tau_1$ and $\tau_2$.
Using SNLI annotations as labels that indicate importance, we  chose $(\tau_1,\tau_2) = (.5,.5)$.
We only consider variates $ij$ in \eqref{eqn:heuristic-statistic} if both $i,j$ do not correspond to stop-words.
The transport plan is marginalized using \texttt{max} across columns/rows to arrive at a vector of coefficients, and
we compute an annotation \texttt{F1} score against the annotations. 
The top $k_2 = $ \texttt{n} plans are scored using the \texttt{BestF1@n} metric, which reports the score of the best plan in the learnt subtree $\esttree(k_1,k_2,k_3,\tau_1,\tau_2)$.

\paragraph{Walkthrough of Alg.~\ref{alg:heuristic-search} using Fig.~\ref{fig:multi-oc-search-tree}:}
At Line 1, the root node (labeled R) and its candidate plan $\hat{\tplan}_1$ are computed, and $\esttree(k_1,k_2,k_3,\tau_1,\tau_2)$ is initialized.
 At Line 2 the single order constraint nodes are constructed and pushed to stack $\mathcal{S}$ along with saturations $\ijdx{\heuristic}{i}{j}$ using ~\eqref{eqn:heuristic-statistic-triple}.
 At Line 4 a variate $\ij{i}{j}$ corresponding to a single order constraint (e.g., ``piece''-``pair'') is popped off, and at Line 6 a new candidate $\hat{\tplan}_2$  is computed.
 At Line 7 this candidate is identified by adding $\ij{i}{j}$ to $\esttree(k_1,k_2,k_3,\tau_1,\tau_2)$ which now has two nodes.
 Then at Lines 11-14, new depth-2 nodes $\nij{i}{j}{2}=\ij{i'}{j'}\ij{i}{j}$ are computed from $\hat{\tplan}_2$ and pushed to $\mathcal{S}$ to be  considered next in $\searchtree(k_3,\tau_1,\tau_2)$ (e.g., ``piece''-``pair'' followed by ``clothing''-``pants'').
  Lines 3-14 iterate to update $\esttree(k_1,k_2,k_3,\tau_1,\tau_2)$ and the candidate set.
 Once $\esttree(k_1,k_2,k_3,\tau_1,\tau_2)$ grows to $k_2$ nodes, the lower bound test at Lines 5 and 11  prune redundant nodes of $\searchtree(k_3,\tau_1,\tau_2)$.
 The iterations terminate when \texttt{count} reaches $k_1$.

\paragraph{Hardware and compute times:} Classifier training was performed on a multi-core Ubuntu virtual machine and on a nVidia Tesla P100-PCIE-16GB GPUs. For the shallow neural nets GPU memory consumption was found $<$ 1GB. Training times for 10 epochs (around 5 million sample runs) were roughly 21 hours. 
Algorithm~\ref{alg:heuristic-search} took about 16 hours to complete over the test set.

\subsection{Color Transfer}

For the source image, we use \citet{felzenszwalb_efficient_2004} to preserve pixel locality \citep[see][]{alvarez-melis_structured_2017}, and for the target we  perform color RGB segmentation via  $k$-means, in the range of 5-10 color clusters. We illustrate the results of Alg.~\ref{alg:heuristic-search} on three examples from learnt subtrees $\esttree(20,5,1,\tau_1,\tau_2)$ and $\esttree(40,10,2,\tau_1,\tau_2)$, where in this set of experiments we chose $(\tau_1,\tau_2) = (0.5,1.0)$. 
Here we have no labels to tune $(\tau_1,\tau_2)$ as we did for e-SNLI. The neighbour saturation metric $\ijdx{\heuristic}{i}{j}$ (see \eqref{eqn:heuristic-statistic-triple}) had less effect in this application. Therefore, it  sufficed here to use non-regularized  OT to compute the base-plan $\tplan \in \tpoly$.
 We use the  5 largest source image segments and the  2 largest target image segments  in terms of  $D_{ij}$; that is, we only consider source image segments  large enough to be visually significant, and   target image segments with contrasting  colors. The WikiArt dataset can be found at \url{https://github.com/cs-chan/ArtGAN/tree/master/WikiArt%20Dataset}.

%- resized
%- size of k means
%- choice for taus

%%%%%%%%%%%%%%%%%%%%%%%%%  COMP EFFICIENCIES-- ALG 4 -- OPTIMIZED ALG 1

\subsection{Computational Efficiencies}

\paragraph{Computation Issues} Each computation run of Alg.~\ref{alg:admm} is measured on a single Intel x86 64 bit Xeon 2MHZ with 12GB memory per core. Also, since the python library \texttt{scipy.optimize} depends on \texttt{numpy}\footnote{
    https://numpy.org/
} and can run \texttt{numpy} with multiple threads, for fair comparison with our (single-threaded) implementation of Alg.~\ref{alg:admm}, we disabled all parallelisms.

\paragraph{Effectiveness of Lower Bound~\eqref{eqn:final-bound} in Alg.~\ref{alg:heuristic-search}}
Figure \ref{fig:snli-bounds} demonstrates the effectives of the lower bound~\eqref{eqn:final-bound}, by showing the number of search tree nodes in $\searchtree(k_3, \tau_1, \tau_2)$ that were skipped while performing the e-SNLI experiment for various depths $k_3=1,2,3$ and a large candidate size $k_1 = 40$. The bounds show to be effective in skipping more nodes as the size of $\searchtree(k_3, \tau_1, \tau_2)$ increases with the depth $k_3$.

\begin{figure}
    \centering
    \includegraphics[width=.3\columnwidth]{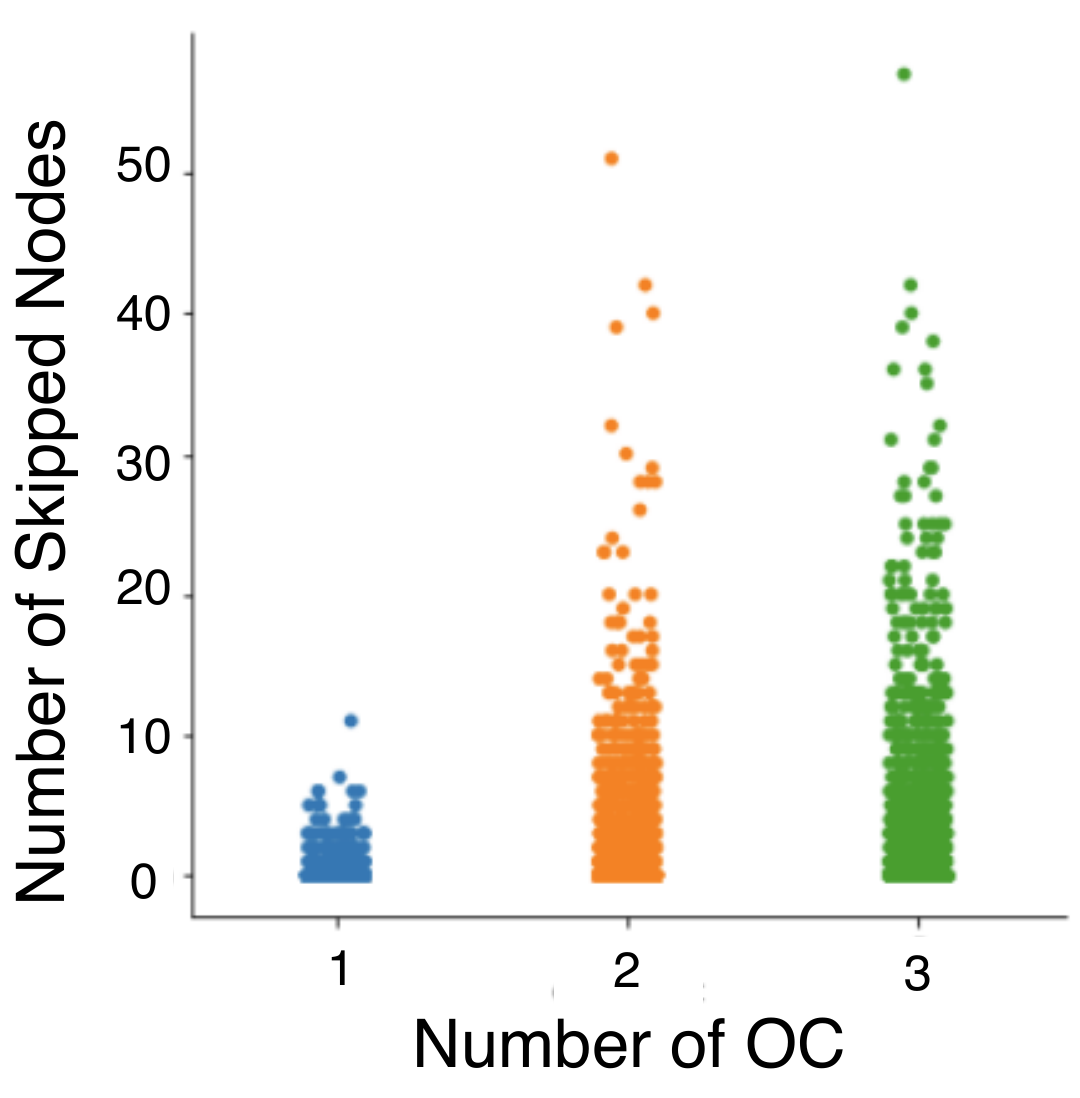}
    \caption{Effectiveness of lower bound~\eqref{eqn:final-bound} in terms of nodes skipped in $\searchtree(k_3, \tau_1, \tau_2)$ for various depths $k_3=1,2,3$.}
    \label{fig:snli-bounds}
\end{figure}

%

%%%%%%%%%%%%%%%%%%%%%%%% PROOFS %%%%%%%%%%%%%%%%%%%%%%%%%%%%%%%%%
\section{Proofs}

First 
 we provide the proofs of the propositions concerning the projections,  Proposition~\ref{prop:euclidean-projector-order}
and  Proposition~\ref{prop:euclidean-projector-sums}. 
%Then, we discuss convergence and provide  the full  proof of Theorem~\ref{thm:projected-subgradient-convergence}.
Secondly, we cover the bounds,  giving the proof of  correctness of Proposition~\ref{prop:tailfun-bound}. 
%Lastly, we provide theoretical results for Algorithm \ref{alg:admm} defined in the Appendix.

\subsection{Projections}

%%%%%%%% - Proof of Proposition 1 %%%%%%%%%%%%%%%%%%%%%

The projection $\proj{\constraintOne \intersect \constraintTwo}{X} = \arg \min_{Y \in \constraintOne} \norm{Y-X}{F}^2$ involves
$mn-k$ constraints $\ijdx{Y}{p}{q} \leq \ijdx{Y}{i_1}{j_1}$ for 
$\ij{p}{q} \in \langidx$ where $\langidx$ in~\eqref{eqn:transport-plan-ordering-constraints},
followed by $mn - k + 1$ constraints $0 \leq \ijdx{Y}{p}{q} $ for $\ij{p}{q} \in \langidx \union \{\ij{i_1}{j_1} \}$,
and $k-1$ constraints $\ijdx{Y}{i_\ell}{j_\ell} \leq \ijdx{Y}{i_{\ell+1}}{j_{\ell+1}}$ for 
$\ell \in \iset{k-1}$.
Write
$\ijdx{\lambda}{p}{q}, \ijdx{\delta}{p}{q}$ and $\eta_\ell$ for their respective Lagrangian variables 
and the Karush-Kuhn-Tucker (KKT) conditions:
\begin{eqnarray}
    \ijdx{Y}{p}{q} 
    &=& \left\{
        \begin{array}{lll}
            \ijdx{X}{p}{q} 
            - 
            \ijdx{\lambda}{p}{q} 
            +
            \ijdx{\delta}{p}{q} 
            & 
            \mbox{for all}
            & 
            \ij{p}{q} \in \langidx
            \\
            \ijdx{X}{i_1}{j_1} 
            - \eta_1
            + 
            \sum_{
                \ij{p}{q} \in \langidx
            }
            \ijdx{\lambda}{p}{q}  
            +
            \ijdx{\delta}{i_1}{j_1} 
            & \mbox{for} 
            &  
            \ij{p}{q}  = \ij{i_1}{j_1}
            \\
            \ijdx{X}{i_\ell}{j_\ell} + \eta_{\ell-1} - \eta_\ell
            & \mbox{for all} 
            &  
            \ij{p}{q}  = \ij{i_\ell}{j_\ell}~~\mbox{where}~~2 \leq \ell \leq k
        \end{array}
    \right. 
    \label{order:eqn:KKT1} 
    \\
    \ijdx{Y}{p}{q} 
    &\leq& 
    \ijdx{Y}{i_1}{j_1} 
    ,~~~\mbox{for all}~~~\ij{p}{q} \in \langidx,~~\mbox{and}~~~~
    \ijdx{Y}{p}{q} \geq 0~~~~~~~~~
    \mbox{for all}~~~\ij{p}{q} \in \langidx \union \{\ij{i_1}{j_1}\}
    \label{order:eqn:KKT2} 
    \\
    \ijdx{\lambda}{p}{q} 
    &\geq& 0
    ,~~~\mbox{for all}~~~\ij{p}{q} \in \langidx,~~~~~~~~\mbox{and}~~~~
    \ijdx{\delta}{p}{q} \geq 0~~~~~~~~~~
    \mbox{for all}~~~\ij{p}{q} \in \langidx \union \{\ij{i_1}{j_1}\}
    \label{order:eqn:KKT3} 
    \\
    \ijdx{\lambda}{p}{q} 
    (
        \ijdx{Y}{p}{q} - 
        \ijdx{Y}{i_1}{j_1} 
    )
    &=& 0
    ,~~~\mbox{for all}~~~\ij{p}{q} \in \langidx,~~~~~~~~\mbox{and}~~~~
    \ijdx{\delta}{p}{q} \ijdx{Y}{p}{q} = 0~~~~
    \mbox{for all}~~~\ij{p}{q} \in \langidx \union \{\ij{i_1}{j_1}\}
    \label{order:eqn:KKT4} 
    \\
    \ijdx{Y}{i_\ell}{j_\ell} 
    &\leq& 
    \ijdx{Y}{i_{\ell+1}}{j_{\ell+1}} 
    ,~~~\mbox{for all}~~~\ell \in \iset{k-1}
    \label{order:eqn:KKT5} 
    \\
    \eta_\ell
    &\geq& 0
    ,~~~\mbox{for all}~~~\ell \in \iset{k-1}
    \label{order:eqn:KKT6} 
    \\
    \eta_\ell
    (
        \ijdx{Y}{i_{\ell+1}}{j_{\ell+1}}
        - 
        \ijdx{Y}{i_\ell}{j_\ell}
    )
    &=& 0
    ,~~~\mbox{for all}~~~\ell \in \iset{k-1}
    \label{order:eqn:KKT7} 
\end{eqnarray}

%%%%%%%%%%%%%%%%%%%%%%%

\begin{lemma}
     \label{order:lem:tau}

     Given any index $\ij{i}{j} = \ij{i_1}{j_1}$ and positive $\eta \geq 0$, any $\ijdx{X}{p}{q} \in \Real$ for all $\ij{p}{q} \in \langidx \union \{\ij{i}{j}\}$,
     % $X\in \mnReal$, 
     let $r = r(\eta) = \rank{\ijdx{X}{i}{j} - \eta}$ over the $mn-k+1$ variates.
     Then for both $\tau(\cdot) = \tau(\cdot, \eta)$ and $0 \leq t = t(\eta) < r$ (see both~\eqref{eqn:order-threshold-tau} and~\eqref{eqn:order-threshold-t} in main text) we have 
     %\begin{equation}
      %    \odidx{X}{1} \geq \odidx{X}{2} \geq \cdots \geq \odidx{X}{t} \geq %\tau(t,\eta) \geq \odidx{X}{t+1}.
      %    \label{order:eqn:condition-A}
     %\end{equation}
     \begin{equation}
          \tau(t,\eta) \geq \odidx{X}{t+1},~~\mbox{and if}~~t>0~~
          \mbox{also}~~
          \tau(t,\eta) \leq \odidx{X}{t} \leq \odidx{X}{t-1} \leq \cdots \leq \odidx{X}{1}.
          \label{order:eqn:condition-A}
     \end{equation}
\end{lemma}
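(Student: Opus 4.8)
The plan is to reduce everything to one algebraic identity describing how $\tau(s,\eta)$ evolves in $s$, and then to read off both inequalities from the defining minimality of $t(\eta)$. Writing $A := \ijdx{X}{i_1}{j_1} - \eta$ so that $\tau(s,\eta) = (A + \sum_{\ell=1}^{s}\odidx{X}{\ell})/(s+1)$ with $\tau(0,\eta)=A$, I would first establish the one-step recursion $\tau(s+1,\eta) = \frac{(s+1)\tau(s,\eta) + \odidx{X}{s+1}}{s+2}$, which exhibits $\tau(s+1,\eta)$ as a convex combination of $\tau(s,\eta)$ and $\odidx{X}{s+1}$. The consequence I will use repeatedly is the equivalence $\tau(s+1,\eta) > \odidx{X}{s+1} \iff \tau(s,\eta) > \odidx{X}{s+1}$ (and likewise with $\geq$, since a weighted average exceeds one of its endpoints exactly when the other endpoint does).

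For the upper bound $\tau(t,\eta) \leq \odidx{X}{t}$ (needed only when $t>0$), I recall that $t+1$ is the smallest index $s \in \iset{r}$ with $\tau(s,\eta) > \odidx{X}{s}$, or $t+1 = r$ when no such index exists. If the violator set is nonempty, then $t = (t+1)-1$ lies strictly below the minimal violator and (being in $\iset{r}$ since $1 \leq t < r$) is therefore not itself a violator; if the set is empty, then no $s \in \iset{r}$ is a violator. Either way $\tau(t,\eta) \leq \odidx{X}{t}$, and the remaining chain $\odidx{X}{t} \leq \cdots \leq \odidx{X}{1}$ is just the decreasing order.

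For the lower bound $\tau(t,\eta) \geq \odidx{X}{t+1}$ I split on the same dichotomy. If the violator set is nonempty, then $t+1$ is a violator, i.e. $\tau(t+1,\eta) > \odidx{X}{t+1}$; feeding this into the convex-combination equivalence above forces $\tau(t,\eta) > \odidx{X}{t+1}$. If the set is empty, then $t+1=r$ and I must show $\tau(r-1,\eta) \geq \odidx{X}{r}$. Using $\odidx{X}{\ell} \geq \odidx{X}{r}$ for $\ell \leq r-1$ gives $\sum_{\ell=1}^{r-1}\odidx{X}{\ell} \geq (r-1)\odidx{X}{r}$, hence $\tau(r-1,\eta) \geq (A+(r-1)\odidx{X}{r})/r$, and this is $\geq \odidx{X}{r}$ precisely when $A \geq \odidx{X}{r}$.

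The one genuinely non-formal step, and the main obstacle, is justifying $A \geq \odidx{X}{r}$ in this last branch: this is exactly where the definition of $r = r(\eta)$ as the descending rank of $A = \ijdx{X}{i_1}{j_1}-\eta$ among the $mn-k+1$ variates in $\langidx \union \{\ij{i_1}{j_1}\}$ enters, since placing $A$ at descending position $r$ means $A$ dominates the order statistic $\odidx{X}{r}$ (and all later ones). I would also fix the boundary conventions once at the outset — the empty sum in $\tau(0,\eta)$, the case $t=0$ (where the upper bound is vacuous and the lower bound reads $\tau(0,\eta)=A \geq \odidx{X}{1}$, again the rank-$1$ statement), and treating $\odidx{X}{mn-k+2}=-\infty$ — so that both branches and both claims are covered uniformly.
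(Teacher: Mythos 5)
Your proof is correct and takes essentially the same route as the paper's: the minimality of $t(\eta)$ gives the upper bound $\tau(t,\eta)\leq \odidx{X}{t}$, your convex-combination recursion $(s+2)\,\tau(s+1,\eta)=(s+1)\,\tau(s,\eta)+\odidx{X}{s+1}$ handles the lower bound when $t+1$ is a violator (the paper's Case II, where your identity even fixes the paper's off-by-one coefficients in ``$(t+1)\cdot\tau(t+1)=t\cdot\tau(t)+\odidx{X}{t+1}$''), and your averaging argument with $\ijdx{X}{i}{j}-\eta$ sitting at descending rank $r$ (so that it dominates $\odidx{X}{r}$) is exactly the paper's Case I. The only cosmetic differences are that you split on whether the violator set is empty rather than on $t+1=r$ versus $t+1<r$ (both dichotomies are exhaustive, since your recursion also covers a violating $t+1=r$), and that you carry $A=\ijdx{X}{i}{j}-\eta$ explicitly rather than reducing to $\eta=0$ as the paper does.
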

\begin{proof}[Proof of Lemma~\ref{order:lem:tau}]
     Assume $\eta = 0$ and drop $\eta$ from $\tau$ and $t$, since
     the case $\eta > 0$ is equivalent to
     the case
     $\ijdx{X'}{p}{q} = \ijdx{X}{p}{q}$ and $\ijdx{X'}{i}{j} = \ijdx{X}{i}{j} - \eta$.
     Then, the statement~\eqref{order:eqn:condition-A} holds if (i)  $\odidx{X}{t} \geq \tau(t)$ and (ii)  $\tau(t) \geq \odidx{X}{t+1}$.
     Fact (i) is relevant only when $t > 0$, and
     holds by definition (see~\eqref{eqn:order-threshold-t} in main text) because 
     $t+1$ is minimal in $\iset{r}$ for which $ \tau(t+1) > \odidx{X}{t+1}$ holds.
     Fact (ii) holds in either of the possible two cases: 

     \textbf{Case I: [$t+1=r$]}. In this case, note $\tau(t)$ is in fact the unweighted average of $\odidx{X}{t+1} = \ijdx{X}{i}{j}$ and values $\ijdx{X}{p}{q}$ that have higher $\rank{\ijdx{X}{p}{q}} > r$. Therefore we must conclude $\tau(t) \geq \odidx{X}{t+1}$.

     \textbf{Case II: [$t+1 < r$]}. In this case, note that by definition of $t$ we must have $\tau(t+1) > \odidx{X}{t+1}$ is satisfied. 
     Together with $(t+1) \cdot \tau(t+1) = t \cdot \tau(t) + \odidx{X}{t+1}$ we have $\tau(t) > \tau(t+1)$.  
     Therefore we conclude $\tau(t) >\tau(t+1) > \odidx{X}{t+1}$.
\end{proof}

%%%%%%%%%%%%%%%%%%%

\begin{lemma}
    \label{order:lem:tau2}

    Given any index $\ij{i}{j} = \ij{i_1}{j_1}$ and positive $\eta \geq 0$, any $\ijdx{X}{p}{q} \in \Real$ for all $\ij{p}{q} \in \langidx \union \{\ij{i}{j}\}$, for where
    at least one $X_{pq}$ in $\langidx \union \{\ij{i}{j}\}$ is negative.
    Let $1 \leq s \leq t$ denote the least-ranked element that is negative in the ranking 
    $
        \odidx{X}{1} 
        \geq \cdots \geq 
        \odidx{X}{s-1} 
        \geq 0 
        >
        \odidx{X}{s-1} 
        \geq \cdots \geq 
        \odidx{X}{t} 
    $ for $t = t(\eta)$ satisfying~\eqref{eqn:order-threshold-t} in main text.
    Then for $\tau(\cdot) = \tau(\cdot, \eta)$ in~\eqref{eqn:order-threshold-tau}, if 
    $\tau(t) < 0$, we necessarily have: 
    \begin{equation}
        \ijdx{X}{i}{j} < 0,~~\mbox{and}~~
         \sum_{\ell =1}^{s-1} \odidx{X}{\ell} 
         <
         -\ijdx{X}{i}{j},
         %~~\mbox{and}~~ s < t.
         \label{order:eqn:condition-A2}
    \end{equation}
    % \begin{equation}
    %      0 
    %      \leq 
    %      \sum_{\ell =1}^s \odidx{X}{\ell} 
    %      \leq
    %      \ijdx{X}{i}{j}
    %      \label{order:eqn:condition-A2}
    % \end{equation}
\end{lemma}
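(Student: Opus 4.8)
The plan is to mirror the reduction used in Lemma~\ref{order:lem:tau}: since the case $\eta>0$ is equivalent to leaving the entries indexed by $\langidx$ untouched and replacing $\ijdx{X}{i}{j}$ by $\ijdx{X}{i}{j}-\eta$ before re-ranking, I would set $\eta=0$ and write $\tau(\cdot)$ for $\tau(\cdot,0)$, $t$ for $t(0)$ and $r$ for $r(0)$, so that
\[
    \tau(s) = \frac{1}{s+1}\left( \ijdx{X}{i}{j} + \sum_{\ell=1}^{s}\odidx{X}{\ell} \right),
\]
with the descending ranking taken over $\langidx\union\{\ij{i}{j}\}$ and $\odidx{X}{r}=\ijdx{X}{i}{j}$. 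Throughout I would lean on two structural facts already available: $t<r$ (immediate from the definition of $t$) and, for every $1\le u\le t$, the inequality $\tau(u)\le\odidx{X}{u}$ coming from the minimality of $t+1$ in~\eqref{eqn:order-threshold-t} (this is Fact~(i) of Lemma~\ref{order:lem:tau}).

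For the first half of~\eqref{order:eqn:condition-A2} I would argue purely from the position of $\ij{i}{j}$. By hypothesis the first negative entry has rank $s\le t$, so every entry of rank at least $s$ is negative; in particular $\odidx{X}{t}<0$. Since $\ijdx{X}{i}{j}=\odidx{X}{r}$ with $r>t$, descending order gives $\ijdx{X}{i}{j}=\odidx{X}{r}\le\odidx{X}{t}<0$.

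For the second half I would first observe that $\sum_{\ell=1}^{s-1}\odidx{X}{\ell}<-\ijdx{X}{i}{j}$ is equivalent to $\tau(s-1)<0$, because $s\,\tau(s-1)=\ijdx{X}{i}{j}+\sum_{\ell=1}^{s-1}\odidx{X}{\ell}$. The key identity is the one-step recursion $(s+1)\tau(s)=s\,\tau(s-1)+\odidx{X}{s}$, read straight off the definition of $\tau$. Applying $\tau(s)\le\odidx{X}{s}$ (valid since $s\le t$) yields
\[
    s\,\tau(s-1)=(s+1)\tau(s)-\odidx{X}{s}\le(s+1)\odidx{X}{s}-\odidx{X}{s}=s\,\odidx{X}{s},
\]
so $\tau(s-1)\le\odidx{X}{s}<0$, the last inequality because rank $s$ is negative by the definition of $s$. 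The boundary case $s=1$ needs no separate treatment, as then $\tau(0)=\ijdx{X}{i}{j}$ and the claim collapses to the first half; conversely $\tau(s-1)<0$ together with $\sum_{\ell=1}^{s-1}\odidx{X}{\ell}\ge0$ reproves $\ijdx{X}{i}{j}<0$ as a consistency check.

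The main obstacle is choosing the correct pivot index. The tempting move of unpacking the hypothesis $\tau(t)<0$ directly is a dead end: it only yields $\ijdx{X}{i}{j}+\sum_{\ell=1}^{s-1}\odidx{X}{\ell}<-\sum_{\ell=s}^{t}\odidx{X}{\ell}$, whose right-hand side is \emph{positive}, so it does not force the required strict negativity. The resolution is to pivot at $u=s$, the rank of the first negative entry, where the single-step recursion couples $\tau(s-1)$ to $\tau(s)$ and the definitional bound $\tau(s)\le\odidx{X}{s}<0$ can be inserted cleanly; everything else is bookkeeping.
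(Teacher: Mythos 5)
Your proposal is correct and takes essentially the same route as the paper's proof: the same reduction to $\eta=0$, the same key inequality $\tau(s)\leq \odidx{X}{s}$ obtained from the minimality of $t$ in~\eqref{eqn:order-threshold-t}, and the same rearrangement against $\odidx{X}{s}<0$ (your $s\,\tau(s-1)\leq s\,\odidx{X}{s}$ is the paper's rearranged bound, with the factor $s$ stated correctly). The only cosmetic difference is that you derive $\ijdx{X}{i}{j}<0$ directly from $\ijdx{X}{i}{j}=\odidx{X}{r}$ with $r>t\geq s$, whereas the paper reads it off the same sum inequality using $\sum_{\ell=1}^{s-1}\odidx{X}{\ell}\geq 0$.
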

\begin{proof}[Proof of Lemma~\ref{order:lem:tau2}]
     Assume $\eta = 0$ and drop $\eta$ from $\tau$ and $t$, since
     the case $\eta > 0$ is equivalent to
     the case
     $\ijdx{X'}{p}{q} = \ijdx{X}{p}{q}$ and $\ijdx{X'}{i}{j} = \ijdx{X}{i}{j} - \eta$.
     We get that 
     $
        \tau(s) = 
        (s+1)^{-1} \cdot 
        (
            \sum_{\ell =1}^{s-1} \odidx{X}{\ell} + 
            \odidx{X}{s}. + 
            \ijdx{X}{i}{j} 
        )
        \leq
        \odidx{X}{s}
     $, 
     where the inequality follows from
    the minimality of $t$ in \eqref{eqn:order-threshold-t}.
    Re-arranging, we get $
        \sum_{\ell =1}^{s-1} \odidx{X}{\ell} + 
        \ijdx{X}{i}{j} 
        \leq
        \odidx{X}{s}
    $. 
    On the other hand, we had assumed $\odidx{X}{s} < 0$, 
    so therefore we must conclude \eqref{order:eqn:condition-A2}.
\end{proof}
%%%%%%%%%%%%%%%%%%%%

\begin{lemma}
    \label{order:lem:first}
    Given index $\ij{i}{j} = \ij{i_1}{j_1}$ and
    $\eta \geq 0$,
    coefficients $\ijdx{X}{p}{q} \in \Real$ for all $\ij{p}{q} \in \langidx \union \{\ij{i}{j}\}$,
    let $r = r(\eta) = \rank{\ijdx{X}{i}{j} - \eta}$ 
    over the $mn-k+1$ coefficients.
    Let $\tau(\cdot) = \tau(\cdot, \eta)$ and $0 \leq t = t(\eta)$ respectively satisfy~\eqref{eqn:order-threshold-tau} and~\eqref{eqn:order-threshold-t} in the main text.
    For all $\ij{p}{q} \in \langidx \union \{\ij{i}{j}\}$,
    set $\ijdx{Y}{p}{q}$ as: i) if $\rank{\ijdx{X}{p}{q}} \leq t$ or $\ij{p}{q} = \ij{i}{j}$ then set $\ijdx{Y}{p}{q} = \zeroprog{\tau(t,\eta)}$, and ii) if otherwise then set $\ijdx{Y}{p}{q} = \zeroprog{\ijdx{X}{p}{q}} $.
    Then the $mn-k$ and $mn -k + 1$ coefficients $\ijdx{\lambda}{p}{q}$ and $\ijdx{\delta}{p}{q}$ as determined 
    by the first two lines of~\eqref{order:eqn:KKT1}, given this choice for $\ijdx{Y}{p}{q}$ and $\ijdx{X}{p}{q}$, satisfy~\eqref{order:eqn:KKT2}--\eqref{order:eqn:KKT4} together with the chosen $\ijdx{Y}{p}{q}$ and $\ijdx{X}{p}{q}$.
\end{lemma}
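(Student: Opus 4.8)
The plan is to produce a Karush--Kuhn--Tucker certificate directly: the candidate $Y$ is prescribed, so the first two lines of \eqref{order:eqn:KKT1} pin down the stationarity residual $\ijdx{X}{p}{q} - \ijdx{Y}{p}{q}$ that must be apportioned between $\ijdx{\lambda}{p}{q}$ and $\ijdx{\delta}{p}{q}$, and I would show that the sign of this residual together with complementary slackness selects a unique dual-feasible pair. Abbreviate $T := \zeroprog{\tau(t,\eta)}$, so that by construction $\ijdx{Y}{p}{q} = T$ whenever $\rank{\ijdx{X}{p}{q}} \leq t$ or $\ij{p}{q} = \ij{i}{j}$, and $\ijdx{Y}{p}{q} = \zeroprog{\ijdx{X}{p}{q}}$ otherwise. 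Since the statement concerns only \eqref{order:eqn:KKT2}--\eqref{order:eqn:KKT4}, I need not touch the chain conditions \eqref{order:eqn:KKT5}--\eqref{order:eqn:KKT7}; the multiplier $\eta_1$ enters only through line~2 of \eqref{order:eqn:KKT1} and is treated as free here.

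First I would verify primal feasibility \eqref{order:eqn:KKT2}. Non-negativity $\ijdx{Y}{p}{q} \geq 0$ is immediate because every value passes through $\zeroprog{\cdot}$. For the ordering $\ijdx{Y}{p}{q} \leq \ijdx{Y}{i}{j} = T$ I split $\langidx$ by rank: entries with $\rank{\ijdx{X}{p}{q}} \leq t$ equal $T$ and are harmless, while for below-threshold entries Lemma~\ref{order:lem:tau} gives $\ijdx{X}{p}{q} \leq \odidx{X}{t+1} \leq \tau(t,\eta)$. When $\tau(t,\eta) \geq 0$, monotonicity of $\zeroprog{\cdot}$ yields $\ijdx{Y}{p}{q} \leq \zeroprog{\tau(t,\eta)} = T$; when $\tau(t,\eta) < 0$ the same inequality forces $\odidx{X}{t+1} < 0$, so every below-threshold coefficient is negative and $\ijdx{Y}{p}{q} = 0 = T$.

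Next I would solve \eqref{order:eqn:KKT1} for the multipliers and check \eqref{order:eqn:KKT3}--\eqref{order:eqn:KKT4}. For below-threshold entries I set $\ijdx{\lambda}{p}{q} = 0$ and $\ijdx{\delta}{p}{q} = \zeroprog{-\ijdx{X}{p}{q}} \geq 0$, and slackness holds because either $\ijdx{\delta}{p}{q} = 0$ or $\ijdx{Y}{p}{q} = 0$. For pooled entries of $\langidx$ the active constraint is $\ijdx{Y}{p}{q} = \ijdx{Y}{i}{j}$: when $T > 0$, slackness forces $\ijdx{\delta}{p}{q} = 0$ and $\ijdx{\lambda}{p}{q} = \ijdx{X}{p}{q} - T \geq \odidx{X}{t} - \tau(t,\eta) \geq 0$, again by Lemma~\ref{order:lem:tau}; when $T = 0$ all these $\ijdx{Y}{p}{q}$ vanish, rendering both complementarity equations vacuous, and I pick the single non-negative multiplier matching the sign of $\ijdx{X}{p}{q}$. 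It remains to read $\ijdx{\delta}{i}{j}$ off line~2 of \eqref{order:eqn:KKT1}; in the clamped regime $T=0$ this forces $\ijdx{\delta}{i}{j} = -\ijdx{X}{i}{j} - \sum_{\langidx}\ijdx{\lambda}{p}{q}$, and since the only positive $\lambda$'s come from the positive pooled coefficients $\odidx{X}{1},\dots,\odidx{X}{s-1}$, Lemma~\ref{order:lem:tau2} supplies exactly $\sum_{\ell=1}^{s-1}\odidx{X}{\ell} < -\ijdx{X}{i}{j}$, giving $\ijdx{\delta}{i}{j} \geq 0$ with $\ijdx{\delta}{i}{j}\ijdx{Y}{i}{j} = 0$.

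The main obstacle is precisely this last point: the designated top variate $\ij{i}{j} = \ij{i_1}{j_1}$ is forced into the pooled block regardless of its own magnitude, so when its value is strongly negative and the pooled mean clamps to zero one must still certify that its non-negativity multiplier $\ijdx{\delta}{i}{j}$ stays non-negative without disturbing feasibility of the free variables. Lemma~\ref{order:lem:tau2} is the structural fact that rules out an inconsistency here, just as Lemma~\ref{order:lem:tau} controls the below-threshold ordering; consequently the bulk of the work is a disciplined case split on the sign of $\tau(t,\eta)$ that feeds these two lemmas into each KKT equation.
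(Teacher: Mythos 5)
Your proposal is correct and follows essentially the same route as the paper's own proof: primal feasibility \eqref{order:eqn:KKT2} via Lemma~\ref{order:lem:tau}, explicit multipliers chosen by a case split on the sign of $\tau(t,\eta)$ (and of $\ijdx{X}{p}{q}$) matching the paper's assignments, and Lemma~\ref{order:lem:tau2} to certify $\ijdx{\delta}{i}{j} \geq 0$ in the clamped regime. The only detail you elide --- that when $T = \zeroprog{\tau(t,\eta)} > 0$ line~2 of \eqref{order:eqn:KKT1} yields $\ijdx{\delta}{i}{j} = (t+1)\tau(t,\eta) - \bigl(\ijdx{X}{i}{j} - \eta + \sum_{\ell=1}^{t}\odidx{X}{\ell}\bigr) = 0$ directly from \eqref{eqn:order-threshold-tau} --- is exactly the one-line computation the paper performs, so nothing essential is missing.
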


\begin{proof}[Proof of Lemma~\ref{order:lem:first}]
Fix any $\ij{i}{j} = \ij{i_1}{j_1}$.
Assume $\eta = 0$, because
the general case $\eta \geq 0$ is equivalent to
the case
$\ijdx{X'}{p}{q} = \ijdx{X}{p}{q}$ and $\ijdx{X'}{i}{j} = \ijdx{X}{i}{j} - \eta$, hence we simply write $\tau(\cdot) = \tau(\cdot,\eta)$ and $t = t(\eta)$.
We show that~\eqref{order:eqn:condition-C1}-\eqref{order:eqn:condition-C2} below
\begin{eqnarray}
     (\mbox{for all}~\ij{p}{q} \in \langidx)~~~~~~~~~~~~~~~~~
     \ijdx{Y}{p}{q}
     &=& \left\{ 
          \begin{array}{cc}
          \zeroprog{\tau(t)} &  \mbox{if}~\rank{\ijdx{X}{p}{q}} \leq t
          \\
          \zeroprog{\ijdx{X}{p}{q}} & \mbox{otherwise}
          \end{array}
     \right.
     \label{order:eqn:condition-C1}
     \\
     \ijdx{Y}{i}{j} 
     &=& \zeroprog{\tau(t)},
     \label{order:eqn:condition-C2}
\end{eqnarray}
together with~\eqref{order:eqn:condition-A} and~\eqref{order:eqn:condition-A2} in Lemmatta~\ref{order:lem:tau} and~\ref{order:lem:tau2} respectively,
establish $\ijdx{\lambda}{p}{q}$
and $\ijdx{\delta}{p}{q}$ that relate $\ijdx{Y}{p}{q}$ and $\ijdx{X}{p}{q}$ by (the first two equations of)~\eqref{order:eqn:KKT1}, and satisfy
the three KKT conditions~\eqref{order:eqn:KKT2},~\eqref{order:eqn:KKT3} and~\eqref{order:eqn:KKT4} for $ i_i j_1 = ij$, proving the lemma.

\noindent\textbf{KKT condition~\eqref{order:eqn:KKT2}}. 
We first show $\ijdx{Y}{p}{q} \leq  \ijdx{Y}{i}{j} $
for all $\ij{p}{q} \in \langidx$. Consider $\rank{\ijdx{X}{p}{q}} > t$.
Then wherever $\ijdx{X}{p}{q} \geq 0$ we have
$
     0 \leq \ijdx{Y}{p}{q} 
     \stackrel{
          \eqref{order:eqn:condition-C1}
     }{=}
     \ijdx{X}{p}{q} 
     \leq
     \odidx{X}{t+1}
     \stackrel{
          \eqref{order:eqn:condition-A}
     }{\leq}
     \tau(t) 
     \stackrel{
          \eqref{order:eqn:condition-C2}
     }{=}
     \ijdx{Y}{i}{j} 
$
, and whenever $\ijdx{X}{p}{q} < 0$ we have
$
     \ijdx{Y}{p}{q} 
     \stackrel{
          \eqref{order:eqn:condition-C1}
     }{=}
     \zeroprog{\ijdx{X}{p}{q}}
     = 0
     \leq
     \zeroprog{\tau(t)}
     = 
     \ijdx{Y}{i}{j} 
$.
For all other 
$\ij{p}{q} \in \langidx$
where $\rank{\ijdx{X}{p}{q}} \leq t$ we see that
$
     \ijdx{Y}{p}{q} 
     \stackrel{
          \eqref{order:eqn:condition-C1}
     }{=}
     \zeroprog{\tau(t)}
     \stackrel{
          \eqref{order:eqn:condition-C2}
     }{=}
     \ijdx{Y}{i}{j} 
$
.
Thus the first set of inequalities in \eqref{order:eqn:KKT2} follow.
The second set $\ijdx{Y}{p}{q} \geq 0$ for all $\langidx \union \{\ij{i}{j}\}$ follow 
by definitions~\eqref{order:eqn:condition-C1} and~\eqref{order:eqn:condition-C2}.

We next show~\eqref{order:eqn:KKT3} and~\eqref{order:eqn:KKT4} seperately for $\ij{p}{q} \in \langidx$; 
for $\ij{p}{q} \in \langidx$ we choose $\ijdx{\lambda}{p}{q},\ijdx{\delta}{p}{q}$ satisfying 
$
    \ijdx{X}{p}{q}
    + 
    \ijdx{\delta}{p}{q}
    = 
    \ijdx{\lambda}{p}{q}
    + 
    \ijdx{Y}{p}{q}
$
as
\begin{eqnarray}
    (\mbox{if}~~\rank{\ijdx{X}{p}{q}} > t)
    &\Rightarrow&
        \ijdx{\lambda}{p}{q} = 0,~~\mbox{and}~~
        \ijdx{\delta}{p}{q}
        = 
        \zeroprog{\ijdx{X}{p}{q}}
        - 
        \ijdx{X}{p}{q},
        \label{order:eqn:condition-E}
        \\
    (\mbox{if}~~\rank{\ijdx{X}{p}{q}} \leq t)
    &\Rightarrow&
    \left\{ 
         \begin{array}{lll}
         \ijdx{\lambda}{p}{q} = 
         \ijdx{X}{p}{q} - 
         \zeroprog{\tau(t)},
         &
         \ijdx{\delta}{p}{q} = 0,
         &
         \mbox{if}~\tau(t) \geq 0 ~\mbox{or}~ 
         \ijdx{X}{p}{q} \geq 0,
         \\
         \ijdx{\lambda}{p}{q} = 0,
         &
         \ijdx{\delta}{p}{q} = 
         - \ijdx{X}{p}{q},
         &
         \mbox{if}~\tau(t) < 0 ~\mbox{and}~ 
         \ijdx{X}{p}{q} < 0
         \end{array}
    \right.
    \label{order:eqn:condition-D}
\end{eqnarray}
Verify \eqref{order:eqn:condition-E} satisfies
(first equation of)~\eqref{order:eqn:KKT1} by putting
$\ijdx{Y}{p}{q} = \zeroprog{\ijdx{X}{p}{q}}$ as in~\eqref{order:eqn:condition-C1}, 
and similarly verify 
\eqref{order:eqn:condition-D} satisfies the same by putting
$\ijdx{Y}{p}{q} = \zeroprog{\tau(t)}$ whilst considering both cases $\tau(t) \geq 0$ and  $\tau(t) < 0$.
Following that~\eqref{order:eqn:KKT3} and~\eqref{order:eqn:KKT4} for the remaining index
$\ij{i}{j}$ will be shown after.

\noindent \textbf{KKT condition~\eqref{order:eqn:KKT3} for $\ij{p}{q} \in \langidx$}. 
Consider $\rank{\ijdx{X}{p}{q}} > t$.
From~\eqref{order:eqn:condition-E}, this is trivally satisfied for $\ijdx{\lambda}{p}{q}$, and holds for 
$
\ijdx{\delta}{p}{q}
$
because 
$
    \zeroprog{\ijdx{X}{p}{q}}
    - 
    \ijdx{X}{p}{q}
    \geq 
    0
$.
For $\rank{\ijdx{X}{p}{q}} \leq t$
we see from~\eqref{order:eqn:condition-D} 
% that $\ijdx{\lambda}{p}{q}$
%and $\ijdx{\delta}{p}{q}$ satisfy 
the following.
We see $ \ijdx{\lambda}{p}{q} = \ijdx{X}{p}{q} - \zeroprog{\tau(t)} \geq 0$ holds in the event where
\emph{either} $\tau(t) \geq 0 $
or $\ijdx{X}{p}{q} \geq 0$ holds, 
since 
$
\tau(t)
\stackrel{\eqref{order:eqn:condition-A}}{\leq}
\ijdx{X}{p}{q} 
$
whenever
$\rank{\ijdx{X}{p}{q}} \leq t$.
Also in the event where \emph{both} $\tau(t) <0 $ and $X_{pq} <0$ hold, we have that $ \delta_{pq}  = - \ijdx{X}{p}{q} \geq 0 $ because then we have $\ijdx{X}{p}{q}$ to be negative.

\noindent\textbf{KKT condition~\eqref{order:eqn:KKT4} for $\ij{p}{q} \in \langidx$}.
The first set of equalities in~\eqref{order:eqn:KKT4} satisfy as follows.
Whenever 
$\rank{\ijdx{X}{p}{q}} \leq t$, we have
$
    \ijdx{Y}{p}{q}
    =
    \ijdx{Y}{i}{j},
$
see above discussion on KKT condition~\eqref{order:eqn:KKT2}. 
For $\rank{\ijdx{X}{p}{q}} > t$, we had chosen
$
    \ijdx{\lambda}{p}{q} = 0
$,
see~\eqref{order:eqn:condition-E}.
The second set of equalities in~\eqref{order:eqn:KKT4} satisfy as follows: for case 
$\rank{\ijdx{X}{p}{q}} > t$ this is seen given choice \eqref{order:eqn:condition-C1}
putting
$
    \ijdx{Y}{p}{q} =  
    \zeroprog{\ijdx{X}{p}{q}} 
$ 
and choice
\eqref{order:eqn:condition-E}
for
$\ijdx{\delta}{p}{q}$,
and for 
$\rank{\ijdx{X}{p}{q}} \leq  t$ this is seen given
\eqref{order:eqn:condition-C1}
putting
$
    \ijdx{Y}{p}{q} =  
    \zeroprog{\tau(t)} 
$ 
and choice
\eqref{order:eqn:condition-D}
for $\ijdx{\delta}{p}{q}$.

It remains to show KKT conditions
$
    \ijdx{\delta}{i}{j} \geq 0
$
in
\eqref{order:eqn:KKT3} and
$
    \ijdx{\delta}{i}{j} 
    \ijdx{Y}{i}{j} = 0
$
in~\eqref{order:eqn:KKT4}.
%From~\eqref{order:eqn:KKT1}, put choices for $\ijdx{\lambda}{p}{q}$ from~\eqref{order:eqn:condition-E} and \eqref{order:eqn:condition-D} to derive that
% From~\eqref{order:eqn:KKT1}, derive that
% $
%     \ijdx{Y}{i}{j}
%      - 
%      \ijdx{X}{i}{j}
%      -
%      \ijdx{\delta}{i}{j}
% $ equals:
Derive the following:
\[
     \sum_{
        \ij{p}{q} \in \langidx
    }
    \ijdx{\lambda}{p}{q}
    %\stackrel{\eqref{order:eqn:condition-E},\eqref{order:eqn:condition-D}}{=}
    \stackrel{\eqref{order:eqn:condition-E}}{=}
    \sum_{
        \ij{p}{q} \in \langidx:~
        \rank{\ijdx{X}{p}{q}}  \leq t
    } 
    (
        \ijdx{X}{p}{q}
        +
        \ijdx{\delta}{p}{q}
        - 
        \zeroprog{
            \tau(t)
        }
    ) 
    ~~~~=~~~~
    \sum_{
        \ij{p}{q} \in \langidx:~
        \rank{\ijdx{X}{p}{q}}  \leq t
    } 
    \left(
        \ijdx{X}{p}{q}
        +
        \ijdx{\delta}{p}{q}
    \right) 
    - 
    t 
    \cdot 
    \zeroprog{
        \tau(t)
    }.
\]
Then put
$
    \ijdx{Y}{i}{j}
    =
    \zeroprog{
        \tau(t)
    }
$
from~\eqref{order:eqn:condition-C2} 
in
$
    \ijdx{\delta}{i}{j}
    = 
    \ijdx{Y}{i}{j}
    -
    \ijdx{X}{i}{j}
    - \sum_{\ij{p}{q \in \langidx}} 
    \ijdx{\lambda}{i}{j}
$,
and put in the derivation above 
to get
$
    \ijdx{\delta}{i}{j} = 
    (t+1) \cdot 
    \zeroprog{
        \tau(t)
    }
    - 
    (
        \ijdx{X}{i}{j}
        +
        \sum_{
            \ij{p}{q} \in \langidx:~
            \rank{\ijdx{X}{p}{q}}  \leq t
        } 
        (
            \ijdx{X}{p}{q}
            +
            \ijdx{\delta}{p}{q}
        )
    )
$, and simplify to:
\begin{eqnarray}
    \ijdx{\delta}{i}{j}
    &=& \left\{ 
         \begin{array}{ll}
         (t+1)\tau(t) - 
         (
            \ijdx{X}{i}{j} 
            +
            \sum_{
                \ij{p}{q} \in \langidx:~
                \rank{\ijdx{X}{p}{q}}  \leq t
            } 
            \ijdx{X}{p}{q} 
         )
         \stackrel{\eqref{eqn:order-threshold-tau}}{=}
         (t+1)\tau(t) - (t+1) \tau(t) 
         =
         0
         &
         \mbox{if}~\tau(t) \geq 0
         \\
         -\ijdx{X}{i}{j} 
         -
         \sum_{
             \ij{p}{q} \in \langidx:~
             \rank{\ijdx{X}{p}{q}}  \leq t
         } 
         (
            \ijdx{X}{p}{q} 
            +
            \ijdx{\delta}{p}{q} 
         )
         \stackrel{\eqref{order:eqn:condition-D}}{=}
         -\ijdx{X}{i}{j} 
         -
         \sum_{
            \ij{p}{q} \in \langidx:~
            \ijdx{X}{p}{q} \geq 0
         } 
            \ijdx{X}{p}{q} 
        %\stackrel{\eqref{order:eqn:condition-A2}}{\geq}
        \stackrel{\eqref{order:eqn:condition-A2}}{>}
         0,
         &
         \mbox{if}~\tau(t) < 0 
         \end{array}
    \right.
    \nonumber 
\end{eqnarray}
showing both \eqref{order:eqn:KKT3} and \eqref{order:eqn:KKT4} hold, where we had used 
\eqref{order:eqn:condition-A2} from Lemma~\ref{order:lem:tau2} for the final inequality in the 
$\tau(t) < 0$ case.
\end{proof}

%%%%%%%%%%%%%%%%%%%%%%%%

\begin{lemma}
    \label{order:lem:tau-convex}
    For $\tau(\cdot, \eta)$ and $t(\eta)$ in~\eqref{eqn:order-threshold-tau} and~\eqref{eqn:order-threshold-t} in the main text, consider the function
    $T: \Real_+ \mapsto \Real$ defined as $T(\eta) := \zeroprog{\tau(t(\eta),\eta)}$ for any $\eta \geq 0$.
    % For any $X\in \mnReal$ consider the ordering
    % $\odidx{X}{1} \geq \odidx{X}{2} \cdots \geq \odidx{X}{mn-k}$ over $\langidx$.
    %Let $\tilde{t} = t(0)$ and $T(0) = \tau(\tilde{t},0)$.
    Then $T$ is piecewise-linear, monotonic non-increasing and convex in $\eta \geq 0$.
    % l, with 
    % has $n-m-\tilde{t} + 1$ inflection points $(x_s,T_s)$,
    % starting with $(x_0, T_0) = (0, T(0))$, and for $s > 0$ we have $T_s = \odidx{X}{\tilde{t} + s}$ and $x_s$ satisfying $x_s - x_{s-1} = (T_{s-1} - T_s) \cdot (\tilde{t} + s)$.
\end{lemma}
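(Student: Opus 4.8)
The plan is to rewrite $T$ as the pointwise maximum of finitely many affine functions of $\eta$, since piecewise-linearity, monotonicity and convexity are all preserved under such a maximum. To this end, let $\odidx{W}{1} \geq \odidx{W}{2} \geq \cdots \geq \odidx{W}{mn-k}$ denote the values $\{\ijdx{X}{p}{q} : \ij{p}{q} \in \langidx\}$ sorted in decreasing order, and for every $s \in \{0,1,\ldots,mn-k\}$ define the \emph{fixed} affine family
\[
    \widetilde{\tau}(s,\eta) := \frac{1}{s+1}\left(\ijdx{X}{i_1}{j_1} - \eta + \sum_{\ell=1}^s \odidx{W}{\ell}\right),
\]
which, unlike \eqref{eqn:order-threshold-tau}, does not depend on the $\eta$-dependent ordering. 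The goal of the first step is to prove the identity $\tau(t(\eta),\eta) = \max_{0 \leq s \leq mn-k} \widetilde{\tau}(s,\eta)$, so that $T(\eta) = \zeroprog{\max_s \widetilde{\tau}(s,\eta)} = \max\left(0,\ \max_s \widetilde{\tau}(s,\eta)\right)$.

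This identity is the crux. First I would note the weighted-average recursion $\widetilde{\tau}(s,\eta) = \tfrac{1}{s+1}\big(s\,\widetilde{\tau}(s-1,\eta) + \odidx{W}{s}\big)$; since $\odidx{W}{s}$ is non-increasing in $s$, the map $s \mapsto \widetilde{\tau}(s,\eta)$ is unimodal (non-decreasing until the first index where $\odidx{W}{s} < \widetilde{\tau}(s-1,\eta)$, decreasing thereafter), so its maximum is attained at some breakpoint $s^*(\eta)$. Next I would verify $s^*(\eta) < r(\eta)$: for $s < r(\eta)$ the top $s$ elements of the combined ordering are exactly $\odidx{W}{1},\ldots,\odidx{W}{s}$, whence $\widetilde{\tau}(s,\eta) = \tau(s,\eta)$; and at $s = r(\eta)-1$ the average $\widetilde{\tau}(r(\eta)-1,\eta)$ of $\ijdx{X}{i_1}{j_1} - \eta$ with $r(\eta)-1$ strictly larger values exceeds $\ijdx{X}{i_1}{j_1} - \eta \geq \odidx{W}{r(\eta)}$, which forces $\widetilde{\tau}$ onto its decreasing branch by step $r(\eta)$. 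Therefore $s^*(\eta) = t(\eta)$ in the sense of \eqref{eqn:order-threshold-t} (consistent with Lemma~\ref{order:lem:tau}, which sandwiches $\tau(t,\eta)$ between $\odidx{X}{t+1}$ and $\odidx{X}{t}$), the unconstrained maximum equals $\tau(t(\eta),\eta)$, and the indices $s \geq r(\eta)$ lie on the decreasing branch and cannot exceed it.

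Granting the representation, the conclusion is immediate: each $\widetilde{\tau}(s,\cdot)$ is affine in $\eta$ with slope $-1/(s+1) \leq 0$, and the constant function $0$ contributed by $\zeroprog{\cdot}$ is affine and non-increasing. A pointwise maximum of finitely many affine functions is convex and piecewise-linear, and a pointwise maximum of non-increasing functions is non-increasing; applying this to $T(\eta) = \max\left(0,\ \max_s \widetilde{\tau}(s,\eta)\right)$ delivers all three claimed properties at once.

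I expect the main obstacle to be the second step, namely the $\eta$-dependent index range $0 \leq s < r(\eta)$ built into the original $\tau$. The resolution is precisely to pass to the $\eta$-independent family $\widetilde{\tau}$ over the full index set and to certify that the unconstrained maximizer $s^*(\eta)$ always satisfies $s^*(\eta) < r(\eta)$, so that enlarging the range alters neither the maximizer nor its value; this is exactly what permits treating $T$ as a maximum over one fixed finite family and reading off the three properties.
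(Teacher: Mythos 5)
Your proof is correct, but it takes a genuinely different---and more complete---route than the paper's. The paper's own proof of Lemma~\ref{order:lem:tau-convex} is a one-line assertion: since $\tau(s,\eta)$ is a decreasing affine function of $\eta$ for each fixed $s$, it declares $T(\eta)=\zeroprog{\tau(t(\eta),\eta)}$ convex and piecewise-linear ``with inflection points whenever $t(\eta)$ changes value.'' Taken literally, that argument needs the slopes $-1/(t(\eta)+1)$ to be non-decreasing in $\eta$ (equivalently, $t(\eta)$ non-decreasing), which the paper never verifies --- a piecewise selection among affine pieces is not convex in general. You close exactly this gap: by freezing the ordering over $\langidx$ into the $\eta$-independent affine family $\widetilde{\tau}(s,\cdot)$, using the averaging recursion to show $s\mapsto\widetilde{\tau}(s,\eta)$ is unimodal with its first strict decrease exactly at $s=t(\eta)+1$ (indeed $\widetilde{\tau}(s,\eta)-\odidx{W}{s} = \frac{s}{s+1}\left(\widetilde{\tau}(s-1,\eta)-\odidx{W}{s}\right)$, so the strict inequality in \eqref{eqn:order-threshold-t} is equivalent to a strict decrease at step $s$ while $s<r(\eta)$), and checking that all $s\geq r(\eta)$ lie on the non-increasing branch (since $\widetilde{\tau}(r(\eta)-1,\eta)\geq \ijdx{X}{i_1}{j_1}-\eta\geq \odidx{W}{r(\eta)}$), you legitimately obtain the representation $T(\eta)=\max\left(0,\ \max_{0\leq s\leq mn-k}\widetilde{\tau}(s,\eta)\right)$, after which piecewise-linearity, monotonicity and convexity are all immediate. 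One cosmetic slip: the $r(\eta)-1$ values sitting above $\ijdx{X}{i_1}{j_1}-\eta$ need only be at least as large, not strictly larger, in the presence of ties --- but your argument only uses the weak inequality $\odidx{W}{r(\eta)}\leq\widetilde{\tau}(r(\eta)-1,\eta)$, so nothing breaks. In terms of trade-offs: the paper's route is shorter and reads the breakpoint geometry directly off $t(\eta)$, while yours is self-contained, robust to ties, and delivers convexity without any analysis of how $t(\eta)$ varies with $\eta$, which is precisely the step the paper elides.
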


%%%%%%%%%%%%%%%%%%%%%%%

\begin{proof}[Proof of Lemma~\ref{order:lem:tau-convex}]
     $\tau(s,\eta)$ is decreasing linear function of $\eta$ for fixed $s$, therefore $T(\eta) = \zeroprog{\tau(t(\eta),\eta)} = \max(\tau(t(\eta),\eta) ,0)$ is a convex, piecewise linear function with inflection points whenever $t(\eta)$ changes value, 
     with the final inflection point occurring when $\tau(t(\eta),\eta)$ meets the horizontal axis.
    %Eqn.~\eqref{order:eqn:condition-A} implies $\tau(\tilde{t},0) = T(0) > \odidx{X}{\tilde{t}+1}$, and the first inflection point $(x_1,T_1)$ satisfies $T_1 = T(x_1) = \odidx{X}{\tilde{t}+1}$. 
    %The argument repeats with~\eqref{order:eqn:condition-A} holding for $\tilde{t}+s$ for all $x_{s-1} \leq \eta \leq x_s$, until the final $s$ which satisfies $\tilde{t}+s + 1 = mn$.
\end{proof}

%%%%%%%%%%%%%%%%%%

\begin{lemma}[\citep{grotzinger_projections_1984}]
    \label{order:lem:pava}
    For $p,q,r,s \in \iset{k}$ satisfying $ 1 \leq p \leq q \leq r \leq s \leq k$ where $r=q+1$, assume 
    $ \ijdx{\pmean}{p}{q} \leq \ijdx{\pmean}{s}{q}$ for $\pmean$ in~\eqref{order:eqn:pava}.
    Let $\ell$ satisfy $p \leq \ell \leq s$.
    Then for all $\ell \leq q$ we have $\ijdx{\psum}{p}{\ell}  - (\ell-p + 1) \ijdx{\pmean}{p}{s}  \geq 0 $, and for all $\ell > q$ we have
    we have $\ijdx{\psum}{r}{\ell}  - (\ell-r + 1) \ijdx{\pmean}{r}{s}  \geq 0 $.
\end{lemma}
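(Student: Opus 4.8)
The plan is to read both conclusions as statements about \emph{averages}. Writing $\ijdx{\psum}{a}{b} = \sum_{m=a}^{b}\ijdx{X}{i_m}{j_m}$ for the partial sum and $\ijdx{\pmean}{a}{b} = \ijdx{\psum}{a}{b}/(b-a+1)$ for the corresponding block average, the first claim $\ijdx{\psum}{p}{\ell} - (\ell-p+1)\ijdx{\pmean}{p}{s} \geq 0$ is simply $\ijdx{\pmean}{p}{\ell} \geq \ijdx{\pmean}{p}{s}$, and the second, $\ijdx{\psum}{r}{\ell} - (\ell-r+1)\ijdx{\pmean}{r}{s}\geq 0$, is $\ijdx{\pmean}{r}{\ell} \geq \ijdx{\pmean}{r}{s}$. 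So the lemma asserts that prefix averages of the left block dominate the \emph{pooled} average $\ijdx{\pmean}{p}{s}$, while prefix averages of the right block dominate the right block's \emph{own} average $\ijdx{\pmean}{r}{s}$.

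First I would record the one purely arithmetic fact that the hypothesis feeds into. Since $r=q+1$, the index range $[p,s]$ is the disjoint union of $[p,q]$ and $[r,s]$, so $\ijdx{\pmean}{p}{s}$ is the size-weighted mean $\frac{(q-p+1)\ijdx{\pmean}{p}{q} + (s-r+1)\ijdx{\pmean}{r}{s}}{s-p+1}$ of the two adjacent block averages. Reading the hypothesis as the assertion that the left block average is at least the right block average, $\ijdx{\pmean}{p}{q}\geq\ijdx{\pmean}{r}{s}$ (the adjacent-violator/merge condition driving Algorithm~\ref{alg:pava}), this weighting immediately sandwiches the pooled value, $\ijdx{\pmean}{r}{s}\leq\ijdx{\pmean}{p}{s}\leq\ijdx{\pmean}{p}{q}$; in particular $\ijdx{\pmean}{p}{q}\geq\ijdx{\pmean}{p}{s}$, which is the only place the inter-block hypothesis enters.

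The substantive ingredient, and the step I expect to be the crux, is the within-block prefix-average monotonicity: for each block every prefix average is at least the average of the whole block, i.e. $\ijdx{\pmean}{p}{\ell}\geq\ijdx{\pmean}{p}{q}$ for $p\leq\ell\leq q$ and $\ijdx{\pmean}{r}{\ell}\geq\ijdx{\pmean}{r}{s}$ for $r\leq\ell\leq s$. This is the defining invariant of a PAVA level set — a block's fitted value equals its average, and no initial segment of it can have a strictly smaller average without having been split off earlier as a separate, lower-valued block — so I would invoke it as the loop invariant maintained for the blocks $[p,q]$ and $[r,s]$, or, failing a clean citation, prove it directly from the pooling construction. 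Granting it, the conclusions chain out: for $\ell\leq q$ combine $\ijdx{\pmean}{p}{\ell}\geq\ijdx{\pmean}{p}{q}$ with $\ijdx{\pmean}{p}{q}\geq\ijdx{\pmean}{p}{s}$ from the sandwich and multiply by $\ell-p+1$; for $\ell>q$ (so $r\leq\ell\leq s$) the right block's prefix property $\ijdx{\pmean}{r}{\ell}\geq\ijdx{\pmean}{r}{s}$ gives the claim directly after multiplying by $\ell-r+1$. The only real care needed is to orient the hypothesis so that $\ijdx{\pmean}{p}{s}$ sits below $\ijdx{\pmean}{p}{q}$; the remainder is bookkeeping.
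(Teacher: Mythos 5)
Your proposal is correct, but there is strictly speaking no proof in the paper to compare it against: the lemma is stated with a bare citation to \citet{grotzinger_projections_1984}, so your argument is a self-contained substitute rather than a variant of the paper's. Both of your reconstructions of the garbled statement are the right ones, and both are genuinely necessary. First, the printed hypothesis ``$\ijdx{\pmean}{p}{q} \leq \ijdx{\pmean}{s}{q}$'' has scrambled indices, and the orientation you chose, $\ijdx{\pmean}{p}{q} \geq \ijdx{\pmean}{r}{s}$, is forced: it is the coalescing condition of Algorithm~\ref{alg:pava} (merge when $\texttt{val}[\bl] \leq \texttt{val}[\bl-1]$), and under the opposite reading the first conclusion already fails for singleton blocks ($p=q=1$, $r=s=2$ gives $(X_{i_1 j_1}-X_{i_2 j_2})/2 \geq 0$, contradicting $X_{i_1 j_1} < X_{i_2 j_2}$). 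Second, you correctly identified that the within-block prefix-average invariants $\ijdx{\pmean}{p}{\ell} \geq \ijdx{\pmean}{p}{q}$ and $\ijdx{\pmean}{r}{\ell} \geq \ijdx{\pmean}{r}{s}$ are indispensable and silently omitted from the statement: with $(X_{i_1 j_1}, X_{i_2 j_2}, X_{i_3 j_3}) = (0,10,4)$ and blocks $[1,2]$, $[3,3]$, the merge condition $5 \geq 4$ holds yet $\ijdx{\psum}{1}{1} - \ijdx{\pmean}{1}{3} = -14/3 < 0$; such a configuration simply never arises in ePAVA because $0 < 10$ would not have been pooled. Your ``loop invariant'' framing is exactly how the paper deploys the lemma inside the proof of Proposition~\ref{prop:euclidean-projector-order}, where the invariant enters as the recursion hypothesis (``Supposing $\eta_\ell \geq 0$'') through~\eqref{order:eqn:eta2}, so there is no circularity in assuming it for the two pre-merge blocks. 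Granting those two ingredients, your chains are airtight: the size-weighted sandwich $\ijdx{\pmean}{r}{s} \leq \ijdx{\pmean}{p}{s} \leq \ijdx{\pmean}{p}{q}$ plus the left invariant gives the $\ell \leq q$ claim, and the $\ell > q$ claim, as literally printed, is the right block's own invariant. One remark worth recording: if the second conclusion is instead read in the merged form $\ijdx{\psum}{p}{\ell} - (\ell-p+1)\ijdx{\pmean}{p}{s} \geq 0$, which is what dual feasibility of the pooled block actually requires, your ingredients still suffice but need one extra weighted computation: writing $w_1 = q-p+1$, $w_2 = s-r+1$, $t = \ell-r+1$, one obtains
\begin{equation}
    \ijdx{\psum}{p}{\ell} - (\ell-p+1)\,\ijdx{\pmean}{p}{s}
    \;\geq\;
    \frac{w_1 (w_2 - t)\left(\ijdx{\pmean}{p}{q} - \ijdx{\pmean}{r}{s}\right)}{w_1 + w_2}
    \;\geq\; 0,
    \nonumber
\end{equation}
using the right block's invariant $\ijdx{\psum}{r}{\ell} \geq t\,\ijdx{\pmean}{r}{s}$ and the merge condition.
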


%%%%%%%%%%%%%

%%%%%%%%%%%%%%%%%%%%%

\paragraph{Restatement of  Proposition~\ref{prop:euclidean-projector-order}}

%%%%%%%%%%%%

    For $\constraintOne = \nijdx{\order}{i}{j}{k}$ for any 
    $\ij{i_\ell}{j_\ell} \in \ijset{m}{n}$ where $ \ell \in \iset{k}$,
    consider the Euclidean projector $\proj{\constraintOne}{X}$ for any $X \in \mnReal$.
    Let $T(\eta) := \zeroprog{\tau(t(\eta),\eta)}$ for 
    $\tau,t$ in~\eqref{eqn:order-threshold-tau} and~\eqref{eqn:order-threshold-t} in the main text, and let $\langidx$ in~\eqref{eqn:transport-plan-ordering-constraints}.
    Then for any $X \in \mnReal$, ePAVA will successfully terminate with some $\bl, \tilde{\eta}$, \textup{\texttt{le}},\textup{\texttt{ri}}, and \textup{\texttt{val}}. 
    Furthermore, the projection $\hat{X} = \proj{\constraintOne}{x}$ 
    satisfies i) for $\ij{p}{q} \in \langidx$ we have $\ijdx{\hat{X}}{p}{q} = T(\tilde{\eta}) = \textup{\texttt{val}}[1]$ if $\rank{\ijdx{\hat{X}}{p}{q}} \leq t(\tilde{\eta})$ or $\ijdx{\hat{X}}{p}{q} = \ijdx{X}{p}{q}$ otherwise, and 
    ii) for  $\ell \in \iset{k}$
    we have $\ijdx{\hat{X}}{i_\ell}{j_\ell} = \textup{\texttt{val}}[\bl']$ iff $\textup{\texttt{le}}[\bl'] \leq \ell \leq \textup{\texttt{ri}}[\bl']$ for some $\bl' \in \iset{\bl}$.

%%%%%%%%%%%%%%%%%%%

\begin{proof}

Proving Proposition~\ref{prop:euclidean-projector-order} involves exhibiting a $Y \in \mnReal$ satisfying~\eqref{order:eqn:KKT1}--\eqref{order:eqn:KKT7}. %This requires the following lemmata; their proofs are provided below the proof of the proposition.
Given variates $\ijdx{X}{i_\ell}{j_\ell}$ where $\ij{i_\ell}{j_\ell} \notin \langidx$ for all $\ell \in \iset{k}$, define:
\begin{equation}
    \ijdx{\psum}{p}{q} = \sum_{\ell=p}^q \ijdx{X}{i_\ell}{j_\ell},~~~
    \ijdx{\pmean}{p}{q} = \frac{\ijdx{\psum}{p}{q}}{q - p + 1},~~~
    \mbox{for}~1 \leq p \leq q \leq k.
    \label{order:eqn:pava}
\end{equation}

%%%%%%%%%%%%%%%%%%%%%%%%%

The block of operations between Lines 4 and 10 of Alg.~\ref{alg:pava} is termed ``coalescing''~\citep{grotzinger_projections_1984}. 
As coalescing occurs during iterates, in the case whenever $\bl=2$, notice the parameter denoted $\tilde{\eta}$ in Alg.~\ref{alg:pava}  (Line 7) that updates.
We prove that if this parameter is taken to be $\eta_1$ in the KKT conditions~\eqref{order:eqn:KKT1} and~\eqref{order:eqn:KKT6}--\eqref{order:eqn:KKT7}, 
that Alg.~\ref{alg:pava} terminates and when it does with $\hat{X}$,
that $\hat{X}=Y$ satifies the KKT.
The proof is recursive in $\eta_1$ and we initialize $\eta_1 = 0$. 
We use $\eta_\ell$ and $\eta_\ell'$ to denote the current, and next iterate, respectively.

We first invoke~\eqref{order:eqn:KKT1} to fix the relationship between solution and Lagrangian multipliers.
For a given $\eta_1$ value, put $\eta_1 = \eta$ in Lemma~\ref{order:lem:first} to get $mn - k + 1$ coefficients $\ijdx{Y}{p}{q}$ (and the Lagrangians $\ijdx{\lambda}{p}{q},\ijdx{\delta}{p}{q}$) for all $\ij{p}{q} \in \langidx \union \{\ij{i_1}{j_1}\}$ 
satisfying the first two lines of~\eqref{order:eqn:KKT1} and~\eqref{order:eqn:KKT2}--\eqref{order:eqn:KKT4} of the KKT conditions.
Likewise for $\eta_1$, we use~\eqref{order:eqn:KKT1} to derive the $k-1$ Lagrangians that accompany $\ijdx{Y}{i_\ell}{j_\ell}$ as follows.
For any $\bl' \in \iset{\bl}$, ePAVA in Alg.~\ref{alg:pava} sets $\ijdx{Y}{i_\ell}{j_\ell} = \ijdx{Y}{i_{\ell+1}}{j_{\ell+1}}$ for values of $\ell$ that satisfy $\texttt{le}[\bl'] \leq \ell < \ell + 1 \leq \texttt{ri}[\bl']$.
On the other hand if $\ell$ is on the boundary $\ell = \texttt{ri}[\bl']$ and $\ell+1 = \texttt{le}[\bl'+1]$ then ePAVA results in $\ijdx{Y}{i_\ell}{j_\ell} \neq \ijdx{Y}{i_{\ell+1}}{j_{\ell+1}}$.
Therefore using $\ijdx{\psum}{p}{q}$ and $\ijdx{\pmean}{p}{q}$ from~\eqref{order:eqn:pava} and $\nu = \sum_{\ij{p}{q} \in \langidx} \ijdx{\lambda}{p}{q} + \ijdx{\delta}{i_1}{j_1}$, we express 
$\ijdx{Y}{i_\ell}{j_\ell}$ and $\eta_\ell$ for all $\ell \in \iset{k}$:
\begin{eqnarray}
    \eta_\ell &=& \ijdx{\psum}{1}{\ell} - \ell \cdot T(\eta_1) + \nu,
    ~~~~~
    \ijdx{Y}{i_\ell}{j_\ell} = T(\eta_1),~~~~~~\mbox{if}~~
    1 \leq \ell \leq \texttt{ri}[1],
    \label{order:eqn:eta1}
    \\
    \eta_\ell &=& \ijdx{\psum}{p}{\ell} - (\ell-p + 1) \ijdx{\pmean}{p}{q},
    ~~~
    \ijdx{Y}{i_\ell}{j_\ell} = \ijdx{\pmean}{p}{q},~~~~~~\mbox{if}~~
    p = \texttt{le}[\bl'] \leq \ell \leq q = \texttt{ri}[\bl']
    %~~\mbox{for}~b' \in \iset{b}
    \label{order:eqn:eta2}
\end{eqnarray}
where~\eqref{order:eqn:eta2} is satisfied for all  $ 1 < \bl' \leq \bl$, and simply let $\eta_k = 0$ since this does not contradict~\eqref{order:eqn:pava}.
Thus it remains to show, as the iterates of $\eta_1$ update the values of $\ijdx{Y}{p}{q},\ijdx{\lambda}{p}{q},\ijdx{\delta}{p}{q},\ijdx{\eta}{p}{q}$ by Lemma~\ref{order:lem:first} and~\eqref{order:eqn:eta1}--\eqref{order:eqn:eta2},
the remaining KKT conditions ~\eqref{order:eqn:KKT5}--\eqref{order:eqn:KKT7} are satisfied upon termination of ePAVA.
To this end we  must  prove the existence and required properties of the zero (\ie,~$\tilde{\eta}$, taken here to be $\eta_1$) in Line 7.
Specifically, 
let $q = \texttt{ri}[1]$ be the boundary of block 1.
Suppose either a) $\eta_1=0$ and $q=1$, or b) $\ijdx{\psum}{2}{q} - (q-1) \cdot T(\eta_1) + \eta_1  = 0$ is satisfied for some $\eta_1 > 0$ and $q > 1$.
Let $r=q+1$ and 
consider $\ijdx{\psum}{2}{s}$ from~\eqref{order:eqn:pava} for some $s \geq r \geq 2$.
We show below that in either case a) or b), that if $\ijdx{\pmean}{r}{s} \leq T(\eta_1)$ holds, we must have that i)
there exists a zero $\eta'_1$ satisfying $\ijdx{\psum}{2}{s} - (s-1) \cdot T(\eta'_1) + \eta'_1  = 0$, and ii) $T(\eta'_1) \leq T(\eta_1)$ and $\eta'_1 \geq \eta_1$.

The proof of the above properties of the zero follow. 
%$\eta_1 = 0$ or $\eta_1 > 0$ using the monotonic non-decreasing property of $T(\cdot)$ from Lemma~\ref{order:lem:tau-convex}.
In the case $\eta_1 = 0$ and $q=1$, then i) follows by equivalently showing if $T(\eta'_1) - \eta'_1 / (s-1) - \ijdx{\pmean}{2}{s}$ has a zero (divide by $(s-1) \geq 1$ and use \eqref{order:eqn:pava}). 
Indeed, the zero exists at some $\eta_1' \geq 0$ since the non-increasing $T(\eta'_1)$, see Lemma~\ref{order:lem:tau-convex}, and an increasing linear function $\eta'_1/(s-1) + \ijdx{\pmean}{2}{s}$, meets, as the latter starts at a point lower than the former as given by the assumption $\ijdx{\pmean}{2}{s} \leq T(0)$. Furthermore, ii) holds since we showed $\eta_1' \geq 0 = \eta_1$ and by monotonicity of $T(\cdot)$.
In the other case %$\eta_1 > 0$ and $q > 1$,
put
$
    \ijdx{\pmean}{2}{s} = 
    (1-\beta)\ijdx{\pmean}{2}{q}
    + 
    \beta \ijdx{\pmean}{r}{s}
$
for $ 0 \leq \beta = (s-q) / (s-1) \leq 1$, and use the condition b) above to derive
$
    T(\eta_1)
    - \ijdx{\pmean}{2}{s} - \eta_1/(s-1)
    = 
    \beta
    \cdot
    (
        T(\eta_1)
        - 
        \ijdx{\pmean}{r}{s} 
    )
    \geq 0
$
where the inequality follows since we assumed $\ijdx{\pmean}{r}{s} \leq T(\eta_1)$.
Then by monotonicity of $T(\cdot)$, see Lemma~\ref{order:lem:tau-convex}, the zero exists and occurs at some point $\eta_1' \geq \eta_1$ with similar arguments as before showing i) and ii).

\noindent\textbf{KKT condition~\eqref{order:eqn:KKT6} for block 1, case~\eqref{order:eqn:eta1}}:
We now show that for block 1, coalescing recursively maintains non-negativity~\eqref{order:eqn:KKT6}.
Suppose $1,q$ and $r,s$ are boundaries of blocks $1$ and $2$, where $\texttt{ri}[1]=q=r-1$.
Let $\eta'_\ell$ for $\ell \in \iset{s}$ equal~\eqref{order:eqn:eta1} after coalescing, rewritten into $\eta'_\ell = \ijdx{\psum}{2}{\ell} - (\ell-1) T(\eta'_1) + \eta'_1 $, and similarly $\eta_\ell$ for $\ell \in \iset{q}$ denotes~\eqref{order:eqn:eta1} before coalescing.
By recursion assumption either $\eta_1 = 0 $ or
$\ijdx{\psum}{2}{q} - (q-1) \cdot T(\eta_1) + \eta_1  = 0$ is satisfied for some $\eta_1 > 0$.
ePAVA coalesces blocks 1 and 2 if $\texttt{val}[1] \leq \texttt{val}[2]$, or equivalently, if $ \ijdx{\pmean}{r}{s} \leq T(\eta_1)$ for value $\eta_1 \geq 0$ , 
and therefore by recursion assumption, 
we conclude a new zero $\eta'_1 \geq 0$ of Line 7 with properties i) and ii) above exists.
Then for $\ell \leq q$ we have  $\eta'_\ell - \eta_\ell = (\ell-1)(T(\eta_1) - T(\eta_1')) + \eta'_1 - \eta_1$, and by the inequality $\eta'_1 - \eta_1 \geq 0$, we conclude $\eta'_\ell - \eta_\ell \geq 0$.
Next for $\ell > q$, we express the following for some $\alpha = 1 - (\ell-1) / (s-1) \geq 0$:
\begin{eqnarray}
    \eta'_\ell - \eta_\ell 
    &\stackrel{(a)}{=}&
    \eta'_1 + \ijdx{\psum}{2}{q} - (\ell -1) \cdot T(\eta_1')
    + (\ell - q) \ijdx{\pmean}{r}{s} 
    % = 
    % \left(
    %     1 - \frac{\ell-1}{q-1}
    % \right) 
    \stackrel{(b)}{=}
    \alpha
    \cdot
    \left(
        \eta'_1 +  \ijdx{\psum}{2}{q} - (q-1) \ijdx{\pmean}{r}{s}
    \right) 
    \nonumber \\
    &\stackrel{(c)}{\geq}& 
    \alpha
    (q-1)
    \cdot 
    (T(\eta_1)- \ijdx{\pmean}{r}{s}) 
    \geq 0, \nonumber
\end{eqnarray}
where (a) follows from $\eta'_\ell$ in~\eqref{order:eqn:eta1} and $\eta_\ell$ in~\eqref{order:eqn:eta2},
and (b) follows from expressing $
T(\eta'_1) = 
[\ijdx{\psum}{2}{q} 
+ (s- q) \cdot \ijdx{\pmean}{r}{s}
+ \eta'_1
] / (s-1)
$ and collecting terms into $\alpha$,
and (c) follows  as
$
\eta'_1 + \ijdx{\psum}{2}{q}
\geq 
\eta_1 + \ijdx{\psum}{2}{q} = (q-1) \cdot T(\eta_1),
$
and finally the last inequality follows because we only coalesce when
$ \ijdx{\pmean}{r}{s} \leq T(\eta_1)$.

\noindent\textbf{KKT condition~\eqref{order:eqn:KKT6} for block $> 1$, case~\eqref{order:eqn:eta2}}: We show the similar non-negativity property for other blocks.
Suppose $p,q$ and $r,s$ are boundaries of two blocks $\bl'$ and $\bl'+1$, where $r=q+1$ and $\bl' > 1$.
ePAVA coalesces block $\bl'$ and $\bl'+1$
only
if $\texttt{val}[\bl'-1] \leq \texttt{val}[\bl']$, or equivalently, $ \ijdx{\pmean}{p}{q} \leq \ijdx{\pmean}{r}{s}$.
Consider any $p \leq \ell \leq s$. Eqn.~\eqref{order:eqn:eta2} implies that the Lagrangians before and after coalescing are $\eta_\ell = \ijdx{\psum}{p}{\ell}  - (\ell-p + 1) \ijdx{\pmean}{p}{s}$ and $\eta'_\ell = \ijdx{\psum}{r}{\ell}  - (\ell-r + 1) \ijdx{\pmean}{r}{s}$, respectively.
Supposing $\eta_\ell \geq 0$ and
by these expressions for $\eta'_\ell,\eta_\ell$, we thus invoke Lemma~\ref{order:lem:pava} to show $\eta'_\ell \geq 0$ for all $\ell \leq q$ hold after coalescing;
the other case $\ell > q$ also holds similarly by  Lemma~\ref{order:lem:pava}.

\noindent \textbf{KKT condition~\eqref{order:eqn:KKT7}}: We show the coalescing update Lines 4-10, maintains the boundary $s = \texttt{ri}[\bl']$ for any $\bl' \in \iset{\bl}$ property $\eta_s=0$; this proves complementary slackness~\eqref{order:eqn:KKT7} because~\eqref{order:eqn:eta1}--\eqref{order:eqn:eta2} shows $\ijdx{Y}{i_\ell}{j_\ell}$ to differ only across boundaries.
Let $\eta_q$ and $\eta'_s$ denote boundary Lagrangians for the previous and current coalescing update, respectively; the boundaries $p,q$ and $r,s$ for $r=q+1$ are coalesced.
For~\eqref{order:eqn:eta2} for $\bl' > 1$, we have for $\eta'_s = \ijdx{\psum}{p}{s} - (s - p + 1) \ijdx{\pmean}{p}{s}$ at the boundary $\ell=s$, and $\eta_s =0$ by definition~\eqref{order:eqn:pava}.
For~\eqref{order:eqn:eta1} for $\bl'=1$, rewrite~\eqref{order:eqn:eta1} to get the form $ \eta'_s = \ijdx{\psum}{2}{s} - (s-1) \cdot T(\eta'_1) + \eta'_1 $ resembling the equation with the zero shown above. 
If the coalescing  update is executed for the first time, then $\eta_1 = 0$ and $q=1$.
Otherwise $\eta_1 > 0$ and $q > 1$ and by recursion assumption $ \eta_q = \ijdx{\psum}{2}{q} - (q-1) \cdot T(\eta_1) + \eta_1 = 0$.
Therefore in either cases, together with the condition $ \ijdx{\pmean}{r}{s} \leq T(\eta_1)$ that holds for a coalescing step to occur, we conclude by zero property i) that $ \eta'_s =0$.

\noindent\textbf{KKT condition~\eqref{order:eqn:KKT5}}:
Each coalescing step of ePAVA attempts to restore a non-increasing property of $\ijdx{Y}{i_\ell}{j_\ell}$ for all $\ell \in \iset{k}$.
This is only possible if the new values $\ijdx{Y}{i_\ell}{j_\ell}$  that result from coalescing does not increase beyond the values before coalescing.
This is obvious for the case of $\bl > 2$ by definition~\eqref{order:eqn:pava}.
ePAVA eventually 
terminates satisfying~\eqref{order:eqn:KKT5}, since there are finite number of coalescsing steps, and in the worst case arrives at the terminating state $\bl=1$ and $\texttt{ri}[1]=k$ is arrived at, which satisfies~\eqref{order:eqn:KKT5}.
\end{proof}

%%%%%%%%%%%%%%%%%%%%%

We now turn to  the Euclidean projection $\proj{\constraintThree}{X}$ of a matrix $X\in\mnReal$ onto 
the row- and column-sum constraint set $\constraintThree$ for measures $a,b$.
Let $\kron$ denote the matrix Kronecker (\ie, tensor) product.
Observe that the row-sums of any $X \in \mnReal$ can also be obtained via Kronecker equivalence 
$ 
    X \mOnes{n} 
    = 
    (\mIdentity{m} \kron \mOnes{n}^T) x
$,
where $x \in \Real^{mn}$ is a length-$mn$ vector formed from $X \in \mnReal$ by setting 
$ 
    \idx{x}{(k-1)n + \ell} 
    = \ijdx{X}{k}{\ell}
$ 
%for all $\ij{k}{\ell} \in \ijset{m}{n}$.
.
Also be the same equivalence we obtain column-sums of $X \in \mnReal$ 
$
    X^T \mOnes{m}
    = 
    (\mOnes{m} \kron \mIdentity{n}) x
$
from the same equivalent $x \in \Real^{mn}$;
in \texttt{numpy} terminology\footnote{
\texttt{https://numpy.org/doc/stable/reference/generated/numpy.ravel.html}
}~one writes $x = \ravel{X}$.
We thus conclude that Euclidean projection
$
    \hat{X} = \proj{\constraintThree}{X}
$
for any $X \in \mnReal$, can be equivalently obtained by solving the following optimization over $\Real^{mn}$:
\begin{equation}
    \hat{x} = 
    \arg
    \min_{
        y \in \Real^{mn}
    }~
    \frac{1}{2}
    \norm{y-x}{2}^2,~~
    \mbox{s.t.}~~~
    (I_m \kron 1_n^T) y = a
    ,~~~
    (1_m^T \kron I_n) y = b
    \label{rowcolsum:eqn:euclidean-projector}
\end{equation}
and then setting $\hat{X} = \unravel{\hat{x}}$, where $\unravel{\cdot}$ is the operational inverse of numpy function $\ravel{\cdot}$.

Converting the problem to vectors~\eqref{rowcolsum:eqn:euclidean-projector} results in  KKT conditions with $(m+n)$ (Lagrangian) variables $\alpha \in \Real^m$ and $\beta \in \Real^n$ in a system of linear equations:
\begin{eqnarray}
    &\matrix{cc}{
        \mIdentity{mn}    & \Aeq^T \\
        \Aeq                  & 
    }
    \vector{y \\ \alpha \\ \beta}
    = 
    \vector{x \\ a \\ b} 
    \label{rowcolsum:eqn:KKT}
\end{eqnarray}
where here $\Aeq$ is an $(m+n) \times mn$ matrix with first $m$ rows equal $I_m \kron 1_n^T$ and last $n$ rows equal $1_m^T \kron I_n$.
That is to compute $\hat{X} = \proj{\constraintThree}{X}$, one may simply solve~\eqref{rowcolsum:eqn:KKT} for $y$, given $x= \ravel{X}, a$ and $b$.
However we have to pay attention that $\Aeq$ in~\eqref{rowcolsum:eqn:KKT} is \emph{not full-rank}.
%The following Lemma~\ref{rowcolsum:lem:not-full-rank} gives the exact rank of $\Aeq$; we defer the proof to later.
To compute the exact rank of $\Aeq$, we can make use of  Remark~3.1 from \cite{peyre_computational_2020}; specifically, it is $m + n - 1$.
This implies that the \emph{left-inverse} of $\Aeq$ is not equal to $\inv{(\Aeq^T\Aeq)}\Aeq^T$, and instead requires the \emph{pseudo-inverse} of the matrix\footnote{
    This can be verified to satisfy the conditions in p.~257,~{Golub, Gene H. and Van Loan, Charles F.},~\textit{Matrix Computations},~{Johns Hopkins University Press}, Third Edition, 1996.
} $\Aeq \in \Real^{(m+n) \times mn}$ in~\eqref{rowcolsum:eqn:KKT}:
\[
    \pinv{\Aeq} = \matrix{cc}{
        \frac{1}{n} \left(
            \mIdentity{m} \kron \mOnes{n} - \frac{1}{m + n}
        \right) & 
        \frac{1}{m} \left(
            \mOnes{m} \kron \mIdentity{n} - \frac{1}{m + n}
        \right)
    }.
\]

Recall $\meanproj{k}$ is the projection $\meanproj{k} = \frac{1}{k} \mOnes{k} \mOnes{k}^T$, and matrices $M := \mIdentity{m} - \frac{m}{m+n} \meanproj{m}$ and $N := \mIdentity{n} - \frac{n}{m+n} \meanproj{n}$. 
Then~\eqref{rowcolsum:eqn:main-claim} will prove Prop.~\ref{prop:euclidean-projector-sums} 
%for $\proj{\constraintThree}{X}$,
,
restated here in the Kronecker equivalent form~\eqref{rowcolsum:eqn:euclidean-projector} corresponding to numpy function $x= \ravel{X}$.

\begin{lemma}
    \label{rowcolsum:lem:invert}
    Assume $a \in \Real^m $ and $b \in \Real^n$ have same means,~\ie,~$ a^T \mOnes{n} = b^T \mOnes{n}$. 
    Then for any $x\in \mnReal$, the solution of~\eqref{rowcolsum:eqn:KKT} is given by the the following expression for $y \in \Real^{mn}$ for some
    $ \alpha \in \Real^m$ 
    and $\beta \in \Real^n$, where
    \begin{equation}
        y =  \pinv{\Aeq} \vector{ a \\ b }
                +  \left( 
                    \mIdentity{mn} - \Aeq^T 
                    \pinv{
                        \left(\Aeq^T \right)
                    }
                    \right) x
        \label{rowcolsum:eqn:main-claim} 
    \end{equation}
    where $\pinv{A}$ is the \emph{pseudo-inverse} of $\Aeq \in \Real^{(m+n) \times mn}$ in~\eqref{rowcolsum:eqn:KKT}.
\end{lemma}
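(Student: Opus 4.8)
The plan is to read \eqref{rowcolsum:eqn:KKT} as the optimality system of the equality-constrained projection \eqref{rowcolsum:eqn:euclidean-projector} and to solve it in closed form, the only subtlety being that $\Aeq$ is rank-deficient. Write $\lambda = \vector{\alpha \\ \beta}$ and $c = \vector{a \\ b}$. The first block row of \eqref{rowcolsum:eqn:KKT} gives $y = x - \Aeq^T \lambda$ and the second gives $\Aeq y = c$; substituting the former into the latter yields the reduced system $\Aeq \Aeq^T \lambda = \Aeq x - c$. Since $\rank{\Aeq} = m+n-1$ (Remark 3.1 of \cite{peyre_computational_2020}), the Gram matrix $\Aeq \Aeq^T$ is singular, so I cannot simply invert it. First I would therefore establish that the right-hand side lies in $\mathrm{range}(\Aeq\Aeq^T) = \mathrm{range}(\Aeq)$, so a solution $\lambda$ exists.

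This consistency check is the main step, and it is exactly where the hypothesis $a^T\mOnes{m} = b^T\mOnes{n}$ enters. I would first identify the left null space of $\Aeq$: writing out $\Aeq^T = \left[\,\mIdentity{m}\kron\mOnes{n}\;,\;\mOnes{m}\kron\mIdentity{n}\,\right]$, one checks entrywise that $\Aeq^T\vector{\alpha \\ \beta}$ is the matrix whose $(k,\ell)$ entry is $\alpha_k + \beta_\ell$, so it vanishes iff $\alpha = \gamma\mOnes{m}$ and $\beta = -\gamma\mOnes{n}$ for some scalar $\gamma$. Hence $\nullspace{\Aeq^T} = \mspan{\vector{\mOnes{m} \\ -\mOnes{n}}}$, a one-dimensional space consistent with $\rank{\Aeq}=m+n-1$. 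Because $\mathrm{range}(\Aeq) = \nullspace{\Aeq^T}^\perp$, the vector $c = \vector{a \\ b}$ lies in $\mathrm{range}(\Aeq)$ precisely when $\mOnes{m}^T a - \mOnes{n}^T b = 0$, i.e. under the stated equal-mean hypothesis; and $\Aeq x \in \mathrm{range}(\Aeq)$ trivially. Thus $\Aeq x - c \in \mathrm{range}(\Aeq) = \mathrm{range}(\Aeq\Aeq^T)$ and the reduced system is consistent.

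With consistency in hand I would take the particular solution $\lambda = \pinv{(\Aeq \Aeq^T)}(\Aeq x - c)$, giving $y = x - \Aeq^T \pinv{(\Aeq\Aeq^T)}(\Aeq x - c)$, and then invoke two Moore--Penrose identities, both immediate from the SVD of $\Aeq$. The first, $\pinv{\Aeq} = \Aeq^T \pinv{(\Aeq\Aeq^T)}$, rewrites the expression as $y = x - \pinv{\Aeq}(\Aeq x - c) = \pinv{\Aeq} c + (\mIdentity{mn} - \pinv{\Aeq}\Aeq)x$. The second uses that $\pinv{\Aeq}\Aeq$ is the orthogonal projector onto $\mathrm{range}(\Aeq^T)$, hence symmetric, so $\pinv{\Aeq}\Aeq = (\pinv{\Aeq}\Aeq)^T = \Aeq^T (\pinv{\Aeq})^T = \Aeq^T \pinv{(\Aeq^T)}$. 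Substituting both identities produces exactly \eqref{rowcolsum:eqn:main-claim}, and $\alpha,\beta$ are recovered as the two blocks of $\lambda$ (nonunique by the null space above, which is harmless since the lemma only asserts the form of $y$).

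The routine parts are the substitution $y = x - \Aeq^T\lambda$ and the two pseudoinverse identities; the one place requiring genuine care, and the real content of the lemma, is the consistency argument of the second paragraph, where the rank deficiency of $\Aeq$ must be reconciled with the equal-mean hypothesis to guarantee $\Aeq x - c \in \mathrm{range}(\Aeq)$. Once that is settled, matching the pseudoinverse expression to the closed form \eqref{rowcolsum:eqn:main-claim} is purely mechanical.
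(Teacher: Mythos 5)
Your proof is correct, and it reaches \eqref{rowcolsum:eqn:main-claim} by a genuinely different derivation than the paper's. The paper argues in the reverse direction: it posits $y = \pinv{\Aeq}c + \nu$ with $\nu \in \nullspace{\Aeq}$ (writing $c$ for the stacked vector of $a$ and $b$), observes that this satisfies the constraint block whenever $c \in \mspan{\Aeq}$, and then chooses $\nu = \left(\mIdentity{mn} - \Aeq^T\pinv{\left(\Aeq^T\right)}\right)x$ --- the projection of $x$ onto $(\mspan{\Aeq^T})^\perp = \nullspace{\Aeq}$ --- precisely so that $x - y$ lands in $\mspan{\Aeq^T}$ and a multiplier exists; the equal-means hypothesis enters only at the very end, via orthogonality of $c$ to the stacked vector of $\mOnes{m}$ and $-\mOnes{n}$, with the rank fact $\rank{\Aeq} = m+n-1$ cited from \citet{peyre_computational_2020}. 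You instead derive the formula forwards: eliminate $y$ to get the normal system $\Aeq\Aeq^T\lambda = \Aeq x - c$, prove consistency, take $\lambda = \pinv{(\Aeq\Aeq^T)}(\Aeq x - c)$, and convert to the claimed form via the Moore--Penrose identities $\pinv{\Aeq} = \Aeq^T\pinv{(\Aeq\Aeq^T)}$ and $\pinv{\Aeq}\Aeq = (\pinv{\Aeq}\Aeq)^T = \Aeq^T\pinv{(\Aeq^T)}$, both of which you justify correctly (SVD, respectively symmetry of the orthogonal projector onto $\mspan{\Aeq^T}$). Both proofs pivot on the same essential fact --- the equal-means hypothesis places $c$ in $\mspan{\Aeq}$, whose orthogonal complement is one-dimensional --- but your entrywise computation showing that $\Aeq^T$ applied to the stacked multiplier has entries $\alpha_k + \beta_\ell$ makes $\nullspace{\Aeq^T}$, and hence the rank $m+n-1$, self-contained rather than imported from the literature. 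What your route buys is that the closed form is \emph{derived} rather than guessed and verified, and the role of the hypothesis is isolated cleanly in the consistency step; what it costs is the two extra pseudo-inverse identities, which the paper's direct verification avoids. Your closing remark that $\lambda$ is non-unique (exactly up to the null direction you computed) while $y$ is pinned down is consistent with the lemma's ``for some $\alpha,\beta$'' phrasing, so no gap arises there.
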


\begin{proof}[Proof of Lemma~\ref{rowcolsum:lem:invert}]

Suppose $\vector{a \\b} $ lies in $\mspan{\Aeq}$, then
our strategy is to show $y\in \Real^{mn}$ in~\eqref{rowcolsum:eqn:main-claim} satisfies both top (\ie, first $m$ rows) and bottom (\ie, last $n$ rows) of~\eqref{rowcolsum:eqn:KKT}, in two steps.
From the Moore-Penrose property $\Aeq\pinv{\Aeq} \Aeq = \Aeq$ of the pseudo-inverse, we conclude that $\pinv{\Aeq}$ has a \emph{left-inverse} property over $\mspan{\Aeq}$.
Thus, we can construct some
$
    y = \pinv{\Aeq} \vector{a \\ b} + \nu
$
with an unrestricted choice for $\nu \in \nullspace{\Aeq}$, 
and $y$ will satisfy the bottom of~\eqref{rowcolsum:eqn:KKT}.
Next we want to show our construction for $y$ also satisfies the top of~\eqref{rowcolsum:eqn:KKT}.
To do this, we show there exists some $\alpha \in \Real^m, \beta \in \Real^n$ that satisfies
$
    \Aeq^T \vector{\alpha \\ \beta} 
    = 
    x - \nu - \pinv{\Aeq} \vector{a \\ b},
$
for some specific $\nu \in \nullspace{\Aeq}$.
Indeed for any $x\in \Real^{mn}$, there exists some $\nu \in \nullspace{\Aeq}$ such that $x - \nu$ equals the projection of $x$ onto $\mspan{\Aeq^T}$,~\ie,~there exists $\nu$ such that
$
    x - \nu = \Aeq^T \pinv{(\Aeq^T)} x
$
; to check write $\nu = \left(\mIdentity{mn} - \Aeq^T \pinv{(\Aeq^T)}\right) x$ and observe that $\nu$ is then the projection of any $x\in\Real^{mn}$ onto $(\mspan{\Aeq^T})^\perp$, and observe that $(\mspan{\Aeq^T})^\perp = \nullspace{\Aeq}$, agreeing with our assumption $\nu \in \nullspace{\Aeq}$. 
Thus, we have shown that $y = \pinv{\Aeq} \vector{a \\ b} + \nu$ with $\nu = \left(\mIdentity{mn} - \Aeq^T \pinv{(\Aeq^T)}\right) x$ satisfies the optimality conditions~\eqref{rowcolsum:eqn:KKT}, and this form of $y$ is exactly that of~\eqref{rowcolsum:eqn:main-claim}. 

To conclude we need to prove the starting assumption $\vector{a \\ b} \in \mspan{A}$. Because we assumed $1^T a = 1^T b$ we have $\vector{1_m \\ -1_n}$ to be $\perp$ to $\vector{a \\ b}$. On the other hand $\vector{1_m \\ -1_n}$ is in fact in $(\mspan{A})^\perp$. Therefore $\vector{a \\ b} \in \mspan{A}$.
\end{proof}

\paragraph{Restatement of Proposition~\ref{prop:euclidean-projector-sums}} For any $x \in \Real^{mn}$, the Euclidean projection $\hat{x}$ solving~\eqref{rowcolsum:eqn:euclidean-projector} satisfies $\hat{x} = y_1 + (x - y_2)$, where 
$
    y_1 = \frac{1}{n} [M \kron \mOnes{n}] a 
    + \frac{1}{m} [\mOnes{n} \kron N] b
$ and
$
    y_2 = [M \kron \meanproj{n}] x + 
    [\meanproj{m} \kron N] x
$
for $\meanproj{m}, \meanproj{n}, M$ and $N$ defined as before.
\begin{proof}[Proof of Proposition~\ref{prop:euclidean-projector-sums}]
    Put expression for $\pinv{\Aeq}$  into the expression for $y$ in~\eqref{rowcolsum:eqn:main-claim}, set $\hat{x} = y$, and apply basic manipulations.
\end{proof}

%%%%%%%%%

%%%%%%%%%%%%%%%%%%%%%%%%%%%%%%%%%%%%%%%%%%%%%%%%%%%%%%%%%%%%%%%%%%%%%
% Proof on bounds
%%%%%%%%%%%%%%%%%%%%%%%%%%%%%%%%%%%%%%%%%%%%%%%%%%%%%%%%%%%%%%%%%%%%%

\subsection{Bounds}

The goal here is to extend the result in~\citet{kusner_word_2015} and obtain a lower bound (see~\eqref{eqn:optimal-transport-order-constraint-lb} in main text) on the optimal cost~\eqref{eqn:optimal-transport-order-constraint2} with order constraints.
This requires a lower bound (see~\eqref{eqn:final-bound} in main text) on the function $\nijdx{\tailfun}{i}{j}{k} (x, \submodularlb)$ that appears in~\eqref{eqn:optimal-transport-order-constraint-lb} in the main text.
Prop.~\ref{prop:tailfun-bound} derives this lower bound by decoupling the expression into ($m$ or $n$) independent minimizations that can be computed in parallel; \citet{kusner_word_2015} takes the same approach for the simpler problem~\eqref{eqn:optimal-transport}, but here a global constraint (due to the order constraints) still has to be dealt with (see equation below~\eqref{eqn:optimal-transport-order-constraint-lb} in main text).

Here the independent minimizations are related to 
$\knapsack{
    \{
        \idx{\boundcoef}{k}
    \}_{k \in \iset{n}}, u, \alpha
}$, see~\eqref{eqn:packing} in the main text,
where $u$ and $\alpha$ represent the per item $\idx{x}{k}$ capacity and total budget, respectively. 
We first present 
Lemmatas~\ref{bounds:lem:knapsack-other-row}--\ref{bounds:lem:knapsack-current-row} that show the quantities $\tailfunlbRow{ijk}{x,D,a}$ and $\tailfunlbCol{ijk}{x,D,b}$ (in the lower bound~\eqref{eqn:final-bound}, main text) 
are convex and piecewise-linear.
That is the lower bound in Proposition~\ref{prop:tailfun-bound} can be efficiently evaluated   by bisection; it only requires an upfront sorting complexity of $\bigO{n\log n}$ when parallelized.

\begin{lemma}
    \label{bounds:lem:knapsack-other-row}
    The solution to $\knapsack{\{\boundcoef\}_{k \in \iset{n}}, u, \alpha}$ for $0 \leq \alpha/n < u \leq \alpha \leq 1$ is given by 
    \begin{equation}
        \knapsack{\{\boundcoef\}_{k \in \iset{n}}, u, \alpha}  = 
            u \left[
                \sum_{k = 1}^\ell 
                \left(
                    \odidx{\boundcoef}{k}
                    - 
                    \odidx{\boundcoef}{\ell+1}
                \right)
            \right]
            + \alpha  
            \odidx{\boundcoef}{\ell+1}
        \label{bounds:eqn:knapsack-other-row}
    \end{equation}
    where $\ell = \floor{\alpha/u} < n$,
    and $\odidx{\boundcoef}{k}$ is the $k$-th coefficient in increasing order 
    $
        \odidx{\boundcoef}{1} \leq \odidx{\boundcoef}{2} \leq \cdots \leq \odidx{\boundcoef}{n}
    $. 
    For fixed $\alpha$, it is 
    convex piecewise-linear and monotone non-increasing in $u$, with inflection points at $u = \alpha/i$ for $i = n-1, n-2, \cdots, 1$.
\end{lemma}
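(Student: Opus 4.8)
The plan is to recognize \eqref{eqn:packing} as a continuous (fractional) knapsack linear program and to solve it by a greedy exchange argument, then to analyze the resulting closed form as a function of $u$ with $\alpha$ fixed. Since the objective $\sum_i \boundcoef_i x_i$ is linear and the feasible set is a box intersected with a single budget hyperplane (hence a bounded polytope), an optimum exists and is attained. I would establish its form by exchange: if a feasible point places mass $x_i > 0$ on a coordinate with large coefficient while some coordinate $j$ with $\boundcoef_j \le \boundcoef_i$ remains unsaturated ($x_j < u$), then shifting $\min(x_i, u - x_j)$ from $i$ to $j$ preserves feasibility and does not increase the objective. Iterating forces all mass onto the coordinates with smallest coefficients, saturated at $u$ from the bottom of the sorted order $\odidx{\boundcoef}{1} \le \cdots \le \odidx{\boundcoef}{n}$ until the budget $\alpha$ is exhausted.

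Next I would pin down the number of saturated coordinates. The hypothesis $0 \le \alpha/n < u \le \alpha$ gives $1 \le \alpha/u < n$, so $\ell := \floor{\alpha/u}$ satisfies $1 \le \ell < n$, ensuring the index $\ell+1 \le n$ is valid, and $0 \le \alpha - \ell u < u$ guarantees the leftover budget fits within the capacity of a single further coordinate. Thus the greedy optimum assigns $x = u$ to the $\ell$ smallest coefficients, $x = \alpha - \ell u$ to the $(\ell+1)$-st, and $0$ to the rest, for objective value $u\sum_{k=1}^\ell \odidx{\boundcoef}{k} + (\alpha - \ell u)\odidx{\boundcoef}{\ell+1}$. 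Adding and subtracting $\ell u\,\odidx{\boundcoef}{\ell+1}$ and collecting terms yields exactly \eqref{bounds:eqn:knapsack-other-row}.

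Finally, for the dependence on $u$ with $\alpha$ fixed, I would partition the domain into intervals $(\alpha/(i+1),\,\alpha/i]$ on which $\floor{\alpha/u} = i$ is constant. On each such interval \eqref{bounds:eqn:knapsack-other-row} is affine in $u$ with slope $s_i := \sum_{k=1}^i(\odidx{\boundcoef}{k} - \odidx{\boundcoef}{i+1}) \le 0$, since $\odidx{\boundcoef}{k} \le \odidx{\boundcoef}{i+1}$ for $k \le i$; this gives the monotone non-increasing property. I would then verify continuity at each breakpoint $u = \alpha/i$, where the leftover mass $\alpha - iu$ vanishes and both adjacent affine pieces collapse to $(\alpha/i)\sum_{k=1}^i \odidx{\boundcoef}{k}$. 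Convexity follows from showing the slope is non-decreasing as $u$ grows (equivalently as $i$ decreases): a direct computation gives $s_{i-1} - s_i = i\,(\odidx{\boundcoef}{i+1} - \odidx{\boundcoef}{i}) \ge 0$ by the sorted ordering, so the piecewise-linear function has increasing slopes with breakpoints precisely at $u = \alpha/i$ for $i = n-1, n-2, \ldots, 1$.

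The greedy optimality via exchange is standard and poses little difficulty; the genuinely fiddly part is the slope bookkeeping of the last paragraph, namely computing $s_i$ correctly, checking that the two affine pieces agree at each knot $u=\alpha/i$, and extracting the correct sign of $s_{i-1}-s_i$ to conclude convexity. Keeping the linear-program interpretation explicit is what makes both the closed form and its convexity fall out cleanly.
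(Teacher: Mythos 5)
Your proof is correct and follows essentially the same route as the paper's: the greedy closed form with $\ell = \floor{\alpha/u}$, the partition into intervals $(\alpha/(i+1), \alpha/i]$ where $\ell = i$, continuity at the knots where both pieces equal $(\alpha/i)\sum_{k=1}^{i}\odidx{\boundcoef}{k}$, non-positive slopes for monotonicity, and the identical slope-difference computation $i\,(\odidx{\boundcoef}{i+1} - \odidx{\boundcoef}{i}) \geq 0$ for convexity. Your only addition is the explicit exchange argument justifying greedy optimality, which the paper simply asserts as standard.
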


\begin{proof}
    The solution (of~\eqref{eqn:packing}, main text) is by greedy prioritization of lower costs in the ordering $\odidx{\boundcoef}{k}$, hence:
    \begin{equation}
        \knapsack{\{\phi_k\}_{k \in \iset{n}}, u, \alpha}  = 
        \sum_{k = 1}^\ell 
        \odidx{\boundcoef}{k}
        u + (\alpha - \ell \cdot u )  
        \odidx{\boundcoef}{\ell+1}
        \label{bounds:eqn:1-raw-solution}
    \end{equation}
    where $\ell = \floor{\alpha/u} < n$, and clearly
    $\alpha/n \leq u \leq \alpha$ is required to have a at least one feasible solution.
    Gathering coefficients of $u$ we arrive at~\eqref{bounds:eqn:knapsack-other-row} from~\eqref{bounds:eqn:1-raw-solution}, 
    and from~\eqref{bounds:eqn:knapsack-other-row} it is clearly piecewise-linear with inflection points at values of $u$ that cause $\ell = \floor{\alpha/u}$ to change value; there are $n-1$ of them at values 
    $u = \alpha/i$ for $i = n-1, n-2, \cdots, 1$ making a total of $n$ intervals
    $( \alpha/(i+1), \alpha/i ]$.
    Pick some $i < n$, fix some $0 \leq \alpha \leq 1$, and consider
    $u$ in the interval 
    $( \alpha/(i+1), \alpha/i ]$.
    For our choice of $i$ we conclude the sequence of inequalities
    $
        i + 1 
        = \floor{\frac{\alpha}{\alpha/(i+1)}}
        > 
        \floor{\frac{\alpha}{u}}
        % = \floor{\frac{\alpha}{\alpha/(i+1)+\delta}}
        = \ell 
        \geq \floor{\frac{\alpha}{\alpha/i}}
        = i
    $
    and so we conclude $\ell$ in~\eqref{bounds:eqn:1-raw-solution} will equal $\ell=i$ for any $u \in ( \alpha/(i+1), \alpha/i ]$.
    The function~\eqref{bounds:eqn:knapsack-other-row} is piecewise-linear because evaluating the left-limit $\alpha / (i+1) \leftarrow u$ of interval
    $( \alpha/(i+1), \alpha/i ]$ (when $\ell=i$), equals the value evaluated at the rightmost point of the neighboring interval $(\alpha/(i+2), \alpha/(i+1) ]$ (when $\ell=i+1$):
    \[
        \knapsack{\{\phi_k\}_{k \in \iset{n}}, \frac{\alpha}{i+1}, \alpha}  
        = 
        \lim_{
            \frac{\alpha}{i+1}
            \leftarrow
            u 
        } \knapsack{\{\phi_k\}_{k \in \iset{n}}, u, \alpha}  
        = 
        \frac{\alpha}{i+1} 
            \sum_{k = 1}^{i+1} 
            \odidx{\boundcoef}{k}.
    \]
    From~\eqref{bounds:eqn:1-raw-solution} it the gradient is non-positive everywhere due to the non-positivity of the summands 
    $
        \odidx{\boundcoef}{k}
        - 
        \odidx{\boundcoef}{i+1}
        \leq 0
    $
    since $k \leq \ell$, implying~\eqref{bounds:eqn:1-raw-solution} is monotone non-increasing in $u$ everywhere.
    A monotone non-increasing function is convex if its gradient is non-decreasing. 
    Indeed, if we subtract the gradient of the $i$-th interval (where $\ell=i$), to the from that of the $(i+1)$-th interval to the right (where $\ell=i+1$):
    \[
        \left[
            \sum_{k = 1}^{i-1}
            \left(
                \odidx{\boundcoef}{k}
                - 
                \odidx{\boundcoef}{i}
            \right)
        \right] 
        - 
        \left[
            \sum_{k = 1}^i
            \left(
                \odidx{\boundcoef}{k}
                - 
                \odidx{\boundcoef}{i+1}
            \right)
        \right] 
        = i \cdot \left(
                \odidx{\boundcoef}{i+1}
                - 
                \odidx{\boundcoef}{i}
            \right)
        \geq 0.
    \]
    We have thus proved that~\eqref{bounds:eqn:knapsack-other-row} is piecewise-linear, monotone non-increasing and convex.
\end{proof}

\begin{lemma}
    \label{bounds:lem:knapsack-current-row}
    The solution to 
    $\knapsack{
        \{
            \idx{\boundcoef}{k}
        \}_{k \in \iset{n-1}}, u, \alpha-u}$ 
    for $0 \leq \alpha/n \leq x \leq \alpha \leq 1$ is:
    \begin{equation}
        \knapsack{\{
            \idx{\boundcoef}{k}
        \}_{k \in \iset{n-1}}, u, \alpha - u}  = 
            u \left[
                - \odidx{\boundcoef}{\ell}
                + \sum_{k = 1}^{\ell-1} 
                \left(
                    \odidx{\boundcoef}{k}
                    - \odidx{\boundcoef}{\ell}
                \right)
            \right]
            + \alpha  
              \odidx{\boundcoef}{\ell}
        \label{bounds:eqn:knapsack-current-row}
    \end{equation}
    where $\ell = \floor{ \alpha/u}$
    and $\odidx{\boundcoef}{k}$ is the $k$-th coefficient in increasing order 
    $
        \odidx{\boundcoef}{1} \leq \odidx{\boundcoef}{2} \leq \cdots \leq \odidx{\boundcoef}{n}
    $. 
    For a for a fixed $\alpha$, it is convex piecewise-linear and non-increasing in $u$ with the inflection points $u = \alpha/i$ for $i = n-1, n-2, \cdots, 1$.
\end{lemma}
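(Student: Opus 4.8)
The plan is to follow the proof of Lemma~\ref{bounds:lem:knapsack-other-row} almost verbatim, the only structural difference being that the budget $\alpha - u$ in $\knapnu$ now itself depends on the capacity $u$. First I would note that \eqref{eqn:packing} with coefficients $\{\boundcoef\}_{k \in \iset{n-1}}$, per-item capacity $u$, and total budget $\alpha - u$ is solved by the same greedy prioritization: sort the coefficients increasingly as $\odidx{\boundcoef}{1} \leq \cdots \leq \odidx{\boundcoef}{n-1}$ and saturate the cheapest items to capacity $u$ until the budget runs out. The one bookkeeping change is that, since the budget is $\alpha - u$ rather than $\alpha$, the number of items filled to capacity is $\floor{(\alpha - u)/u} = \floor{\alpha/u} - 1 = \ell - 1$, one fewer than in Lemma~\ref{bounds:lem:knapsack-other-row}, and the $\ell$-th cheapest item absorbs the remainder $\alpha - u - (\ell-1)u = \alpha - \ell u$.

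This yields the raw solution
\[
    \knapsack{\{\boundcoef\}_{k \in \iset{n-1}}, u, \alpha - u} = \sum_{k=1}^{\ell-1} \odidx{\boundcoef}{k}\, u + (\alpha - \ell u)\,\odidx{\boundcoef}{\ell},
\]
and gathering the coefficients of $u$, then rewriting $-\ell\,\odidx{\boundcoef}{\ell} = -\odidx{\boundcoef}{\ell} - \sum_{k=1}^{\ell-1}\odidx{\boundcoef}{\ell}$, produces exactly the closed form~\eqref{bounds:eqn:knapsack-current-row}. For the structural claims I would reuse the interval analysis of Lemma~\ref{bounds:lem:knapsack-other-row}: $\ell = \floor{\alpha/u}$ is constant and equal to $i$ on each interval $(\alpha/(i+1), \alpha/i]$, so the breakpoints sit at $u = \alpha/i$ for $i = n-1, n-2, \ldots, 1$. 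On such an interval the function is affine in $u$ with slope $\sum_{k=1}^{\ell-1}\odidx{\boundcoef}{k} - \ell\,\odidx{\boundcoef}{\ell}$. Monotone non-increasing then follows because this slope equals $\sum_{k=1}^{\ell-1}(\odidx{\boundcoef}{k} - \odidx{\boundcoef}{\ell}) - \odidx{\boundcoef}{\ell} \leq 0$, each summand being non-positive by the ordering and the trailing term by the non-negativity of the coefficients (in the application $\boundcoef = D \geq 0$). Convexity follows by comparing adjacent slopes: the slope with $\ell = i$ minus the slope with $\ell = i+1$ simplifies to $(i+1)(\odidx{\boundcoef}{i+1} - \odidx{\boundcoef}{i}) \geq 0$, so the slope is non-decreasing as $u$ grows (equivalently as $\ell$ drops).

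The one step that requires genuine care — and the only place the $-u$ in the budget actually bites — is verifying continuity across a breakpoint, which is what upgrades "affine on each piece and convex in slope" to a bona fide continuous piecewise-linear function. I would evaluate the piece with $\ell = i$ at the left endpoint $u = \alpha/(i+1)$ and the piece with $\ell = i+1$ at its right endpoint $u = \alpha/(i+1)$, and check that both collapse to $\tfrac{\alpha}{i+1}\sum_{k=1}^{i}\odidx{\boundcoef}{k}$. The point is that the shift by one in the fully-filled count, relative to Lemma~\ref{bounds:lem:knapsack-other-row}, is precisely compensated by the extra $-u$ in the budget, so the two one-sided values agree. Once continuity, monotonicity, and the slope comparison are all in place, the three asserted properties — convex, piecewise-linear, non-increasing with the stated inflection points — follow immediately.
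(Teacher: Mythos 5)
Your proposal is correct and takes essentially the same route as the paper's own proof: the greedy raw solution $\sum_{k=1}^{\ell-1}\odidx{\boundcoef}{k}u + (\alpha-\ell u)\odidx{\boundcoef}{\ell}$, the interval analysis with $\ell=\floor{\alpha/u}$ constant on $(\alpha/(i+1),\alpha/i]$, the continuity check yielding $\frac{\alpha}{i+1}\sum_{k=1}^{i}\odidx{\boundcoef}{k}$ at each breakpoint, and the slope difference $(i+1)(\odidx{\boundcoef}{i+1}-\odidx{\boundcoef}{i})\geq 0$ for convexity all match the paper (which itself defers to the proof of the preceding lemma and only outlines these differences). If anything, you are slightly more careful than the paper: you explicitly flag that the extra $-\odidx{\boundcoef}{\ell}$ in the slope makes monotonicity depend on $\boundcoef\geq 0$ (which holds since $\submodularlb=D\geq 0$ in the application), a point the paper's terser argument leaves implicit.
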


\begin{proof}
    The proof is identical to that of  Lemma~\ref{bounds:lem:knapsack-other-row}; we outline the key differences.
    The expression~\eqref{bounds:eqn:knapsack-current-row} is derived similarly as~\eqref{bounds:eqn:knapsack-other-row} by the same greedy strategy
    \[
        \knapsack{
            \{
                \idx{\boundcoef}{k}
            \}_{k \in \iset{n-1}}, u, \alpha - u}
        = 
        \sum_{k=1}^{\ell-1} 
        \odidx{\boundcoef}{k} u
        + (\alpha - \ell u) 
        \odidx{\boundcoef}{\ell}.
    \]
    To show the monotonic non-increasing behavior and convexity proceed in the similar manner.
    The $n-1$ inflection points for $i = n-1, n-2, \cdots 1$ are the same, thus so are the piecewise intervals;
    evaluating at the left-limit $\alpha / (i+1) \leftarrow u$ of the interval $( \alpha/(i+1), \alpha/i ]$ equals
    $
        \frac{\alpha}{i+1} \sum_{k=1}^i 
        \odidx{\boundcoef}{k} 
    $,
    as does the rightmost point $u=\alpha/(i+1)$ of the interval $( \alpha/(i+2), \alpha/(i+1) ]$.
    The gradient is indeed non-decreasing, again as seen by subtracting the gradient in the $i$-th
    $( \alpha/(i+1), \alpha/i ]$ from that of $(i-1)$-th interval which we obtain
    $
        i \cdot \left(
            \odidx{\boundcoef}{i}
            - \odidx{\boundcoef}{i-1}
        \right) 
        \geq 0
    $ for any choice of $i < n$.
\end{proof}

\paragraph{Complexity of computing lower bounds in Proposition~\ref{prop:tailfun-bound}:} 
The lower bound (see~\eqref{eqn:final-bound}, main text) suggests to evaluate $L_{1ijk}(x,\submodularlb,a)$ and $L_{2ijk}(x,\submodularlb,b)$ at multiple values of  $\alpha \leq x \leq \beta $, or
equivalently, evaluate $\knapmu$ and $\knapnu$ (see~\eqref{eqn:function-mu-and-nu}, main text) over the same.
But  $\knapmu$ and $\knapnu$ are defined using
$
    \knapsack{\{\boundcoef\}_{i \in \iset{n}}, x, a_k} 
$ and
$
    \knapsack{\{\boundcoef\}_{i \in \iset{n-1}}, x, a_i-x} 
$,
and by Lemmata~\ref{bounds:lem:knapsack-other-row}--\ref{bounds:lem:knapsack-current-row} they are
piecewise-linear $x$ with gradients and coefficients determined upfront by an $\bigO{n \log(n)}$ sort on the coefficients $\idx{\boundcoef}{k}$.
In other words, the cost of $\bigO{n \log(n)}$ is only one-time, and once paid,  the function~\eqref{bounds:eqn:knapsack-other-row} can be evaluated for multiple points of $x$ in $\bigO{1}$ time.
Finally, each $\knapmu,\knapnu$ summand in the expressions for 
$L_{1ijk}(x,\submodularlb,a)$ and $L_{2ijk}(x,\submodularlb,b)$ can be computed independently in parallel.

\paragraph*{Restatement of Proposition~\ref{prop:tailfun-bound}}

   Let
    $
        \alpha_1 = \max_{i=1}^m \frac{a_i}{n},
        \beta_1 = \max_{i=1}^m a_i,
    $, and
    $
        \alpha_2 = \max_{j=1}^n \frac{b_j}{m},
        \beta_2 = \max_{j=1}^n b_j.
    $
    Then for any $\nij{i}{j}{k}$ set that defines the order constrained OT (see~\eqref{eqn:optimal-transport-order-constraint2} in main text) where $\ni{i}{k}=i_1,i_2,\cdots, i_k$ (and $\ni{j}{k}$) do not repeat row (or column) indices, the minimum $\nijdx{\tailfun}{i}{j}{k} (x, \submodularlb)$ (see~\eqref{eqn:optimal-transport-order-constraint-lb} in main text)
    is lower-bounded by
    \begin{eqnarray}
        \nijdx{\tailfun}{i}{j}{k} (x, \submodularlb)
        \!\!\! &\geq& \!\!\!
        \left\{
            \begin{array}{cc}
                \tailfunlbRow{ijk}{x,\submodularlb,a} & \mbox{for}~~ \alpha_1 \leq x \leq \beta_1
                \\
                \tailfunlbCol{ijk}{x,\submodularlb,b} & \mbox{for}~~  \alpha_2 \leq x \leq \beta_2
            \end{array}
        \right.
        \nonumber
    \end{eqnarray}
    where $\tailfunlbRow{ijk}{x,\submodularlb,a}$ and $\tailfunlbCol{ijk}{x,\submodularlb,b}$ resp. equal
    $
        \sum_{\ell \in \iset{k}}
        %\sum_{p \in \ni{j}{k}}
        \knapnu
        \left(x, 
            \{
                \idx{\boundcoef}{\ij{i_\ell}{q}}
            \}_{q \in \iset{n} \setminus \{j_\ell\}},
            a_{i_\ell}
        \right)
        + 
        \sum_{p \notin \ni{i}{k}} 
        \knapmu
        \left( 
            x, 
            \{
                \idx{\boundcoef}{\ij{p}{q}}
            \}_{q \in \iset{n}}, 
            a_p 
        \right)
    $
    and
    $
        \sum_{\ell \in \iset{k}}
        %\sum_{q \in \ni{j}{k}}
        \knapnu
        \left(
            x, 
            \{
                \idx{\boundcoef}{\ij{p}{j_\ell}}
            \}_{p  \in \iset{m} \setminus \{i_\ell\}}, 
            b_{j_\ell}
        \right)
        %
        %&&+ 
        +
        \sum_{q \notin \ni{j}{k}} 
        \knapmu
        \left(
            x, \{
                \idx{\boundcoef}{\ij{p}{q}}
            \}_{p \in \iset{m}}, 
            b_q
        \right).
    $

\begin{proof}[Proof of Proposition~\ref{prop:tailfun-bound}]
    For brevity we only prove the top bound that exists for $x$ in the range $\alpha_1 \leq x \leq \beta_1$ (notated as $\tailfunlbRow{ijk}{x,\submodularlb,a}$ in~\eqref{eqn:tailfun-bound} of main text); the other $ \tailfunlbCol{ijk}{x,\submodularlb,b}$ will follow similarly and is omitted.

    The bound (see \eqref{eqn:tailfun-bound}, main text) is obtained by relaxing the constraint set (a la~\eqref{eqn:optimal-transport-order-constraint-tailfunction}, main text):
    \begin{eqnarray*}
        &&\left\{
            \tplan \in \mnReal:
            \tplan \in \tpoly,~
            \tplan \in \nijdx{\order}{i}{j}{k},~
            \ijdx{\tplan}{i_1}{j_1} = \cdots = \ijdx{\tplan}{i_k}{j_k} = x
        \right\} 
        \\
        &\subseteq&
        \bigcap_{p \in \iset{m}}
        \left\{
            \tplan \in \mnReal_+:
            \sum_{q \in \iset{n}} 
            \ijdx{\tplan}{p}{q}
            = 
            a_q,~
            \ijdx{\tplan}{i_1}{j_1} = x, \cdots,
            \ijdx{\tplan}{i_k}{j_k} = x,~
            % \ijdx{\tplan}{k}{\ell} \leq x~\forall~\ell \in \iset{n}
            \max_{q\in\iset{n}} \ijdx{\tplan}{p}{q} \leq x
        \right\}.
    \end{eqnarray*}
    The $p$-th set on the RHS only involves coefficients $\ijdx{\tplan}{p}{q}$ found in the $k$-th row;
    after relaxation and taking into account the linear form of $\nijdx{\tailfun}{i}{j}{k} (x, \submodularlb)$,  
    and the fact that for each $\ell$-th row $i_\ell$, the $j_\ell$-th column does not repeat in $\ni{j}{k}$,
    we obtain $m$ independent minimizations of the following form:
    \begin{eqnarray*}
        \begin{array}{lll}
            \min 
            \sum_{q \in \iset{n}} 
            \ijdx{\boundcoef}{p}{q}
            \ijdx{\tplan}{p}{q}
            &
            \mbox{s.t.}~~~~~~
            \sum_{q \in \iset{n}} 
            \ijdx{\tplan}{p}{q}
            = 
            a_p,
            ~~\mbox{and}~~
            0 \leq \ijdx{\tplan}{p}{q} \leq x,
            &
            \mbox{for}~p \notin \ni{i}{k}
            \\
            \min 
            \sum_{q \in \iset{n} \setminus \{j_\ell\}} 
            \ijdx{\boundcoef}{i_\ell}{q}
            \ijdx{\tplan}{i_\ell}{q}
            &
            \mbox{s.t.}~~~~~~
            \sum_{q \in \iset{n} \setminus \{j_\ell\}} 
            \ijdx{\tplan}{i_\ell}{q}
            = 
            a_{i_\ell} - x,
            ~~\mbox{and}~~
            0 \leq \ijdx{\tplan}{i_\ell}{q} \leq x
            % ~\forall~\ell \neq i
            ,
            &
            \mbox{for}~\ell \in \iset{k}
        \end{array}
    \end{eqnarray*}
    where the sum of the $m$ optimal costs of the $m$ minimizations, lower bound the quantity
     $\nijdx{\tailfun}{i}{j}{k} (x, \submodularlb)$.
     For any $p\notin \ni{i}{k}$, the $p$-th minimization has optimal cost 
    $
        \knapmu
        \left( 
            x, 
            \{
                \idx{\boundcoef}{\ij{p}{q}}
            \}_{q \in \iset{n}}, 
            a_p 
        \right)
        =
        \knapsack{
            \{
                \idx{\boundcoef}{\ij{p}{q}}
            \}_{q \in \iset{n}}, 
            x,
            a_p
        } % mu-function
    $, and similarly for $\ell \in \iset{k}$, the $i_{\ell}$-th row  has optimal cost 
    $
        \knapnu
        \left(x, 
            \{
                \idx{\boundcoef}{\ij{p}{q}}
            \}_{q \in \iset{n} \setminus \{j_\ell\}}, 
            a_{i_\ell} 
        \right)
        =
        \knapsack{
            \{
                \idx{\boundcoef}{\ij{p}{q}}
            \}_{q \in \iset{n} \setminus \{j_\ell\}}, 
            x,
            a_{i_\ell} - x
        }  % nu-function
    $, see~\eqref{eqn:packing} of main text.
    Lemmata~\ref{bounds:lem:knapsack-other-row}--~\ref{bounds:lem:knapsack-current-row} applied to the \texttt{PACKING} problems above, obtains that the minimization for the $p$-th rows is only feasible for $x \geq a_p/ n$, and has the same optimal value for $a_p \leq x \leq 1$;
    therefore the global constraint across all minimizations is obtained as $\alpha_1 = \max_{p \in \iset{m}} a_p /n$ and $\beta_1 = \max_{p \in \iset{n}} a_p$.

\end{proof}

%%%%%%%%%%%%%%%%%%%%%%%%% 
% END OF SUPP MAT
\fi

\end{document}